\date{}
\newtheorem{theorem}{Theorem}
\newtheorem{definition}{Definition}
\newtheorem{lemma}{Lemma}
\newtheorem{corollary}{Corollary}
\newtheorem{remark}{Remark}
\newtheorem{example}{Example}
\title{Characterization of Deterministic and Probabilistic Sampling Patterns for Finite Completability of Low Tensor-Train Rank Tensor}
\author{Morteza Ashraphijuo and Xiaodong Wang\thanks{The authors are with the Department of Electrical Engineering, Columbia University, NY, email: \{ashraphijuo,wangx\}@ee.columbia.edu. }}
\begin{document}
\maketitle


\begin{abstract}

In this paper, we analyze the fundamental conditions for  low-rank tensor completion given the separation or tensor-train (TT) rank, i.e., ranks of unfoldings. We exploit the algebraic structure of the TT decomposition to obtain the  deterministic necessary and sufficient  conditions on the locations of the samples to ensure finite completability. Specifically, we propose an algebraic geometric analysis on the TT manifold that can incorporate the whole rank vector simultaneously in contrast to the existing approach based on the Grassmannian manifold that can only incorporate one rank component. Our proposed technique characterizes the algebraic independence of a set of polynomials defined based on the sampling pattern and the TT decomposition, which is instrumental to obtaining the deterministic  condition on the sampling pattern for finite completability. In addition, based on the proposed analysis, assuming that the entries of the tensor are sampled independently with probability $p$, we derive  a lower bound on the sampling probability $p$, or equivalently, the number of sampled entries that ensures finite completability with high probability. Moreover, we also provide the deterministic and probabilistic conditions for unique completability.

\

\begin{IEEEkeywords}
Low-rank tensor completion, tensor-train decomposition, finite completability, unique completability, algebraic geometry, Bernstein's theorem.
\end{IEEEkeywords}

\newpage

\end{abstract}

\section{Introduction}

Most of the literature on low-rank data completion (either matrix or tensor) propose optimization-based algorithms to construct a completion that matches the given samples and rank. For example, for the two-way tensor, i.e., matrix, many algorithms have been proposed that are based on convex relaxation of rank \cite{candes,candes2,cai,ashraphijuo2016c,phase} or alternating minimization \cite{jain2013low,ge2016matrix}. Similarly, for higher dimensional data a number of tensor completion algorithms exist that are based on different convex relaxations of the tensor ranks \cite{gandy,tomioka,nuctensor,romera} or other heuristics \cite{low,low2,goldfarb,7347424,wang2016tensor,liulow2}. The low-rank tensor completion problem has various applications, including compressed sensing \cite{lim,sid,gandy}, visual data reconstruction  \cite{visual,liulow2}, seismic data processing \cite{kreimer,ely20135d,wang2016tensor}, RF fingerprinting \cite{liu2016tensor,7347424}, reconstruction of cellular data \cite{vaneetcnsm}, etc..

Existing works on optimization-based matrix or tensor completion usually make  a set of strong assumptions on the correlations of the values of either the sampled or non-sampled entries (such as coherence) in order to provide a tensor that approximately fits in the sampled tensor. In contrast, here we are interested in investigating fundamental conditions on the sampling pattern that guarantee the existence of  finite or unique number of completions. Such conditions are ``fundamental'' in the sense that they are  independent of either the optimization formulation or the optimization algorithm used to compute the completion.  The matrix version of this problem has been treated in \cite{charact} and \cite{ashraphijuo2} for single-view and multi-view  data, respectively. Also, the tensor version of this problem under the Tucker rank has been treated in \cite{ashraphijuo}. In this paper, we investigate this problem for tensors under the tensor-train (TT) rank.

There are a number of tensor decompositions available, including Tucker decomposition or higher-order singular value decomposition \cite{Tuck,SVD,Tuckermanifold}, polyadic decomposition \cite{ten,kruskal}, tubal rank decomposition \cite{kilmer2013third} and several other representations \cite{Eck,de2,papa}. TT decomposition (also known as tree-train decomposition) was proposed in the field of quantum physics about $20$ years ago \cite{beck2n,scholy}. Later it was used in the area of machine learning \cite{oseledets,ose009king,oselesor}. A comprehensive survey on TT decomposition and the manifold of tensors of fixed TT rank can be found in \cite{TT} that also includes a comparison between the TT and Tucker decompositions for a better understanding of the advantages of TT decomposition. 


In this paper, we propose a geometric analysis on the TT manifold to study the problem of low-rank tensor completion given the TT rank. We first briefly mention the differences and new challenges for the TT model in comparison with the Tucker model considered in \cite{ashraphijuo}. In Tucker decomposition which is the high-order singular value decomposition, we have a $d$-way tensor as the core which is the generalization of the basis for matrices. However, in TT decomposition we are dealing with two and three-way tensors and besides, the geometries of the Tucker and TT manifolds are totally different. Moreover, the notions of tensor multiplications in these two decompositions are different, and therefore the polynomials that can be obtained through each observed entry have completely different structures (to study algebraic independence).

Let $\mathcal{U}$ denote the sampled tensor and $\Omega$ denote the binary sampling pattern tensor that is of the same dimension and size as $\mathcal{U}$. The entries of $\Omega$ that correspond to the observed entries of $\mathcal{U}$ are equal to $1$ and the rest of the entries are set as $0$. This paper is mainly concerned with the following three problems.

{\bf Problem (i):} Given the TT rank, characterize the necessary and sufficient conditions on the sampling pattern $\Omega$, under which there exist only finitely many completions of $\mathcal{U}$.

We define a polynomial for each sampled entry such that the variables of the polynomial are the entries of the two or three-dimensional tensors in the TT decomposition. Then, we propose a geometric method on the TT manifold to obtain the maximum number algebraically independent polynomials (among all the defined polynomials for any of the sampled entries) in terms of the geometric structure of the sampling pattern $\Omega$. Finally, we show that if the maximum number algebraically independent polynomials meets a threshold, which depends on the structure of the sampling pattern $\Omega$, the sampled tensor $\mathcal{U}$ is finitely completable. We emphasize the fact that the proposed algebraic geometry analysis on the TT manifold is not a simple generalization of the existing analysis on the Grassmannian or Tucker manifold as almost every step needs to be developed anew.

{\bf Problem (ii):} Given the TT rank, characterize a sufficient conditions on the sampling pattern $\Omega$, under which there exists only one completion of $\mathcal{U}$.

We use the developed tools for solving Problem (i) and in addition to the condition for finite completability, we add more polynomials (samples) in a way such that the corresponding minimally algebraically dependent set of polynomials leads to that all involved variables can be determined uniquely.

{\bf Problem (iii):} Provide a lower bound on the total number of sampled entries such that the proposed conditions on the sampling pattern $\Omega$ for finite and unique completability are satisfied with high probability.

Assuming that the entries of $\mathcal{U}$ are sampled independently with probability $p$, we develop lower bounds on $p$ such that the deterministic conditions for Problems (i) and (ii) are met with high probability. 



The remainder of this paper is organized as follows. In Section \ref{1n}, the preliminaries and problem statement are presented. Problems (i), (ii) and (iii) are treated in Sections \ref{findetsec}, \ref{secprob} and \ref{secsuni}, respectively. Some numerical results are provided in Section \ref{secsimu}. Finally, Section \ref{seccon} concludes the paper.


\section{Background}\label{1n}
\subsection{Preliminaries and Notations}\label{notations}

In this paper, it is assumed that a $d$-way tensor $\mathcal{U} \in \mathbb{R}^{n_1  \times \cdots \times n_d}$ is sampled. For the sake of simplicity in notation, define  $N_{i} \triangleq \left( \Pi_{j=1}^{i} \  n_j \right)$ and $\bar N_{i} \triangleq \left( \Pi_{j=i+1}^{d} \  n_j \right)$. Also, for any real number $x$, define $x^+\triangleq\max\{0,x\}$.

Define the matrix $\mathbf{\widetilde U}_{(i)} \in \mathbb{R}^{N_i\times  \bar N_{i}}$ as the $i$-th unfolding of the tensor $\mathcal{U}$, such that $\mathcal{U}(\vec{x}) = \\ \mathbf{\widetilde U}_{(i)}({\widetilde  M}_{i} (x_1,\dots,x_i),{\widetilde{{M}} }_{-i} (x_{i+1},\ldots,x_d))$, where ${\widetilde M}_{i}: (x_1,\dots,x_i) \rightarrow  \{1,2,\dots, N_i\}$ and ${\widetilde{{M}}}_{-i}: (x_{i+1},\ldots,x_d) \\ \rightarrow  \{1,2,\dots, \bar N_{i} \}$ are two bijective mappings and $\mathcal{U}(\vec{x})$ represents an entry of tensor $\mathcal{U}$  with coordinate $\vec{x}=(x_1,\dots,x_d)$.

The separation or tensor-train (TT) rank of a tensor is defined as $\text{rank} (\mathcal{U})=(r_1,\ldots,r_{d-1})$ where $r_i = \text{rank}(\mathbf{\widetilde U}_{(i)})$, $i=1,\dots,d-1$. Note that $r_i \leq \max \{N_i, \bar N_{i} \}$ in general and also $r_1$ is simply the conventional matrix rank when $d=2$. The TT decomposition of a tensor $\mathcal{U}$ is given by
\begin{eqnarray}\label{TTeq1}
\mathcal{U} = \mathcal{U}^{(1)} \dots  \mathcal{U}^{(d)},
\end{eqnarray}
where the $3$-way tensors $\mathcal{U}^{(i)} \in \mathbb{R}^{r_{i-1} \times n_i \times r_{i}}$ for $i=2,\dots,d-1$ and matrices $\mathcal{U}^{(1)} \in \mathbb{R}^{n_1 \times r_1}$  and $\mathcal{U}^{(d)} \in \mathbb{R}^{r_{d-1} \times n_d}$ are the components of this decomposition and furthermore the tensor product in \eqref{TTeq1} is defined as
\begin{eqnarray}\label{TTeq2}
\mathcal{U}(\vec{x}) =  \sum_{k_1=1}^{r_1} \cdots \sum_{k_{d-1}=1}^{r_{d-1}}  \mathcal{U}^{(1)}(x_1,k_1)  \left( \prod_{i=2}^{d-1} \mathcal{U}^{(i)}(k_{i-1},x_i,k_i) \right) \mathcal{U}^{(d)}(k_{d-1},x_d).
\end{eqnarray}

Observe that the above tensor multiplication $\mathcal{C} = \mathcal{A} \mathcal{B}$ for two tensors $\mathcal{A} \in \mathbb{R}^{m_1 \times \dots \times m_i}$ and  $\mathcal{B} \in \mathbb{R}^{t_1 \times \dots \times t_j}$ is similar to the simple matrix multiplication across the $i$-th dimension of $\mathcal{A}$ and the first dimension of $\mathcal{B}$. Hence, $m_i=t_1$, $\mathbf{\widetilde A}_{(i-1)} \mathbf{\widetilde B}_{(1)} = \mathbf{\widetilde C}_{(i-1)}$ and also $\mathcal{C} \in \mathbb{R}^{m_1 \times \dots m_{i-1} \times t_2 \times \dots \times t_j}$. For notational simplicity, we denote $\mathbb{U}=(\mathcal{U}^{(1)}, \dots ,  \mathcal{U}^{(d)})$. Given the order $d$ and dimension sizes $n_1,\dots , n_d$, the space of all tensors of fixed TT rank vector $r = (r_1, \dots , r_{d-1})$ is a manifold of dimension \cite{TT}
\begin{eqnarray}\label{dimtt}
\sum_{i=1}^{d} r_{i-1}n_ir_i -\sum_{i=1}^{d-1} r_i^2,
\end{eqnarray}
where $r_0=r_d \triangleq 1$. As $\mathcal{U}^{(1)}$ and $\mathcal{U}^{(d)}$ are two-way tensors, we can also denote them by $\mathbf{U}^{(1)}$ and $\mathbf{U}^{(d)}$ in this paper.

Denote $\Omega$ as the binary sampling pattern tensor that is of the same size as $\mathcal{U}$ and $\Omega(\vec{x})=1$ if $\mathcal{U}(\vec{x})$ is observed and  $\Omega(\vec{x})=0$ otherwise. $\mathbf{X}(1:m,:)$ denotes the first $m$ rows of the matrix $\mathbf{X}$ and $\mathbf{X}^{\top}$ denotes the transpose of $\mathbf{X}$.

Let $\mathbf{U}_{(i)}$ be the $i$-th matricization of the tensor $\mathcal{U}$, i.e., the matrix $\mathbf{U}_{(i)}$ has $n_i$ rows and $\frac{N_d}{n_i}$ columns such that $\mathcal{U}(\vec{x}) = {\mathbf{U}}_{(i)}(x_i,{M}_{i} (x_1,\ldots,x_{i-1},x_{i+1},\ldots,x_d))$, where ${M}_{i}: (x_1,\ldots,x_{i-1},x_{i+1},\ldots,x_d) \rightarrow \{1,2,\dots, \frac{N_d}{n_i}\}$  is a bijective mapping. Observe that for any arbitrary tensor $\mathcal{A}$, the first matricization and the first unfolding are the same, i.e., $\mathbf{ A}_{(1)} = \mathbf{\widetilde A}_{(1)}$.   

\subsection{Problem Statement and A Motivating Example}\label{example}

We are interested in finding deterministic and probabilistic conditions on the sampling pattern tensor $\Omega$ under which there are finite completions of the sampled tensor $\mathcal{U}$ that satisfy $\text{rank}(\mathcal{U})=(r_1,r_2,\dots,r_{d-1})$. 

First we compare the following two approaches in an example to emphasize the necessity of our analysis for general order tensors: (i) analyzing each unfolding individually with the rank constraint of the corresponding unfolding, (ii) analyzing via TT decomposition that incorporates all rank components simultaneously. In particular, we will show via an example that analyzing each of the unfoldings separately is not enough to  guarantee finite completability when all rank components are given, while we show that for the same example TT decomposition ensures finite completability.

Consider a $3$-way tensor $\mathcal{U} \in \mathbb{R}^{2 \times 2 \times 2}$ with TT rank $(1,1)$. Assume that four entries of this tensor are observed: $(1,1,1),(2,1,1),(1,2,1),$ and $(1,1,2)$. Observe that the first unfolding $\mathbf{\widetilde U}_{(1)} \in \mathbb{R}^{2 \times 4}$  is also the first matricization $\mathbf{U}_{(1)}$. Moreover, the second unfolding $\mathbf{\widetilde U}_{(2)} \in \mathbb{R}^{4 \times 2}$ is the transpose of the third matricization $\mathbf{U}_{(3)}^{\top}$ since $\mathcal{U}$ is a three-way tensor. Therefore, the first and second components of the TT rank are the first and the third components of the Tucker rank, respectively.

It is shown in Section II of \cite{ashraphijuo} that having any $4$ entries of a rank-$1$ matrix, there are infinitely many completions for it. As a result, the first and second unfoldings each is infinitely many completable given only the corresponding rank constraint. Note that the analysis on Grassmannian manifold in \cite{charact} is not capable of incorporating more than one rank constraint. However, as we show next the intersection of the mentioned two infinite sets (having both of the rank constraints) is a finite set.

We take advantage of both elements of TT rank simultaneously, in order to show there exist only finitely many completions. Given the TT rank $(1,1)$, we define $\mathcal{U}^{(1)}=[x \  x^{\prime}]^{\top} \in \mathbb{R}^{2 \times 1}$, $\mathcal{U}^{(2)}(1,1,1)=y$, $\mathcal{U}^{(2)}(1,2,1)=y^{\prime}$ (where $\mathcal{U}^{(2)}\in \mathbb{R}^{1 \times 2 \times 1}$) and $\mathcal{U}^{(3)}=[z \  z^{\prime}] \in \mathbb{R}^{1 \times 2}$. Using the decomposition in  \eqref{TTeq1}, we have the followings
\begin{align}
\mathcal{U}(1,1,1) &= xyz, & \mathcal{U}(2,2,1) &= x^{\prime}y^{\prime}z, \\ \nonumber
\mathcal{U}(2,1,1) &= x^{\prime}yz,  & \mathcal{U}(2,1,2) &= x^{\prime}yz^{\prime} , \\ \nonumber
\mathcal{U}(1,2,1) &= xy^{\prime}z, & \mathcal{U}(1,2,2) &= xy^{\prime}z^{\prime}, \\ \nonumber
\mathcal{U}(1,1,2) &= xyz^{\prime},  & \mathcal{U}(2,2,2) &= x^{\prime}y^{\prime}z^{\prime}. \nonumber
\end{align}

Recall that $(1,1,1),(2,1,1),(1,2,1),$ and $(1,1,2)$ are the observed entries. Hence, the unknown entries can be determined uniquely in terms of the $4$ observed entries as
\begin{eqnarray}
\mathcal{U}(2,2,1) &=& x^{\prime}y^{\prime}z = \frac{\mathcal{U}(2,1,1)\mathcal{U}(1,2,1)}{\mathcal{U}(1,1,1)}, \\ \nonumber
\mathcal{U}(2,1,2) &=& x^{\prime}yz^{\prime} = \frac{\mathcal{U}(2,1,1)\mathcal{U}(1,1,2)}{\mathcal{U}(1,1,1)}, \\ \nonumber
\mathcal{U}(1,2,2) &=& xy^{\prime}z^{\prime} = \frac{\mathcal{U}(1,2,1)\mathcal{U}(1,1,2)}{\mathcal{U}(1,1,1)}, \\ \nonumber
\mathcal{U}(2,2,2) &=& x^{\prime}y^{\prime}z^{\prime} = \frac{\mathcal{U}(2,1,1)\mathcal{U}(1,2,1)\mathcal{U}(1,1,2)}{\mathcal{U}(1,1,1)\mathcal{U}(1,1,1)}. \nonumber
\end{eqnarray}


\section{Deterministic Conditions for Finite Completability}\label{findetsec}

This section characterizes the connection between the sampling pattern and the number of solutions of a low-rank tensor completion. In Section \ref{indepbernbef}, we define a polynomial based on each observed entry. Then, given the rank vector, we transform the problem of finite completability of $\mathcal{U}$ to the problem of including enough number of algebraically independent polynomials among the defined polynomials for the observed entries. In Section \ref{consttens}, we construct a constraint tensor based on the sampling pattern $\Omega$. This tensor is useful for analyzing the algebraic independency of a subset of polynomials among all defined polynomials. In Section \ref{algebraind}, we show the relationship between the number of algebraically independent polynomials in the mentioned set of polynomials and finite completability of the sampled tensor.

\subsection{Geometry of TT Manifold}\label{indepbernbef}
 
Here, we briefly mention some facts to highlight the fundamentals of our proposed analysis. Recall that $r_0=r_d=1$.
 
\begin{itemize}
\item {\bf Fact $1$}: As it can be seen from  \eqref{TTeq2}, any observed entry $\mathcal{U}(\vec{x})$ results in an equation that involves $r_{i-1}r_i$ entries of $\mathcal{U}^{(i)}$, $i=1,\dots,d$. Considering the entries of $\mathbb{U}$ as variables (right-hand side of  \eqref{TTeq2}), each observed entry results in a polynomial in terms of these variables.

\item {\bf Fact $2$}: As it can be seen from  \eqref{TTeq2}, for any observed entry $\mathcal{U}(\vec{x})$, the value of $x_i$ specifies the location of the $r_{i-1}r_i$ entries of $\mathcal{U}^{(i)}$ that are involved in the corresponding polynomial, $i=1,\dots,d$. In other words, the value of $x_i$ specifies the row number of the second (first) matricization of $\mathcal{U}^{(i)}$ which its $r_{i-1}r_i$ entries are involved in the corresponding polynomial, $i=2,\dots,d$ ($i=1$).


\item {\bf Fact $3$}: It can be concluded from Bernstein's theorem \cite{Bernstein} that in a system of $n$ polynomials in $n$ variables with coefficients chosen generically, the polynomials are algebraically independent with probability one, and therefore there exist only finitely many solutions. Moreover, in a system of $n$ polynomials in $n-1$ variables (or less), polynomials are algebraically dependent with probability one. 

\end{itemize}
 
Given all observed entries $\{\mathcal{U}(\vec{x}): \Omega(\vec{x}) = 1 \}$, we are interested in finding the number of possible solutions in terms of entries of $\mathbb{U}$ (infinite or finite) via investigating the algebraic independence among these polynomials.

We are interested in providing a structure on the decomposition $\mathbb{U}$ such that there is one decomposition among all possible decompositions of the sampled tensor $\mathcal{U}$ that captures the structure. Before describing such a structure on TT decomposition, we start with a similar structure for matrix decomposition.

\begin{lemma}\label{basel0}
Let $\mathbf{X}$ denote a generically chosen matrix from the manifold of $n_1 \times n_2$ matrices of rank $r$. Then, there exists a unique decomposition $\mathbf{X}= \mathbf{YZ}$ such that $\mathbf{Y} \in \mathbb{R}^{n_1 \times r}$, $\mathbf{Z} \in \mathbb{R}^{r \times n_2}$ and $\mathbf{Y}(1:r,1:r) = \mathbf{I}_r$, where $\mathbf{Y}(1:r,1:r)$ represents the submatrix of $\mathbf{Y}$ consists of the first $r$ columns and the first $r$ rows and $\mathbf{I}_r$ denotes the $r \times r$ identity matrix.
\end{lemma}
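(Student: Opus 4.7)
The plan is to prove existence by a change of basis in the column space of $\mathbf{X}$, and uniqueness by noting that any two rank-$r$ factorizations differ by an invertible $r \times r$ matrix.

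First I would start from an arbitrary rank factorization $\mathbf{X} = \mathbf{Y}_0 \mathbf{Z}_0$ with $\mathbf{Y}_0 \in \mathbb{R}^{n_1 \times r}$ and $\mathbf{Z}_0 \in \mathbb{R}^{r \times n_2}$, which exists because $\mathbf{X}$ has rank $r$. The key observation is that the first $r$ rows of $\mathbf{X}$, namely $\mathbf{X}(1:r,:) = \mathbf{Y}_0(1:r,:) \mathbf{Z}_0$, form a $r \times n_2$ matrix which, for a generic rank-$r$ matrix $\mathbf{X}$, itself has rank $r$. Since $\mathbf{Z}_0$ has $r$ rows, this forces the $r \times r$ block $\mathbf{Q} \triangleq \mathbf{Y}_0(1:r,:)$ to be invertible (the set of $\mathbf{X}$ for which $\mathbf{Q}$ is singular is a proper algebraic subvariety and hence a measure-zero subset of the rank-$r$ manifold).

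Next, defining $\mathbf{Y} \triangleq \mathbf{Y}_0 \mathbf{Q}^{-1}$ and $\mathbf{Z} \triangleq \mathbf{Q} \mathbf{Z}_0$, we immediately get $\mathbf{Y}\mathbf{Z} = \mathbf{Y}_0 \mathbf{Z}_0 = \mathbf{X}$ and $\mathbf{Y}(1:r,:) = \mathbf{Q}^{-1}\mathbf{Q} \cdot$ rearranged correctly gives $\mathbf{I}_r$, establishing existence.

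For uniqueness, suppose $\mathbf{X} = \mathbf{Y}\mathbf{Z} = \mathbf{Y}'\mathbf{Z}'$ where both factorizations satisfy the identity-block constraint. Since $\mathbf{X}$ has rank exactly $r$, the columns of $\mathbf{Y}$ and of $\mathbf{Y}'$ are both bases of the column space of $\mathbf{X}$, so there exists an invertible $\mathbf{M} \in \mathbb{R}^{r\times r}$ with $\mathbf{Y}' = \mathbf{Y}\mathbf{M}$ and correspondingly $\mathbf{Z}' = \mathbf{M}^{-1}\mathbf{Z}$. Restricting to the first $r$ rows yields
\begin{equation}
\mathbf{I}_r = \mathbf{Y}'(1:r,:) = \mathbf{Y}(1:r,:)\, \mathbf{M} = \mathbf{I}_r \, \mathbf{M} = \mathbf{M},
\end{equation}
so $\mathbf{M} = \mathbf{I}_r$, i.e., $(\mathbf{Y}',\mathbf{Z}') = (\mathbf{Y},\mathbf{Z})$.

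The only mildly delicate step is the genericity argument in the first paragraph; everything else is elementary linear algebra. I would phrase the genericity step by noting that the condition $\det \mathbf{Q} = 0$ translates, via the factorization $\mathbf{X}(1:r,:) = \mathbf{Q}\mathbf{Z}_0$, into the polynomial condition that some $r \times r$ minor of $\mathbf{X}(1:r,:)$ vanishes, which cuts out a proper Zariski-closed (hence measure-zero) subset of the rank-$r$ manifold.
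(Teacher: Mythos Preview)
Your argument is correct and essentially equivalent to the paper's, though organized differently. The paper proceeds more directly: it observes that the constraint $\mathbf{Y}(1{:}r,1{:}r)=\mathbf{I}_r$ forces $\mathbf{Z}=\mathbf{X}(1{:}r,:)$, and then solves the linear system $\mathbf{X}(r{+}1{:}n_1,:)^{\top}=\mathbf{X}(1{:}r,:)^{\top}\mathbf{Y}(r{+}1{:}n_1,:)^{\top}$ column by column, using that $\mathbf{X}(1{:}r,1{:}r)$ is generically invertible to get existence and uniqueness in one stroke. You instead start from an arbitrary factorization and use the $\mathrm{GL}(r)$ gauge freedom: invert $\mathbf{Q}=\mathbf{Y}_0(1{:}r,:)$ to normalize (existence), and then show the gauge is completely fixed (uniqueness). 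Both routes hinge on the same generic fact that $\mathbf{X}(1{:}r,:)$ has full row rank $r$; your version makes the role of the $\mathrm{GL}(r)$ ambiguity more explicit, while the paper's is slightly more self-contained since it never invokes an auxiliary factorization.

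One small wording fix in your genericity paragraph: the bad locus is where \emph{all} $r\times r$ minors of $\mathbf{X}(1{:}r,:)$ vanish (equivalently $\operatorname{rank}\mathbf{X}(1{:}r,:)<r$), not where ``some'' minor vanishes. This does not affect the argument, since that locus is still a proper Zariski-closed subset of the rank-$r$ manifold.
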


\begin{proof}
We show that there exists exactly one decomposition $\mathbf{X}= \mathbf{YZ}$ such that $\mathbf{Y}(1:r,1:r) = \mathbf{I}_r$ with probability one. Considering the first $r$ rows of $\mathbf{X}= \mathbf{YZ}$, we conclude $\mathbf{X}(1:r,:) = \mathbf{I}_r \mathbf{Z} = \mathbf{Z}$. Therefore, we need to show that there exists exactly one $\mathbf{Y}(r+1:n_1,:)$ such that $\mathbf{X}(r+1:n_1,:) = \mathbf{Y}(r+1:n_1,:) \mathbf{Z}$ or equivalently $\mathbf{X}(r+1:n_1,:)^{\top} = \mathbf{X}(1:r,:)^{\top} \mathbf{Y}(r+1:n_1,:)^{\top}$. It suffices to show that each column of $\mathbf{Y}(r+1:n_1,:)$ can be determined uniquely having $\mathbf{x} = \mathbf{X}(1:r,:)^{\top} \mathbf{y}$ where $\mathbf{x} \in \mathbb{R}^{n_2 \times 1}$ and $\mathbf{y} \in \mathbb{R}^{r \times 1}$. As $\mathbf{X}$ is a generically chosen $n_1 \times n_2$ matrix of rank $r$, we have $\text{rank}\left(\mathbf{X}(1:r,:)\right) = r$ with probability one. Hence, $\mathbf{x}(1:r) = \mathbf{X}(1:r,1:r)^{\top} \mathbf{y}$ results in $r$ independent degree-$1$ equations in terms of the $r$ variables (entries of $\mathbf{y}$), and therefore $\mathbf{y}$ has exactly one solution with probability one.
\end{proof}

\begin{remark}\label{counterexstr}
Note that the genericity assumption is necessary as we can find counter examples for Lemma \ref{basel0} in the absence of genericity assumption, e.g., it is easily verified that the following decomposition is not possible:
\begin{center}
\begin{tabular}{ |c|c|c|c| } 
 \hline
$1$ & $2$ & $1$ & $3$ \\ \hline
$1$ & $2$ & $1$ & $3$ \\ \hline
$1$ & $2$ & $2$ & $3$ \\ \hline
$1$ & $2$ & $2$ & $3$ \\ 
 \hline
\end{tabular} 
 \  \ \ $=$ \ \ \ 
\begin{tabular}{ |c|c| } 
 \hline
$1$ &  $0$ \\ \hline
$0$ & $1$ \\ \hline
$y_1$ & $y_2$ \\ \hline
$y_3$ & $y_4$  \\ 
 \hline
\end{tabular}
 \  \ \ $\times$ \ \ \ 
 \begin{tabular}{ |c|c|c|c| } 
 \hline
$x_1$ & $x_2$ & $x_3$ & $x_4$ \\ \hline
$x_5$ & $x_6$ & $x_7$ & $x_8$ \\ 
 \hline
\end{tabular}
\end{center}
\end{remark}

\begin{remark}\label{lemgenunistr}
Assume that $\mathbf{Q} \in \mathbb{R}^{r \times r}$ is an arbitrary given full rank matrix. Then, for any submatrix{\footnote{Specified by a subset of rows and a subset of columns (not necessarily consecutive).}} $\mathbf{P} \in \mathbb{R}^{r \times r}$ of $\mathbf{Y}$, Lemma \ref{basel0} also holds if we replace $\mathbf{Y}(1:r,1:r) = \mathbf{I}_r$ by $\mathbf{P}=\mathbf{Q}$ in the statement. The proof is similar to the proof of Lemma \ref{basel0} and thus it is omitted.
\end{remark}

As mentioned earlier, similar to the matrix case, we are interested in obtaining a structure on TT decomposition of a tensor such that there exists one decomposition among all possible TT decompositions of a tensor that captures the structure. Hence, we define the following structure on the decomposition in order to characterize a condition on the sampling pattern to study the algebraic independency of the above-mentioned polynomials.

\begin{definition}\label{defstrucpropTT}
Consider any $d-1$ submatrices $\mathbf{P}_{1},\dots,\mathbf{P}_{d-1}$ of $\mathbf{U}^{(1)},\mathbf{U}^{(2)}_{(2)},\mathbf{U}^{(3)}_{(2)},\dots,\mathbf{U}^{(d-1)}_{(2)}$, respectively such that (i) $\mathbf{P}_i \in \mathbb{R}^{r_i \times r_i}$, $i=1,\dots,d-1$, (ii)  the $r_i$ columns of $\mathbf{U}^{(i)}_{(2)}$ corresponding to columns of $\mathbf{P}_i$ belong to $r_i$ disjoint rows of $\mathbf{U}^{(i)}_{(3)}$, $i=2,\dots,d-1$. Then, $\mathbb{U}$ is said to have a proper structure if $\mathbf{P}_i$ is full rank, $i=1,\dots,d$. {\footnote{Since $\mathcal{U}^{(1)}$ and $\mathcal{U}^{(d)}$ are two-way tensors, i.e., matrices we also denote them by $\mathbf{U}^{(1)}$ and $\mathbf{U}^{(d)}$. Moreover, since $\mathcal{U}^{(i)}$ is a three-way tensor, $\mathbf{\widetilde U}^{(i)}_{(2)} = \mathbf{U}^{{(i)}^{\top}}_{(3)}$, $i=2,\dots,d-1.$}} 
\end{definition}

Define the following matrices:
\begin{eqnarray}\label{canonicalTT1}
\mathbf{P}_i^{\text{can}} (x_i,x_i^{\prime}) = \mathcal{U}^{(i)}(1,x_i,x_i^{\prime}) \in \mathbb{R}^{r_i \times r_i}, \ \ \ \ \ \ \ \ i = 2,\dots,d-1,
\end{eqnarray}
and
\begin{eqnarray}\label{canonicalTT2}
\mathbf{P}_1^{\text{can}} (x_1,x_1^{\prime}) = \mathcal{U}^{(1)}(x_1,x_1^{\prime}) \in \mathbb{R}^{r_1 \times r_1},
\end{eqnarray}
where $1 \leq x_i \leq r_i$ and $1 \leq x_i^{\prime} \leq r_i$. It is easy to verify that $\mathbf{P}_{1}^{\text{can}},\dots,\mathbf{P}_{d-1}^{\text{can}}$ satisfy properties (i) and (ii) in Definition \ref{defstrucpropTT}.

\begin{definition}\label{classp1}
{\bf (Canonical basis)} We call $\mathbb{U}$ a canonical decomposition if for $i =1,\dots,d$ we have $\mathbf{P}_i^{\text{can}}  = \mathbf{I}_{r_i}$, where $\mathbf{I}_{r_i}$ is the $r_i \times r_i$ identity matrix.
\end{definition}

\begin{lemma}\label{minimalpropTT}
Consider the TT decomposition in \eqref{TTeq2}. Then, $\mathbf{U}^{(1)} \in \mathbb{R}^{n_1 \times r_1}$, $\mathbf{U}^{(d)} \in \mathbb{R}^{r_{d-1} \times n_{d}}$, $\mathbf{U}^{(i)}_{(1)} \in \mathbb{R}^{ r_{i-1} \times n_i r_i}$ and $\mathbf{U}^{(i)}_{(3)} \in \mathbb{R}^{r_i \times r_{i-1} n_i}$, $i = 2,\dots,d-1$, are full rank matrices.  
\end{lemma}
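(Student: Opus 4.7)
The natural approach is to realize the $i$-th unfolding $\mathbf{\widetilde U}_{(i)}$ as a product of a left partial contraction with a right partial contraction of the TT cores, read off full-rankness of these partial contractions from $\text{rank}(\mathbf{\widetilde U}_{(i)}) = r_i$, and then peel off the rank statements for the individual cores. Concretely, for each $i=1,\ldots,d-1$ I would define
\[
\mathbf{L}_i \in \mathbb{R}^{N_i \times r_i}, \qquad \mathbf{R}_i \in \mathbb{R}^{r_i \times \bar N_i},
\]
by leaving $k_i$ free in \eqref{TTeq2} and summing over the $k_j$ with $j<i$ on the left and with $j>i$ on the right. Then \eqref{TTeq2} reads $\mathbf{\widetilde U}_{(i)} = \mathbf{L}_i \mathbf{R}_i$, and since the left-hand side has rank exactly $r_i$ while each factor has one dimension equal to $r_i$, both $\mathbf{L}_i$ and $\mathbf{R}_i$ must themselves have rank $r_i$.

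This already settles the two endpoint matrices: at $i=1$ the left partial product degenerates to $\mathbf{L}_1 = \mathbf{U}^{(1)}$, so $\mathbf{U}^{(1)}\in\mathbb{R}^{n_1\times r_1}$ has full column rank; at $i=d-1$ one has $\mathbf{R}_{d-1}=\mathbf{U}^{(d)}$, giving full row rank $r_{d-1}$ of $\mathbf{U}^{(d)}$.

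For the middle cores I would establish the two elementary recursions
\[
\mathbf{L}_i = (\mathbf{L}_{i-1}\otimes \mathbf{I}_{n_i})\,\bigl(\mathbf{U}^{(i)}_{(3)}\bigr)^{\!\top}, \qquad \mathbf{R}_{i-1} = \mathbf{U}^{(i)}_{(1)}\,(\mathbf{I}_{n_i}\otimes \mathbf{R}_i),
\]
which follow by writing out the definitions of $\mathbf{L}_i$ and $\mathbf{R}_{i-1}$ and reading off the sum over $k_{i-1}$ (resp.\ $k_i$) as a matrix product after the index $x_i$ is grouped appropriately. Since $\mathbf{L}_i$ has rank $r_i$ and factors through $(\mathbf{U}^{(i)}_{(3)})^{\!\top}\in\mathbb{R}^{r_{i-1}n_i\times r_i}$, the latter — and hence $\mathbf{U}^{(i)}_{(3)}$ — must have rank at least $r_i$, and therefore exactly $r_i$. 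Symmetrically, $\mathbf{R}_{i-1}$ having row rank $r_{i-1}$ forces $\mathbf{U}^{(i)}_{(1)}\in\mathbb{R}^{r_{i-1}\times n_i r_i}$ to have row rank $r_{i-1}$. This yields the claim for every core $i=2,\ldots,d-1$.

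The main obstacle I foresee is not conceptual but notational: one must match the Kronecker-product ordering in the two recursions to the conventions used to index $N_i = N_{i-1}n_i$ and $\bar N_{i-1}=n_i\bar N_i$ in the unfoldings and matricizations, and in particular verify that the matrix factor on the right of the $\mathbf{L}_i$ recursion is exactly $(\mathbf{U}^{(i)}_{(3)})^{\!\top}$ and not some other mode-reshape of $\mathcal{U}^{(i)}$. Once that bookkeeping is pinned down, the proof reduces to the rank-of-a-product bound applied to the two-sided factorization $\mathbf{\widetilde U}_{(i)}=\mathbf{L}_i\mathbf{R}_i$ and to the two recursions above.
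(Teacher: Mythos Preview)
Your argument is correct and takes a genuinely different route from the paper. The paper argues by contradiction, invoking the minimality of the separation rank (cited from \cite{TT}): if some $\mathbf{U}^{(i+1)}_{(1)}$ were rank-deficient, then the product $\widetilde{\mathbf{U}}^{(i)}_{(2)}\mathbf{U}^{(i+1)}_{(1)}$ of two adjacent cores would have rank strictly below $r_i$, and refactoring this product through a smaller inner dimension would yield a TT decomposition of $\mathcal{U}$ with the $i$-th rank strictly below $r_i$, contradicting minimality. Your approach instead works directly: you factor each unfolding as $\mathbf{\widetilde U}_{(i)}=\mathbf{L}_i\mathbf{R}_i$, read off $\mathrm{rank}(\mathbf{L}_i)=\mathrm{rank}(\mathbf{R}_i)=r_i$ from $\mathrm{rank}(\mathbf{\widetilde U}_{(i)})=r_i$, and then peel the individual core matricizations from the recursions for $\mathbf{L}_i$ and $\mathbf{R}_{i-1}$ via the rank-of-a-product bound. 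The upshot is that your proof is self-contained---it does not need the external minimality statement and makes explicit where each rank comes from---at the price of the Kronecker bookkeeping you already flagged; the paper's proof is shorter once minimality is granted, and its ``replace two adjacent cores'' maneuver avoids writing down the partial products $\mathbf{L}_i,\mathbf{R}_i$ altogether.
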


\begin{proof}
In general, besides the separation rank $(r_1,\dots,r_{d-1})$, we may be able to obtain a TT decomposition for other vectors $(r_1^{\prime},\dots,r_{d-1}^{\prime})$ as well. However, according to \cite{TT} among all possible TT decomposition for different values of $r_i^{\prime}$'s, $r_i^{\prime} = \text{rank}(\mathbf{\widetilde U}_{(i)}) = r_i$, $i=1,\dots,d-1$, is minimal, in the sense that  there does not exist any decomposition with $r_i^{\prime}$'s such that $r_i^{\prime} \leq r_i$ for $i=1,\dots,d-1$ and $r_i^{\prime} < r_i$ for at least one $i \in \{1,\dots,d-1\}$. By contradiction, assume that $\mathbf{U}^{(i+1)}_{(1)}$ is not full rank. Then, $\text{rank}\left( \mathbf{ \widetilde U}^{(i)}_{(2)}  \mathbf{U}^{(i+1)}_{(1)} \right) < r_i$.

Let $\mathbf X$ denote the matrix $\mathbf{ \widetilde U}^{(i)}_{(2)}  \mathbf{U}^{(i+1)}_{(1)}$. Since $\text{rank}\left( \mathbf X \right) = r_i^{\prime} < r_i $, there exists a decomposition $ \mathbf{X} = \mathbf{ \widetilde U}^{{(i)}^{\prime}}_{(2)}  \mathbf{U}^{{(i+1)}^{\prime}}_{(1)}$ such that $\mathbf{ \widetilde U}^{{(i)}^{\prime}}_{(2)} \in \mathbb{R}^{r_{i-1}n_{i} \times r_i^{\prime}}$ and also $\mathbf{U}^{{(i+1)}^{\prime}}_{(1)} \in \mathbb{R}^{ r_i^{\prime} \times n_{i+1}r_{i+1}}$. Hence, the existence of the TT decomposition with $\mathcal{  U}^{(i)}$ and $  \mathcal{U}^{(i+1)}$ replaced by $\mathcal{  U}^{{(i)}^{\prime}} $ and $ \mathcal{U}^{{(i+1)}^{\prime}}$ contradicts the above-mentioned minimum property of the separation rank. Note that for a three-way tensor, the second unfolding is the transpose of the third matricization, and therefore $\text{rank}\left( \mathbf{ \widetilde U}^{(i)}_{(2)}   \right)= \text{rank}\left( \mathbf{  U}^{(i)}_{(3)}   \right)$ and the rest of the cases can be verified similarly.
\end{proof}


\begin{lemma}\label{pattern}
Assume that $\mathbf{Q}_i \in \mathbf{R}^{r_i \times r_i}$ is an arbitrary given full rank matrix, $1 \leq i \leq d-1$. Consider a set of matrices $\mathbf{P}_{1},\dots,\mathbf{P}_{d-1}$ that satisfy properties (i) and (ii) in Definition \ref{defstrucpropTT}. Then, there exists exactly one decomposition $\mathbb{U}$ of the sampled tensor $\mathcal{U}$ such that $\mathbf{P}_i=\mathbf{Q}_i$, $i=1,\dots,d-1$.
\end{lemma}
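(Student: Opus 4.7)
The plan is to use the gauge freedom of the TT decomposition. Since $\mathcal{U}$ has minimal TT rank $(r_1,\ldots,r_{d-1})$, any TT decomposition of $\mathcal{U}$ is obtained from any other by applying invertible matrices $\mathbf{M}_i \in \mathbb{R}^{r_i \times r_i}$, $i = 1,\ldots,d-1$, via $\mathbf{U}^{(1)} \mapsto \mathbf{U}^{(1)} \mathbf{M}_1$, $\mathbf{U}^{(d)} \mapsto \mathbf{M}_{d-1}^{-1} \mathbf{U}^{(d)}$, and
\begin{eqnarray*}
\mathcal{U}^{(i)}(k_{i-1}, x_i, k_i) & \mapsto & \sum_{k'_{i-1}, k'_i} (\mathbf{M}_{i-1}^{-1})_{k_{i-1}, k'_{i-1}} \, \mathcal{U}^{(i)}(k'_{i-1}, x_i, k'_i) \, (\mathbf{M}_i)_{k'_i, k_i},
\end{eqnarray*}
for $2 \le i \le d-1$. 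The $\mathbf{M}_i$ and $\mathbf{M}_i^{-1}$ telescope inside each TT contraction, leaving $\mathcal{U}$ invariant. Fixing any initial decomposition $\mathbb{U}_0 = (\mathcal{U}_0^{(1)},\ldots,\mathcal{U}_0^{(d)})$ of $\mathcal{U}$, the lemma reduces to showing that the constraints $\mathbf{P}_i = \mathbf{Q}_i$ pin down a unique invertible tuple $(\mathbf{M}_1,\ldots,\mathbf{M}_{d-1})$.

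I would determine the $\mathbf{M}_i$ sequentially. For $i=1$, writing $\mathbf{P}_1^{(0)}$ for the submatrix of $\mathcal{U}_0^{(1)}$ at the row indices that define $\mathbf{P}_1$, the constraint becomes the matrix equation $\mathbf{P}_1^{(0)} \mathbf{M}_1 = \mathbf{Q}_1$. By generic full-rankness of $\mathbf{P}_1^{(0)}$ (Lemma \ref{basel0} and Remark \ref{lemgenunistr}), the unique solution is $\mathbf{M}_1 = (\mathbf{P}_1^{(0)})^{-1} \mathbf{Q}_1$. For $i \ge 2$, assume $\mathbf{M}_1,\ldots,\mathbf{M}_{i-1}$ have been fixed and absorbed into the current core $\mathcal{U}_*^{(i)}$, so the remaining gauge on $\mathcal{U}^{(i)}$ reduces to the right action of $\mathbf{M}_i$ along the third mode. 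Relabeling, by property (ii) of Definition \ref{defstrucpropTT}, the $r_i$ columns of $\mathbf{P}_i$ so that their $k_i$-indices are $1,\ldots,r_i$ with associated $k_{i-1}^{(\beta)}$ and selected rows $x_i^{(\alpha)}$, the constraint $\mathbf{P}_i = \mathbf{Q}_i$ decouples into $r_i$ independent $r_i \times r_i$ linear systems,
\begin{eqnarray*}
\sum_{k'_i = 1}^{r_i} \mathcal{U}_*^{(i)}(k_{i-1}^{(\beta)}, x_i^{(\alpha)}, k'_i) \, (\mathbf{M}_i)_{k'_i, \beta} & = & \mathbf{Q}_i(\alpha, \beta), \quad \alpha = 1,\ldots,r_i,
\end{eqnarray*}
one per column index $\beta$. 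The coefficient matrix is an $r_i \times r_i$ submatrix of the $k_{i-1}^{(\beta)}$-slice of $\mathcal{U}_*^{(i)}$, invertible for generic $\mathcal{U}$, so each column of $\mathbf{M}_i$ is uniquely determined.

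What remains is to check that the constructed $\mathbf{M}_i$ is itself invertible, so that the associated gauge transformation produces a bona fide TT decomposition. If $\mathbf{M}_i$ were singular, then $\tilde{\mathbf{U}}^{(i)}_{(3)} = \mathbf{M}_i^{\top} \mathbf{U}^{(i)}_{*,(3)}$ would have rank strictly less than $r_i$, contradicting Lemma \ref{minimalpropTT}. The main obstacle is this genericity step: one must argue that, after absorbing the earlier gauges into $\mathcal{U}_*^{(i)}$, the coefficient matrices and the solved $\mathbf{M}_i$ remain invertible. This can be handled by observing that gauge transformations are invertible polynomial maps (so they preserve generic full-rank properties), and by evaluating the construction at the canonical decomposition of Definition \ref{classp1} with $\mathbf{Q}_i = \mathbf{I}_{r_i}$, where $\mathbf{M}_i = \mathbf{I}_{r_i}$ trivially satisfies the equations; hence the relevant determinants are nontrivial rational functions on the TT manifold and are nonzero on a Zariski-open set of generic inputs.
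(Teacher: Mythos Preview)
Your proof is correct and follows essentially the same strategy as the paper: both proceed sequentially through $i=1,\ldots,d-1$, at each step re-normalizing the $i$-th TT core so that the designated submatrix $\mathbf{P}_i$ equals $\mathbf{Q}_i$, relying throughout on the full-rank conditions of Lemma~\ref{minimalpropTT}. The only difference is packaging: you parametrize the freedom explicitly via gauge matrices $\mathbf{M}_i$ and write out the resulting linear systems, whereas the paper phrases the identical step as an application of Lemma~\ref{basel0}/Remark~\ref{lemgenunistr} to the rank-$r_i$ product $\widetilde{\mathbf{A}}^{(i)}_{(2)}=\widetilde{\mathbf{U}}^{(i)}_{(2)}\mathbf{U}^{(i+1)}_{(1)}$ of two adjacent cores.
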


\begin{proof}
Consider an arbitrary decomposition $\mathbb{U}$ of the sampled tensor $\mathcal{U}$. Let $\mathcal{A}^{(i)} = \mathcal{U}^{(i)} \mathcal{U}^{(i+1)} \in \mathbb{R}^{r_{i-1} \times n_i \times n_{i+1} \times r_{i+1}}$, $i=1,\dots,d-1$, where the above multiplication is the same tensor multiplication in TT decomposition \eqref{TTeq1}. Note that for a three-way tensor, the second unfolding is the transpose of the third matricization, and therefore their ranks are the same. According to Lemma \ref{minimalpropTT}, $\text{rank}\left(\mathbf{U}^{(1)} \right)=\text{rank}\left(\mathbf{U}^{(2)}_{(1)}\right)=r_1$, $\text{rank}\left(\widetilde{\mathbf{U}}^{(2)}_{(2)} \right)=\text{rank}\left(\mathbf{U}^{(3)}_{(1)}\right)=r_2$, $\dots$, and $\text{rank}\left(\widetilde{\mathbf{U}}^{(d-1)}_{(2)} \right)=\text{rank}\left(\mathbf{U}^{(d)}\right)=r_{d-1}$. 

As a result, we have $\text{rank}\left(\mathbf{U}^{(1)} \mathbf{U}^{(2)}_{(1)}\right)=r_1$, $\text{rank}\left(\widetilde{\mathbf{U}}^{(2)}_{(2)} \mathbf{U}^{(3)}_{(1)}\right)=r_2, \dots,$ $\text{rank}\left(\widetilde{\mathbf{U}}^{(d-1)}_{(2)} \mathbf{U}^{(d)}\right)=r_{d-1}$. Observe that $\widetilde{\mathbf{U}}^{(i)}_{(2)} \mathbf{U}^{(i+1)}_{(1)} = \widetilde{\mathbf{A}}^{(i)}_{(2)}$, and therefore $\text{rank}\left( \widetilde{\mathbf{A}}^{(i)}_{(2)} \right) =  r_{i}$ for $i=2,\dots,d-2$ and similarly $\text{rank}\left( \widetilde{\mathbf{A}}^{(1)}_{(1)} \right) =  r_{1}$ and $\text{rank}\left( \widetilde{\mathbf{A}}^{(d)}_{(2)} \right) =  r_{d}$. According to Lemma \ref{basel0} and Remark \ref{lemgenunistr}, for an $n_1 \times n_2$ matrix $\mathbf{X}$ of rank $r$ there exists a unique decomposition $\mathbf{X}=\mathbf{X}_1 \mathbf{X}_2$ such that $\mathbf{X}_1 \in \mathbb{R}^{n_1 \times r}$ and $\mathbf{X}_2 \in \mathbb{R}^{r \times n_2}$ and an arbitrary $r \times r$ submatrix of $\mathbf{X}_1$ is equal to the given $r \times r$ full rank matrix.

We claim that there exist $(\mathcal{V}^{(i)},\mathcal{V}^{(i+1)})$ such that $\mathcal{V}^{(i)} \mathcal{V}^{(i+1)} = \mathcal{A}^{(i)}$ and the corresponding submatrix $\mathbf{P}_i$ is equal to the given full rank matrix $\mathbf{Q}_i$, $i=1,\dots,d-1$. We repeat this procedure for each $i=1,\dots,d-1$ and update two components of TT decomposition $(\mathcal{V}^{(i)},\mathcal{V}^{(i+1)})$ at iteration $i$ and at the end, we obtain a TT decomposition that has the mentioned structure in the statement of Lemma \ref{pattern}. In the following we show the existence of such $(\mathcal{V}^{(i)},\mathcal{V}^{(i+1)})$ at each iteration. At step one, we find $(\mathcal{V}^{(1)},\mathcal{V}^{(2)})$ such that $\mathcal{V}^{(1)} \mathcal{V}^{(2)} = \mathcal{A}^{(1)}$ and the corresponding submatrix $\mathbf{P}_1$ of $\mathcal{V}^{(1)}$ is equal to $\mathbf{Q}_1$. We update the decomposition with $\mathcal{U}^{(1)}$ and $\mathcal{U}^{(2)}$ replaced by $\mathcal{V}^{(1)}$ and $\mathcal{V}^{(2)}$, and therefore we obtain a new decomposition $\mathbb{U}^1$ of the sampled tensor $\mathcal{U}$ such that the submatrix of $\mathcal{V}^{(1)}$ corresponding to $\mathbf{P}_1$ is equal to $\mathbf{Q}_1$. Then, in step $2$ we consider $\mathcal{A}^{(2)}$ and similarly we update the second and third term of the decomposition obtained in the last step. Eventually after $d-1$ steps, we obtain a decomposition of the sampled tensor $\mathcal{U}$ that $\mathbf{P}_i = \mathbf{Q}_i$, $i=1,\dots,d-1$.To show the uniqueness of such decomposition, we show that each component of the TT decomposition can be determined uniquely. Remark \ref{lemgenunistr} for rank component $r_1$ results that $\mathcal{U}^{(1)}$ and the multiplication of the rest of the components of the TT decomposition can be determined uniquely. By repeating this procedure for other rank components the uniqueness of such decomposition can be verified by showing the uniqueness of the components one by one.
\end{proof}

Lemma \ref{pattern} leads to the fact that given $\mathcal{U}^{(d)}$, the dimension of all tuples $(\mathcal{U}^{(1)},\dots ,\mathcal{U}^{(d-1)})$ that satisfy TT decomposition is $  \sum_{i=1}^{d-1} r_{i-1}n_ir_i -\sum_{i=1}^{d-1} r_i^2 $, as $  \sum_{i=1}^{d-1} r_{i-1}n_ir_i$ is the total number of entries of $(\mathcal{U}^{(1)} , \dots , \mathcal{U}^{(d-1)})$ and $\sum_{i=1}^{d-1} r_i^2 $ is the total number of the entries of the pattern or structure that is equivalent to the uniqueness of TT decomposition. We make the following assumption which will be referred to, when it is needed. 

{\underline{\em Assumption $1$}}: {\it Each column of $\widetilde{\mathbf{U}}_{(d-1)}$ includes at least $r_{d-1}$ observed entries.}

\begin{remark}\label{remass}
Note that given $\mathcal{U}^{(1)},\dots ,\mathcal{U}^{(d-1)}$, polynomials in \eqref{TTeq2} are degree-$1$ polynomials in terms of the entries of $\mathcal{U}^{(d)}$. Hence, Assumption $1$ results in $n_dr_{d-1}$ degree-$1$ polynomials in terms of the entries of $\mathcal{U}^{(d)}$. As a result, the entries of $\mathcal{U}^{(d)}$ can be determined uniquely in terms of the entries of $(\mathcal{U}^{(1)} , \dots , \mathcal{U}^{(d-1)})$.
\end{remark}

\begin{definition}\label{setpolTT}
Let $\mathcal{P}(\Omega)$ denote the set of polynomials corresponding to the observed entries as in \eqref{TTeq2} excluding the $n_dr_{d-1}$ observed entries of Assumption $1$. Note that since $\mathcal{U}^{(d)}$ is already solved in terms of $(\mathcal{U}^{(1)} , \dots , \mathcal{U}^{(d-1)})$, each polynomial in $\mathcal{P}(\Omega)$ is in terms of elements of $(\mathcal{U}^{(1)} , \dots , \mathcal{U}^{(d-1)})$.
\end{definition} 

The following lemma provides the necessary and sufficient condition on $\mathcal{P}(\Omega)$ for finite completability of the sampled tensor $\mathcal{U}$.

\begin{lemma}\label{finminpolned}
Suppose that Assumption $1$ holds. For almost every $\mathcal{U}$, there exist only finitely many completions of $\mathcal{U}$ if and only if there exist $  \sum_{i=1}^{d-1} r_{i-1}n_ir_i -\sum_{i=1}^{d-1} r_i^2 $ algebraically independent polynomials in $\mathcal{P}(\Omega)$.
\end{lemma}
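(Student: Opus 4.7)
The plan is to reduce the completability question to a system of polynomial equations in a fixed number of free variables and then invoke Bernstein's theorem (Fact 3). First I would use Lemma \ref{pattern} to pick a canonical decomposition: for the sampled tensor $\mathcal{U}$, fix the pattern matrices $\mathbf{P}_1,\dots,\mathbf{P}_{d-1}$ to the identity (i.e., work with the canonical basis of Definition \ref{classp1}). By Lemma \ref{pattern}, any other tuple $(\mathcal{U}^{(1)},\dots,\mathcal{U}^{(d)})$ completing $\mathcal{U}$ can be brought to a unique representative satisfying this structure, so counting completions of $\mathcal{U}$ is the same as counting tuples $(\mathcal{U}^{(1)},\dots,\mathcal{U}^{(d)})$ that are consistent with the observed entries and with the canonical pattern.

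Next I would count the remaining degrees of freedom. The total number of entries in $(\mathcal{U}^{(1)},\dots,\mathcal{U}^{(d-1)})$ is $\sum_{i=1}^{d-1} r_{i-1}n_ir_i$, and fixing the canonical pattern imposes $\sum_{i=1}^{d-1} r_i^2$ independent scalar constraints (because each $\mathbf{P}_i^{\text{can}}$ is an $r_i\times r_i$ identity block placed in disjoint rows of the appropriate matricization, per property (ii) of Definition \ref{defstrucpropTT}). Hence exactly
\begin{equation*}
N \triangleq \sum_{i=1}^{d-1} r_{i-1}n_ir_i - \sum_{i=1}^{d-1} r_i^2
\end{equation*}
free variables remain among the entries of $(\mathcal{U}^{(1)},\dots,\mathcal{U}^{(d-1)})$. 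By Assumption $1$ and Remark \ref{remass}, once these variables are determined, $\mathcal{U}^{(d)}$ is uniquely solved in terms of them via the degree-$1$ equations associated to the samples used by Assumption $1$. Therefore the set of completions of $\mathcal{U}$ is in bijection with the solution set (in the $N$ canonical variables) of the polynomial system $\mathcal{P}(\Omega)$.

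Now I would invoke Fact $3$ (Bernstein's theorem). The coefficients of the polynomials in $\mathcal{P}(\Omega)$ are generic because $\mathcal{U}$ (and hence the canonical decomposition) is generic. For the \emph{if} direction, if $\mathcal{P}(\Omega)$ contains $N$ algebraically independent polynomials, then the common zero set of these $N$ polynomials is zero-dimensional with probability one, and the remaining polynomials in $\mathcal{P}(\Omega)$ only cut this finite set further, giving finitely many completions. For the \emph{only if} direction, if the maximum number of algebraically independent polynomials in $\mathcal{P}(\Omega)$ is $N' < N$, then every polynomial in $\mathcal{P}(\Omega)$ lies in the algebraic closure of a set of $N'$ polynomials; the common zero set of those $N'$ polynomials in $N$ variables has positive dimension almost surely by Bernstein's theorem, so the sampled tensor admits infinitely many completions.

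The step I expect to be the main obstacle is the bookkeeping in the second paragraph: justifying that the pattern constraints from Definition \ref{defstrucpropTT} genuinely reduce the variable count by exactly $\sum_{i=1}^{d-1} r_i^2$ (no more, no less), and confirming that the polynomials in $\mathcal{P}(\Omega)$, after substituting the canonical pattern and eliminating $\mathcal{U}^{(d)}$ via Assumption $1$, are indeed polynomials in exactly these $N$ variables with generic coefficients. The rest is a clean application of Lemma \ref{pattern}, Remark \ref{remass}, and Fact $3$.
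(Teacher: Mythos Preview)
Your proposal is correct and is precisely the argument the paper has in mind: the paper omits the proof and simply refers to the analogous Lemma~2 in \cite{ashraphijuo} for the Tucker case, noting only that the dimension count changes to $\sum_{i=1}^{d-1} r_{i-1}n_ir_i-\sum_{i=1}^{d-1} r_i^2$. Your reduction via Lemma~\ref{pattern} and the canonical basis, the elimination of $\mathcal{U}^{(d)}$ via Assumption~1 and Remark~\ref{remass}, and the two-sided application of Fact~3 constitute exactly that deferred argument, and the bookkeeping concern you flag is indeed the only nontrivial point (handled by property~(ii) in Definition~\ref{defstrucpropTT}, which ensures the $r_i^2$ fixed entries sit in disjoint rows and hence impose independent constraints).
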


\begin{proof}
The proof is omitted due to the similarity to the proof of Lemma $2$ in \cite{ashraphijuo}. The only minor difference is that here the dimension is $  \sum_{i=1}^{d-1} r_{i-1}n_ir_i -\sum_{i=1}^{d-1} r_i^2 $ instead of $  \left(\Pi_{i=1}^{j} n_i\right) \left(  \Pi_{i=j+1}^{d} r_i\right)  - \left( \sum_{i=j+1}^{d}   r_i^{2} \right) $ which is the dimension of the core for Tucker decomposition.
\end{proof}

\subsection{Constraint Tensor}\label{consttens}

In the following, we provide a procedure to construct a binary tensor ${\breve{\Omega}}$ based on $\Omega$ such that $\mathcal{P}({\breve{\Omega}}) = \mathcal{P}(\Omega) $ and each polynomial can be represented by one $d$-way subtensor of ${\breve{\Omega}}$ that belongs to $\mathbb{R}^{n_1 \times \dots \times n_{d-1} \times 1}$. Using ${\breve{\Omega}}$, we are able to recognize the observed entries that have been used to obtain the $\mathcal{U}^{(d)}$ in terms of the entries of $\mathcal{U}^{(1)},\dots ,\mathcal{U}^{(d-1)}$, and we can easily verify if two polynomials in $\mathcal{P}(\Omega)$ are in terms of the same set of variables. Then, in Section \ref{algebraind}, we characterize the relationship between the maximum number of algebraically independent polynomials in $\mathcal{P}({\breve{\Omega}})$ and ${\breve{\Omega}}$.

For each subtensor $\mathcal{Y}$ of the sampled tensor $\mathcal{U}$, let $N_{\Omega}(\mathcal{Y})$ denote the number of sampled entries in $\mathcal{Y}$. Specifically, consider any subtensor $\mathcal{Y} \in \mathbb{R}^{n_1 \times n_2 \times \cdots \times n_{d-1} \times 1 }$ of the tensor $\mathcal{U}$. Then, $\mathcal{Y}$ contributes $N_{\Omega}(\mathcal{Y}) - r_{d-1}$ polynomial equations in terms of the entries of $\mathcal{U}^{(1)},\dots ,\mathcal{U}^{(d-1)}$ among all $N_{\Omega}(\mathcal{U}) -r_{d-1}n_d$ polynomials in $\mathcal{P}(\Omega)$.

The sampled tensor $\mathcal{U}$ includes $n_d$ subtensors that belong to  $\mathbb{R}^{n_1 \times n_2 \times \cdots \times n_{d-1} \times 1 }$ and let $\mathcal{Y}_i$ for $1 \leq i \leq n_d$ denote these $n_d$ subtensors. Define a binary valued tensor $\mathcal{\breve{Y}}_{i} \in \mathbb{R}^{n_1 \times n_2 \times \cdots \times n_{d-1} \times  k_i}$, where $k_i= N_{\Omega}(\mathcal{Y}_{i}) - r_{d-1}$ and its entries are described as the following. We can look at $\mathcal{\breve{Y}}_{i}$ as $k_i$ tensors each belongs to $\mathbb{R}^{n_1 \times n_2 \times \cdots \times n_{d-1} \times 1 }$. For each of the mentioned $k_i$ tensors in $\mathcal{\breve{Y}}_{i}$ we set the entries corresponding to the $r_{d-1}$ observed entries that are used to obtain $\mathcal{U}^{(d)}$  equal to $1$. For each of the other $k_i$ observed entries, we pick one of the $k_i$ tensors of $\mathcal{\breve{Y}}_{i}$ and set its corresponding entry (the same location as that specific observed entry) equal to $1$ and set the rest of the entries equal to $0$. In the case that $k_i=0$ we simply ignore $\mathcal{\breve{Y}}_{i}$, i.e., $\mathcal{\breve{Y}}_{i} = \emptyset$

By putting together all $n_d$ tensors in dimension $d$, we construct a binary valued tensor ${\breve{\Omega}} \in \mathbb{R}^{n_1 \times n_2 \times \cdots \times n_{d-1} \times K}$, where $K = \sum_{i=1}^{n_d} k_i = N_{\Omega}(\mathcal{U}) - r_{d-1}n_d$ and call it the {\bf constraint tensor}. Observe that each subtensor of the constraint tensor that belongs to $\mathbb{R}^{n_1 \times n_2 \times \cdots \times n_{d-1} \times 1 }$, i.e., each column of the $(d-1)$-th unfolding of ${\breve{\Omega}}$, includes exactly $r_{d-1}+1$ nonzero entries. The following example shows this procedure.

\begin{example}
{\rm Consider an example in which $d=3$ and $r_2=2$ and $\mathcal{U} \in \mathbb{R}^{3 \times 3 \times 3}$. Assume that $\Omega(x,y,z)=1$ if $(x,y,z) \in \mathcal{S}$ and $\Omega(x,y,z)=0$ otherwise, where 
\begin{eqnarray}
\mathcal{S} = \{(1,1,1),(1,2,1),(2,3,1), (3,3,1),(1,1,2),(2,1,2),(3,2,2),(1,3,3),(3,2,3)\}, \nonumber
\end{eqnarray}
represents the set of observed entries. Hence, observed entries $(1,1,1),(1,2,1),(2,3,1), (3,3,1)$ belong to $\mathcal{Y}_{1}$, observed entries $(1,1,2),(2,1,2),(3,2,2)$ belong to $\mathcal{Y}_{2}$, and observed entries $(1,3,3),(3,2,3)$ belong to $\mathcal{Y}_{3}$. As a result, $k_1 = 4-2 =2$, $k_2 =3-2=1$, and $k_3 =2-2=0$.

Also, assume that the entries that we use to obtain $\mathcal{U}^{(3)}$ given $\mathcal{U}^{(1)},\mathcal{U}^{(2)}$ are $(2,3,1)$, $(3,3,1)$, $(1,1,2)$, $(2,1,2)$, $(1,3,3)$ and $(3,2,3)$. Hence, $\mathcal{\breve{Y}}_{1} \in \mathbb{R}^{3 \times 3 \times 2 }$, $\mathcal{\breve{Y}}_{2} \in \mathbb{R}^{3 \times 3 \times 1 }$, and $\mathcal{\breve{Y}}_{3} = \emptyset$, and therefore the constraint tensor ${\breve{\Omega}}$ belongs to $\mathbb{R}^{3 \times 3 \times 3}$.

Note that $\mathcal{\breve{Y}}_{1}(2,3,1) = \mathcal{\breve{Y}}_{1}(2,3,2) = \mathcal{\breve{Y}}_{1}(3,3,1) = \mathcal{\breve{Y}}_{1}(3,3,2) =1$ (correspond to entries of $\mathcal{Y}_1$ that have been used to obtain $\mathcal{U}^{(3)}$), and also for the two other observed entries we have $\mathcal{\breve{Y}}_{1}(1,1,1) =1 $ (correspond to $\mathcal{U}(1,1,1)$) and $\mathcal{\breve{Y}}_{1}(1,2,2)=1$ (correspond to $\mathcal{U}(1,2,1)$) and the rest of the entries of $\mathcal{\breve{Y}}_{1}$ are equal to zero. Similarly, $\mathcal{\breve{Y}}_{2}(1,1,1) = \mathcal{\breve{Y}}_{2}(2,1,1) = \mathcal{\breve{Y}}_{2}(3,2,1)=1$ and the rest of the entries of $\mathcal{\breve{Y}}_{2}$ are equal to zero.

Then, ${\breve{\Omega}}(x,y,z)=1$ if $(x,y,z) \in \mathcal{S}^{\prime}$ and ${\breve{\Omega}}(x,y,z)=0$ otherwise, where 
\begin{eqnarray}
\mathcal{\breve{S}} = \{(1,1,1),(1,2,2),(2,3,1),(2,3,2),(3,3,1),(3,3,2),(1,1,3),(2,1,3),(3,2,3)\}. \nonumber
\end{eqnarray}  } {  \hfill  \qedsymbol}
\end{example}

Note that each subtensor of ${\breve{\Omega}}$ that belongs to $\mathbb{R}^{n_1 \times \dots \times n_{d-1} \times 1}$ represents one of the polynomials in $\mathcal{P}(\Omega)$ besides showing the polynomials that have been used to obtain $\mathcal{U}^{(d)}$. More specifically, consider a subtensor of ${\breve{\Omega}}$ that belongs to $\mathbb{R}^{n_1 \times \dots \times n_{d-1} \times 1}$ with $r_{d-1}+1$ nonzero entries. Observe that exactly $r_{d-1}$ of them correspond to the observed entries that have been used to obtain $\mathcal{U}^{(d)}$. Hence, this subtensor represents a polynomial after replacing entries of $\mathcal{U}^{(d)}$ by the expressions in terms of entries of $\mathcal{U}^{(1)},\dots ,\mathcal{U}^{(d-1)}$, i.e., a polynomial in $\mathcal{P}(\Omega)$.

\subsection{Algebraic Independence}\label{algebraind}

In Lemma \ref{finminpolned}, we obtained the required number of algebraically independent polynomials in $\mathcal{P}(\Omega)$ for finite completability, and therefore we can certify finite completability based on the maximum number of algebraically independent polynomials in $\mathcal{P}(\Omega)=\mathcal{P}({\breve{\Omega}})$. In this subsection, a sampling pattern on the constraint tensor is proposed to obtain the maximum number of algebraically independent polynomials in $\mathcal{P}({\breve{\Omega}})$ based on the structure of the nonzero entries of ${\breve{\Omega}}$.

\begin{definition}
Let ${\breve{\Omega}}^{\prime} \in \mathbb{R}^{n_1 \times n_2 \times \cdots \times n_{d-1} \times t}$ be a subtensor of the constraint tensor ${\breve{\Omega}}$. Let $m_i({\breve{\Omega}}^{\prime})$ denote the number of nonzero rows of ${\mathbf {\breve{\Omega}}}^{\prime}_{(i)}$. Also, let $\mathcal{P}({\breve{\Omega}}^{\prime})$ denote the set of polynomials that correspond to nonzero entries of ${\breve{\Omega}}^{\prime}$.
\end{definition}

Recall Facts $1$ and $2$ regarding the number of involved entries of components of the TT decomposition in a set of polynomials. However, according to Lemma \ref{pattern} some of the entries of $\mathcal{U}^{(i)}$'s are known, i.e., $(\mathbf{P}_{1},\dots,\mathbf{P}_{d-1})$ that satisfy properties (i) and (ii) in Definition \ref{defstrucpropTT}. Therefore, in order to find the number of variables (unknown entries of $\mathcal{U}^{(i)}$'s) in a set of polynomials, we should subtract the number of known entries in the corresponding pattern from the total number of involved entries.

For any subtensor ${\breve{\Omega}}^{\prime} \in \mathbb{R}^{n_1 \times n_2 \times \cdots \times n_{d-1} \times t}$ of the constraint tensor, the next lemma gives an upper bound on the number of algebraically independent polynomials in the set $\mathcal{P}({\breve{\Omega}}^{\prime})$. Recall that $\mathcal{P}({\breve{\Omega}}^{\prime})$ includes exactly $t$ polynomials.

\begin{lemma}\label{uppboundindppoly}
Suppose that Assumption $1$ holds. For any subtensor ${\breve{\Omega}}^{\prime} \in \mathbb{R}^{n_1 \times n_2 \times \cdots \times n_{d-1} \times t}$ of the constraint tensor, the maximum number of algebraically independent polynomials in $\mathcal{P}({\breve{\Omega}}^{\prime})$ is upper bounded
by
\begin{eqnarray}\label{numbalginprw}
\sum_{i=1}^{d-1} \left(r_{i-1}r_im_i({\breve{\Omega}}^{\prime}) - r_i^2\right)^{+}.
\end{eqnarray}
\end{lemma}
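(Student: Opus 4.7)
The plan is to invoke Bernstein's theorem (Fact~$3$) and upper bound the maximum number of algebraically independent polynomials in $\mathcal{P}(\breve{\Omega}')$ by the total number of free variables on which those polynomials depend. The count is organized component-by-component in the TT decomposition, with the pattern fixed by Lemma~\ref{pattern} removing $r_i^2$ entries per component from the variable set.

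First I would use Facts~$1$ and~$2$ to identify the variables touched by $\mathcal{P}(\breve{\Omega}')$: for each $i \in \{1,\dots,d-1\}$, the polynomial attached to an observed entry at coordinate $\vec{x}$ involves exactly the $r_{i-1}r_i$ entries of $\mathbf{U}^{(i)}_{(2)}$ lying in row $x_i$ (with $\mathbf{U}^{(1)}$ in place of $\mathbf{U}^{(1)}_{(2)}$ when $i=1$), and by definition of $m_i(\breve{\Omega}')$ the nonzero entries of $\breve{\Omega}'$ realize exactly $m_i(\breve{\Omega}')$ distinct values of $x_i$. Hence the polynomials in $\mathcal{P}(\breve{\Omega}')$ jointly touch $r_{i-1}r_im_i(\breve{\Omega}')$ entries of $\mathcal{U}^{(i)}$. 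Next I would exploit Lemma~\ref{pattern} to fix a TT decomposition in which the $r_i^2$ entries occupied by the pattern matrix $\mathbf{P}_i$ are prescribed constants; since the $r_i$ rows of $\mathbf{U}^{(i)}_{(2)}$ carrying $\mathbf{P}_i$ can be placed freely subject to the disjointness condition in Definition~\ref{defstrucpropTT}, I would position them inside the $m_i(\breve{\Omega}')$ visited rows whenever $m_i(\breve{\Omega}') \geq r_i$, thereby removing exactly $r_i^2$ of the involved entries and leaving $r_{i-1}r_im_i(\breve{\Omega}') - r_i^2$ free variables contributed by $\mathcal{U}^{(i)}$.

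The hard part will be the regime $m_i(\breve{\Omega}') < r_i$, in which at most $m_i(\breve{\Omega}')$ pattern rows can sit inside the visited rows and a bare variable count returns $(r_{i-1}-1)r_im_i(\breve{\Omega}')$, potentially exceeding the claimed $(r_{i-1}r_im_i(\breve{\Omega}') - r_i^2)^{+}=0$. Closing this gap is the main obstacle and requires a gauge argument: in this regime the visited rows of $\mathbf{U}^{(i)}_{(2)}$ span a column space of dimension at most $m_i(\breve{\Omega}')<r_i$, so any deformation of the residual entries of $\mathcal{U}^{(i)}$ appearing in $\mathcal{P}(\breve{\Omega}')$ can be absorbed into the neighboring factors $\mathcal{U}^{(i-1)}$ and $\mathcal{U}^{(i+1)}$ via the $r_i\times r_i$ invertible change of basis built into the TT product, leaving all polynomial values in $\mathcal{P}(\breve{\Omega}')$ unchanged. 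Consequently these residual entries contribute no new algebraic independence and the per-component count collapses to $0$, matching the truncation in the formula. Summing the per-component counts over $i$ and invoking Fact~$3$ then yields the bound $\sum_{i=1}^{d-1}(r_{i-1}r_im_i(\breve{\Omega}')-r_i^2)^{+}$.
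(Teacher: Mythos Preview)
Your approach mirrors the paper's: use Facts~$1$ and~$2$ to count the $r_{i-1}r_im_i(\breve{\Omega}')$ entries of $\mathcal{U}^{(i)}$ touched by the polynomials, subtract the known pattern entries, and invoke Fact~$3$. In the regime $m_i(\breve{\Omega}')\ge r_i$ your argument and the paper's coincide exactly: both place the $r_i$ rows of $\mathbf{P}_i$ inside the visited rows of $\mathbf{U}^{(i)}_{(2)}$ and remove $r_i^2$ entries.

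The gap is in your gauge argument for the regime $m_i(\breve{\Omega}')<r_i$. Once you have invoked Lemma~\ref{pattern} to pin down $\mathbf{P}_k=\mathbf{Q}_k$ for every $k$, the TT decomposition is \emph{unique}, so there is no residual gauge freedom: any ``absorption'' of a deformation of $\mathcal{U}^{(i)}$ into $\mathcal{U}^{(i\pm1)}$ via an $r_i\times r_i$ change of basis would alter entries of those neighboring factors, in particular the entries sitting in their own fixed patterns $\mathbf{P}_{i\pm1}$, which is forbidden. Your argument therefore mixes two incompatible viewpoints---it first freezes the gauge via Lemma~\ref{pattern} and then tries to spend gauge freedom that is no longer available. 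The paper does not attempt a gauge argument here; it instead asserts directly that among all admissible pattern placements one can arrange $\min\{r_i^2,\,r_{i-1}r_im_i(\breve{\Omega}')\}$ known entries of $\mathcal{U}^{(i)}$ to lie inside the involved set, from which the $(\cdot)^+$ formula follows by subtraction. To match the paper you should drop the gauge absorption step and reproduce this $\min$ claim; if you want to justify the truncation to zero rigorously, you would need a different mechanism (for instance, bounding the transcendence degree via the rank of the restriction of $\mathcal{U}$ to the visited coordinate slab) rather than invoking gauge moves after the decomposition has already been made unique.
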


\begin{proof}
Recall Fact $1$ which states that any of the $t$ polynomials in $\mathcal{P}({\breve{\Omega}}^{\prime})$ involves exactly $r_{i-1}r_i$ entries of $\mathcal{U}^{(i)}$, $i=1,2,\dots,d$. Moreover, we use Fact $2$ in order to find the number of entries of tuple $(\mathcal{U}^{(1)},\dots ,\mathcal{U}^{(d-1)})$ that are involved in at least one of the polynomials in $\mathcal{P}({\breve{\Omega}}^{\prime})$. Note that according to Fact $2$, if an entry $(x_1,\dots,x_d)$ is observed such that $x_i=l$, then all $r_{i-1}r_i$ entries of the $l$-th row of the second (first) matricization of $\mathcal{U}^{(i)}$ are involved in the polynomial corresponding to this observed entry, $i=2,\dots,d-1$ ($i=1$). Hence, it is easily verified that the total number of involved entries of the tuple $(\mathcal{U}^{(1)},\dots ,\mathcal{U}^{(d-1)})$ in the $t$ polynomials in $\mathcal{P}({\breve{\Omega}}^{\prime})$ is $\sum_{i=1}^{d-1} r_{i-1}r_im_i({\breve{\Omega}}^{\prime}) $.

On the other hand, among the $\sum_{i=1}^{d-1} r_{i-1}r_im_i({\breve{\Omega}}^{\prime}) $ known entries corresponding to $(\mathbf{P}_{1},\dots,\mathbf{P}_{d-1})$ in TT decomposition, some of them are involved in polynomials of $\mathcal{P}({\breve{\Omega}}^{\prime})$. Recall that $(\mathbf{P}_{1},\dots,\mathbf{P}_{d-1})$ satisfy properties (i) and (ii) in Definition \ref{defstrucpropTT}. Among all TT decompositions, consider the one that has the maximum number of known entries that are involved in the polynomials in $\mathcal{P}({\breve{\Omega}}^{\prime})$. For $\mathcal{U}^{(i)}$, this number (the maximum number of known entries) is $\min\{r_i^2,r_{i-1}r_im_i({\breve{\Omega}}^{\prime})\}$. Hence, the number of variables that are involved in the set of polynomials $\mathcal{P}({\breve{\Omega}}^{\prime})$ is $\sum_{i=1}^{d-1} \left(r_{i-1}r_im_i({\breve{\Omega}}^{\prime}) - r_i^2\right)^{+}$. The proof is complete since using Fact $3$, the number of algebraically independent polynomials in a subset of polynomials of $\mathcal{P}({\breve{\Omega}}^{\prime})$ is at most equal to the total number of variables that are involved in the corresponding polynomials.
\end{proof}

We are interested in obtaining the maximum number of algebraically independent polynomials in $\mathcal{P}({\breve{\Omega}}^{\prime})$ as Lemma \ref{uppboundindppoly} only provides an upper bound. A subset of polynomials $\mathcal{P}({\breve{\Omega}}^{\prime})$ is minimally algebraically dependent if the polynomials in $\mathcal{P}({\breve{\Omega}}^{\prime})$ are algebraically dependent but polynomials in every of its proper subset are algebraically independent. The next lemma which is Lemma $3$ in \cite{ashraphijuo} will be used to determine if the polynomials in the set $\mathcal{P}({\breve{\Omega}}^{\prime})$ are algebraically dependent.

\begin{lemma}\label{mindepnumvar}
Suppose that Assumption $1$ holds. Suppose that ${\breve{\Omega}}^{\prime} \in \mathbb{R}^{n_1 \times n_2 \times \cdots \times n_{d-1} \times t}$ is a subtensor of the constraint tensor such that $\mathcal{P}({\breve{\Omega}}^{\prime})$ is minimally algebraically dependent. Then, for almost every $\mathcal{U}$, the number of variables that are involved in the set of polynomials $\mathcal{P}({\breve{\Omega}}^{\prime})$ is $t-1$.
\end{lemma}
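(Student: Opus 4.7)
The plan is to bound the number $n$ of variables involved in $\mathcal{P}({\breve{\Omega}}^{\prime})$ from both sides, $t-1 \leq n$ and $n \leq t-1$, which together give the claim. Throughout, the ``variables'' are the unknown entries of $(\mathcal{U}^{(1)},\dots,\mathcal{U}^{(d-1)})$ that remain after removing the $\sum r_i^2$ entries fixed by the pattern $(\mathbf{P}_1,\dots,\mathbf{P}_{d-1})$ of Definition \ref{defstrucpropTT}, exactly as in the counting used to prove Lemma \ref{uppboundindppoly}.

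For the lower bound $n \geq t-1$, I would invoke the minimality hypothesis: every proper subset of $\mathcal{P}({\breve{\Omega}}^{\prime})$ is algebraically independent. Pick any $t-1$ of the $t$ polynomials and let ${\breve{\Omega}}^{\prime\prime}$ be the corresponding subtensor of ${\breve{\Omega}}^{\prime}$. By Lemma \ref{uppboundindppoly} applied to ${\breve{\Omega}}^{\prime\prime}$, the number of algebraically independent polynomials in $\mathcal{P}({\breve{\Omega}}^{\prime\prime})$ is at most the number of variables involved in that subtensor; since all $t-1$ are algebraically independent, at least $t-1$ variables are involved in them. Because the variables involved in a subset of polynomials are a subset of those involved in the full set, this forces $n \geq t-1$.

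For the upper bound $n \leq t-1$, I would argue by contradiction using Fact $3$ (Bernstein's theorem). Suppose $n \geq t$. The $t$ polynomials in $\mathcal{P}({\breve{\Omega}}^{\prime})$ have the multilinear form coming from \eqref{TTeq2}, and after substitution of $\mathcal{U}^{(d)}$ in terms of $(\mathcal{U}^{(1)},\dots,\mathcal{U}^{(d-1)})$ as in Remark \ref{remass}, the only free constants in these polynomials are rational functions of the observed entries of $\mathcal{U}$. For almost every $\mathcal{U}$ these constants are generic, so Fact $3$ applies and yields that $t$ such polynomials in at least $t$ variables are algebraically independent with probability one. This contradicts the algebraic dependence of $\mathcal{P}({\breve{\Omega}}^{\prime})$ guaranteed by the minimal-dependence hypothesis. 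Hence $n \leq t-1$, and combining with the lower bound gives $n = t-1$.

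The main technical subtlety is justifying that Bernstein's theorem genuinely applies to the polynomials in $\mathcal{P}({\breve{\Omega}}^{\prime})$, because their coefficients are not chosen independently; they share the factor entries and differ only through the observed tensor values. This is exactly the same genericity issue that already underlies Lemma \ref{uppboundindppoly}, and I would resolve it by the same device: the phrase ``for almost every $\mathcal{U}$'' converts the observed values into a generic parameter vector, which is enough to invoke Fact $3$ in both directions. No additional algebraic-geometric machinery beyond what the paper has already developed is required.
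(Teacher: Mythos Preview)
The paper does not actually prove this lemma; it merely imports it as ``Lemma~3 in \cite{ashraphijuo}'' (the companion Tucker paper). Your two-sided bound---$n\ge t-1$ from minimality (any $t-1$ of the polynomials are independent, hence involve at least $t-1$ variables by the contrapositive of Fact~3) and $n\le t-1$ by contradiction via Fact~3 (if $n\ge t$, then $t$ polynomials in $\ge t$ variables with generic data are algebraically independent)---is precisely the argument used in that reference, so your reconstruction is on target.

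One small correction to your lower-bound justification: you cite Lemma~\ref{uppboundindppoly}, but that lemma bounds the number of independent polynomials by the specific combinatorial quantity $\sum_i (r_{i-1}r_i m_i - r_i^2)^+$, which is the variable count only after choosing the pattern $(\mathbf{P}_1,\dots,\mathbf{P}_{d-1})$ to \emph{minimize} the number of unknowns. What you actually need is the simpler and more general statement ``$t-1$ algebraically independent polynomials involve at least $t-1$ variables,'' which is the direct contrapositive of the second half of Fact~3 and holds for any fixed choice of pattern. Invoking Lemma~\ref{uppboundindppoly} here is harmless but slightly misdirected. Your flagging of the genericity issue in the upper bound is exactly right and is the one point that genuinely requires the ``for almost every $\mathcal{U}$'' hypothesis; the cited reference handles it the same way you sketch.
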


The next lemma characterizes a relationship between the number of algebraically independent polynomials in $\mathcal{P}({\breve{\Omega}})$ and the structure of the nonzero entries of ${\breve{\Omega}}$.

\begin{lemma}\label{charnumindppoly}
Suppose that Assumption $1$ holds and consider a subtensor ${\breve{\Omega}}^{\prime } \in \mathbb{R}^{n_1 \times n_2 \times \cdots \times n_{d-1} \times t}$ of the constraint tensor ${\breve{\Omega}}$. The polynomials in the set $\mathcal{P}({\breve{\Omega}}^{\prime})$ are algebraically dependent if and only if \\ $ \sum_{i=1}^{d-1} \left(r_{i-1}r_im_i({\breve{\Omega}}^{\prime \prime}) - r_i^2\right)^{+} < t^{\prime}$ \ for some subtensor ${\breve{\Omega}}^{\prime \prime} \in \mathbb{R}^{n_1 \times n_2 \times \cdots \times n_{d-1} \times t^{\prime}}$ of ${\breve{\Omega}}^{\prime }$.
\end{lemma}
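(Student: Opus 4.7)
The plan is to prove the biconditional by treating the two implications separately, each relying on one of the preceding lemmas combined with the variable-counting argument developed in the proof of Lemma \ref{uppboundindppoly}.

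For the ``if'' direction, I would assume the existence of a subtensor ${\breve{\Omega}}^{\prime\prime} \in \mathbb{R}^{n_1 \times \cdots \times n_{d-1} \times t^{\prime}}$ of ${\breve{\Omega}}^{\prime}$ satisfying $\sum_{i=1}^{d-1}\left(r_{i-1}r_im_i({\breve{\Omega}}^{\prime\prime}) - r_i^2\right)^{+} < t^{\prime}$. Applying Lemma \ref{uppboundindppoly} directly to ${\breve{\Omega}}^{\prime\prime}$, the number of algebraically independent polynomials in $\mathcal{P}({\breve{\Omega}}^{\prime\prime})$ is strictly less than $t^{\prime} = |\mathcal{P}({\breve{\Omega}}^{\prime\prime})|$, so $\mathcal{P}({\breve{\Omega}}^{\prime\prime})$ is algebraically dependent. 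Since algebraic dependence is preserved under enlargement (any annihilating polynomial for a subset is also an annihilating polynomial for the full set, viewed as depending trivially on the extra polynomials), the polynomials in $\mathcal{P}({\breve{\Omega}}^{\prime}) \supseteq \mathcal{P}({\breve{\Omega}}^{\prime\prime})$ are also algebraically dependent.

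For the ``only if'' direction, suppose $\mathcal{P}({\breve{\Omega}}^{\prime})$ is algebraically dependent. I would then extract a \emph{minimally} algebraically dependent subset. Since every polynomial in $\mathcal{P}({\breve{\Omega}}^{\prime})$ corresponds to exactly one subtensor of ${\breve{\Omega}}^{\prime}$ of the form $\mathbb{R}^{n_1 \times \cdots \times n_{d-1} \times 1}$, any subset of these polynomials naturally corresponds to a subtensor ${\breve{\Omega}}^{\prime\prime} \in \mathbb{R}^{n_1 \times \cdots \times n_{d-1} \times t^{\prime}}$ of ${\breve{\Omega}}^{\prime}$, obtained by stacking the chosen $t^{\prime}$ slices along the $d$-th coordinate. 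By Lemma \ref{mindepnumvar} applied to the minimally algebraically dependent subtensor ${\breve{\Omega}}^{\prime\prime}$, the total number of variables from $(\mathcal{U}^{(1)},\dots,\mathcal{U}^{(d-1)})$ involved in $\mathcal{P}({\breve{\Omega}}^{\prime\prime})$ equals $t^{\prime} - 1$. On the other hand, the counting argument inside the proof of Lemma \ref{uppboundindppoly} shows that, after choosing a TT decomposition whose pattern matrices $(\mathbf{P}_1,\dots,\mathbf{P}_{d-1})$ absorb the maximal number of known entries appearing in $\mathcal{P}({\breve{\Omega}}^{\prime\prime})$, the number of remaining free variables is exactly $\sum_{i=1}^{d-1}\left(r_{i-1}r_im_i({\breve{\Omega}}^{\prime\prime}) - r_i^2\right)^{+}$. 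Matching these two expressions yields $\sum_{i=1}^{d-1}\left(r_{i-1}r_im_i({\breve{\Omega}}^{\prime\prime}) - r_i^2\right)^{+} \le t^{\prime} - 1 < t^{\prime}$, giving the required subtensor.

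The main obstacle I expect is the careful justification of the counting step in the only-if direction: one must argue that the variable count obtained from Lemma \ref{mindepnumvar} is consistent with the bound from Lemma \ref{uppboundindppoly}, because a priori Lemma \ref{uppboundindppoly} produces an upper bound via an optimal choice of pattern matrices $(\mathbf{P}_1,\dots,\mathbf{P}_{d-1})$, while Lemma \ref{mindepnumvar} counts variables in an arbitrary valid decomposition. The reconciliation uses Lemma \ref{pattern}, which guarantees that for any choice of full-rank $\mathbf{Q}_i$'s compatible with the structural conditions (i)--(ii) of Definition \ref{defstrucpropTT}, there exists a unique TT decomposition realizing $\mathbf{P}_i = \mathbf{Q}_i$. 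This allows me to pick the pattern matrices so that their entries lie precisely among the involved entries of $\mathcal{P}({\breve{\Omega}}^{\prime\prime})$, attaining the maximal cancellation $\min\{r_i^2, r_{i-1}r_im_i({\breve{\Omega}}^{\prime\prime})\}$ on each factor. Once this is in place, the two directions assemble into a clean proof of the biconditional.
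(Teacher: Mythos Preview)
Your proposal is correct and follows essentially the same approach as the paper: the ``if'' direction applies Lemma~\ref{uppboundindppoly} to the offending subtensor, and the ``only if'' direction extracts a minimally algebraically dependent subset, invokes Lemma~\ref{mindepnumvar} to count $t'-1$ variables, and compares against the lower bound $\sum_{i=1}^{d-1}(r_{i-1}r_im_i(\breve{\Omega}'')-r_i^2)^+$ coming from the maximal placement of the pattern matrices. Your explicit discussion of why Lemma~\ref{pattern} is needed to justify that this expression is indeed a lower bound on the number of free variables is, if anything, more careful than the paper's one-line assertion that it is ``the minimum possible number of involved variables.''
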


\begin{proof}
First assume that $ \sum_{i=1}^{d-1} \left(r_{i-1}r_im_i({\breve{\Omega}}^{\prime \prime}) - r_i^2\right)^{+} < t^{\prime}$ \ for some subtensor ${\breve{\Omega}}^{\prime \prime} \in \mathbb{R}^{n_1 \times n_2 \times \cdots \times n_{d-1} \times t^{\prime}}$ of the tensor ${\breve{\Omega}}^{\prime}$. Recall that $t^{\prime}$ is the number of polynomials in $\mathcal{P}({\breve{\Omega}}^{\prime})$. On the other hand, according to Lemma \ref{uppboundindppoly}, $ \sum_{i=1}^{d-1} \left(r_{i-1}r_im_i({\breve{\Omega}}^{\prime \prime}) - r_i^2\right)^{+}$ is the maximum number of algebraically independent polynomials, and therefore the polynomials in $\mathcal{P}({\breve{\Omega}}^{\prime \prime})$ are not algebraically independent.

Now, assume that the polynomials in set $\mathcal{P}({\breve{\Omega}}^{\prime})$ are algebraically dependent. Then, there exists a subset of the polynomials that are minimally algebraically dependent. According to Lemma \ref{mindepnumvar}, if ${\breve{\Omega}}^{\prime \prime } \in \mathbb{R}^{n_1 \times n_2 \times \cdots \times n_{d-1} \times t^{\prime}}$ is the corresponding subtensor to this minimally algebraically dependent set of polynomials, the number of variables that are involved in $\mathcal{P}({\breve{\Omega}}^{\prime \prime})=\{p_1,,p_2\dots,p_{t^{\prime}}\}$ is equal to $t^{\prime}-1$. On the other hand, $\sum_{i=1}^{d-1} \left(r_{i-1}r_im_i({\breve{\Omega}}^{\prime \prime}) - r_i^2\right)^{+}$ is the minimum possible number of involved variables in $\mathcal{P}({\breve{\Omega}}^{\prime \prime })$ since $ \sum_{i=1}^{d-1} \min\{r_i^2,r_{i-1}r_im_i({\breve{\Omega}}^{\prime \prime})\}$ is the maximum number of known entries that are involved in $\mathcal{P}({\breve{\Omega}}^{\prime \prime})$. Hence, we have $\sum_{i=1}^{d-1} \left(r_{i-1}r_im_i({\breve{\Omega}}^{\prime \prime}) - r_i^2\right)^{+} \leq t-1 $.
\end{proof}

Finally, the following theorem characterizes the necessary and sufficient condition on the sampling patterns for finite completability of the sampled tensor $\mathcal{U}$ given its TT rank.

\begin{theorem}\label{detconfinTT}
Suppose that Assumption $1$ holds. Then, for almost every $\mathcal{U}$, there are only finitely many tensors that fit in the sampled tensor $\mathcal{U}$, and have TT rank $(r_1,r_2,\dots,r_{d-1})$ if and only if the following two conditions hold: 

(i) there exists a subtensor ${\breve{\Omega}}^{\prime} \in \mathbb{R}^{n_1 \times n_2 \times \cdots \times n_{d-1} \times M}$ of the constraint tensor such that \ $ M = \sum_{i=1}^{d-1} r_{i-1}n_ir_i -\sum_{i=1}^{d-1} r_i^2 $, and 

(ii) for any $t \in \{1,\dots,M\}$ and any subtensor ${\breve{\Omega}}^{\prime \prime} \in \mathbb{R}^{n_1 \times n_2 \times \cdots \times n_{d-1} \times t}$ of the tensor ${\breve{\Omega}}^{\prime}$, the following inequality holds
\begin{eqnarray}\label{ineqp}
\sum_{i=1}^{d-1} \left(r_{i-1}r_im_i({\breve{\Omega}}^{\prime \prime}) - r_i^2\right)^{+} \geq t.
\end{eqnarray}
\end{theorem}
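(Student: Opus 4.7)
The plan is to obtain Theorem~\ref{detconfinTT} as essentially a direct corollary of Lemma~\ref{finminpolned} together with Lemma~\ref{charnumindppoly}, after setting up the correct correspondence between subtensors of ${\breve{\Omega}}$ and subsets of $\mathcal{P}(\Omega)$. Recall from Section~\ref{consttens} that every subtensor of the constraint tensor ${\breve{\Omega}}$ that belongs to $\mathbb{R}^{n_1 \times \cdots \times n_{d-1} \times 1}$ encodes exactly one polynomial of $\mathcal{P}(\Omega)$, and $\mathcal{P}({\breve{\Omega}}) = \mathcal{P}(\Omega)$. Thus any subtensor ${\breve{\Omega}}^{\prime} \in \mathbb{R}^{n_1 \times \cdots \times n_{d-1} \times M}$ corresponds to a choice of $M$ polynomials from $\mathcal{P}(\Omega)$, and vice versa. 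With this identification, Lemma~\ref{finminpolned} says that finite completability is equivalent to the existence of $M = \sum_{i=1}^{d-1} r_{i-1}n_i r_i - \sum_{i=1}^{d-1} r_i^2$ algebraically independent polynomials in $\mathcal{P}({\breve{\Omega}})$, while Lemma~\ref{charnumindppoly} characterizes algebraic dependence of such a set by the existence of a subtensor violating the counting inequality \eqref{ineqp}.

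For the sufficiency direction, I would assume conditions (i) and (ii). Condition (i) gives a subtensor ${\breve{\Omega}}^{\prime}$ of ${\breve{\Omega}}$ with $M$ columns in the $d$-th dimension, hence a set $\mathcal{P}({\breve{\Omega}}^{\prime})$ of $M$ polynomials. Condition (ii) states precisely the negation of the dependence condition in Lemma~\ref{charnumindppoly} applied to ${\breve{\Omega}}^{\prime}$: every subtensor ${\breve{\Omega}}^{\prime\prime} \subseteq {\breve{\Omega}}^{\prime}$ with $t$ columns satisfies $\sum_{i=1}^{d-1}(r_{i-1}r_i m_i({\breve{\Omega}}^{\prime\prime}) - r_i^2)^{+} \geq t$. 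By the contrapositive of Lemma~\ref{charnumindppoly}, the polynomials in $\mathcal{P}({\breve{\Omega}}^{\prime})$ are algebraically independent, which by Lemma~\ref{finminpolned} implies finite completability for almost every $\mathcal{U}$.

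For the necessity direction, I would assume finite completability and apply Lemma~\ref{finminpolned} to extract $M$ algebraically independent polynomials from $\mathcal{P}({\breve{\Omega}})$. Packaging the corresponding $M$ slices of ${\breve{\Omega}}$ of the form $\mathbb{R}^{n_1 \times \cdots \times n_{d-1} \times 1}$ along the $d$-th dimension produces a subtensor ${\breve{\Omega}}^{\prime} \in \mathbb{R}^{n_1 \times \cdots \times n_{d-1} \times M}$, verifying condition (i). Since the $M$ polynomials in $\mathcal{P}({\breve{\Omega}}^{\prime})$ are algebraically independent, no subset of them is algebraically dependent, and in particular no subset is minimally algebraically dependent. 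Invoking Lemma~\ref{charnumindppoly} on ${\breve{\Omega}}^{\prime}$ once more, its contrapositive implies that for every $t$ and every subtensor ${\breve{\Omega}}^{\prime\prime} \in \mathbb{R}^{n_1 \times \cdots \times n_{d-1} \times t}$ of ${\breve{\Omega}}^{\prime}$ we must have $\sum_{i=1}^{d-1}(r_{i-1}r_i m_i({\breve{\Omega}}^{\prime\prime}) - r_i^2)^{+} \geq t$, which is exactly condition (ii).

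I do not anticipate a substantive obstacle: both halves reduce to bookkeeping and a clean invocation of Lemma~\ref{finminpolned} and the ``iff'' of Lemma~\ref{charnumindppoly}. The one subtlety to be careful about is that $\mathcal{P}({\breve{\Omega}}^{\prime})$ refers to all nonzero slices of ${\breve{\Omega}}^{\prime}$, i.e., exactly the $M$ (respectively $t$) polynomials indexed by the $d$-th dimension of ${\breve{\Omega}}^{\prime}$ (respectively ${\breve{\Omega}}^{\prime\prime}$), so that the counts in \eqref{ineqp} and those in Lemma~\ref{charnumindppoly} line up; this is guaranteed by the construction in Section~\ref{consttens} where each slice in the $d$-th direction of ${\breve{\Omega}}$ contains exactly $r_{d-1}+1$ nonzero entries and encodes a single polynomial.
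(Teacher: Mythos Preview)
Your proposal is correct and follows essentially the same approach as the paper: both directions are obtained by combining Lemma~\ref{finminpolned} (finite completability $\Leftrightarrow$ existence of $M$ algebraically independent polynomials in $\mathcal{P}({\breve{\Omega}})$) with the ``if and only if'' of Lemma~\ref{charnumindppoly} (algebraic independence of $\mathcal{P}({\breve{\Omega}}^{\prime})$ $\Leftrightarrow$ condition (ii)). The paper's proof is just a more compressed version of exactly the argument you outline.
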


\begin{proof}
As a result of Lemma \ref{charnumindppoly}, the polynomials in $\mathcal{P}({\breve{\Omega}}^{\prime})$ are algebraically independent if and only if condition (ii) in the statement of the theorem holds. On the other hand, Lemma \ref{finminpolned} concludes that for almost every $\mathcal{U}$, there are finitely many completions of $\mathcal{U}$ if and only if there exist $\sum_{i=1}^{d-1} r_{i-1}n_ir_i -\sum_{i=1}^{d-1} r_i^2$ algebraically independent polynomials in $\mathcal{P}({\breve{\Omega}})$. Therefore, for almost every $\mathcal{U}$, there are finitely many completions of $\mathcal{U}$ if and only if conditions (i) and (ii) hold.
\end{proof}

\section{Probabilistic Conditions for Finite Completability}\label{secprob}

In this section, consider a $d$-way sampled tensor $\mathcal{U} \in \mathbb{R}^{\overbrace {n \times \dots \times n}^{d}}$ with TT-$\text{rank} (\mathcal{U})=(r_1,\dots,r_{d-1})$. Assume that the entries of $\mathcal{U}$ are independently sampled with probability $p$. Under a set of mild assumptions, we bound the sampling probability, or equivalently, the number of needed samples such that the corresponding constraint tensor satisfies conditions (i) and (ii) in the statement of Theorem \ref{detconfinTT} with high probability. In other words, satisfying the bound on the number of samples guarantees that the sampled tensor $\mathcal{U}$ is finitely completable with high probability. Assume that the entries of the tensor are sampled independently with probability $p$.

We note that this problem was considered for the matrix case in \cite{charact}. Hence, one may apply the result of this problem for matrix on each unfolding (since unfolding ranks are given), which is discussed in Section \ref{subsmatappda}. Then, in Section \ref{substenapash}, we will develop a combinatorial method in terms of the number of samples to verify if Theorem \ref{detconfinTT} holds.

\subsection{Unfolding Approach}\label{subsmatappda}

First, we restate Theorem $3$ in \cite{charact} which is the basis of the unfolding approach.

\begin{theorem}\label{thmmat}
Consider an $n \times N$ matrix with the given rank $r$ and let $0 < \epsilon < 1$ be given. Suppose $r \leq \frac{n}{6}$ and that each {\bf column} of the sampled matrix is observed in at least $l$ entries, distributed uniformly at random and independently across entries, where
\begin{eqnarray}\label{genmatrix}
l > \max\left\{12 \ \log \left( \frac{n}{\epsilon} \right) + 12, 2r\right\}. 
\end{eqnarray}
Also, assume that $ r(n-r) \leq N$. Then, with probability at least $1 - \epsilon$, the sampled matrix will be finitely completable.
\end{theorem}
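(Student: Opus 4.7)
The theorem restates Theorem $3$ of \cite{charact} verbatim, so my plan is to reproduce the two-stage argument from that reference: a deterministic characterization of finite completability for matrices, followed by a probabilistic union-bound argument.

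First I would invoke the matrix deterministic condition from \cite{charact}, which is the $d = 2$ analog of Theorem \ref{detconfinTT}: an $n \times N$ rank-$r$ sampled matrix is finitely completable if and only if its sampling pattern contains a subpattern $\Omega'$ consisting of $r(n - r)$ columns, each with exactly $r + 1$ observed entries in designated positions, such that for every $t \leq r(n-r)$ and every subset of $t$ such constraint columns, the union of their row supports has cardinality at least $t + r$. Intuitively, $r$ of the entries in each constraint column anchor a basis for the column space, leaving one algebraic constraint per column, and the covering condition ensures algebraic independence of the resulting $r(n-r)$ polynomial equations.

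Next I would show that under the stated hypotheses the subpattern $\Omega'$ exists with probability at least $1 - \epsilon$. The bound $l \geq 2r$ allows each of the $N$ columns to contribute at least $l - r \geq r$ constraint columns after reserving $r$ entries for the basis anchor, and the hypothesis $r(n - r) \leq N$ guarantees that $N(l - r) \geq N r \geq r(n-r)$ constraint columns are available in total. It then remains to verify the covering condition. Fix $t$ and a collection of $t$ constraint columns; since each column's sampled entries form a uniformly random size-$l$ subset of $\{1, \dots, n\}$, a hypergeometric/Chernoff concentration bound on the resulting balls-in-bins process yields exponential decay in $l$ for the probability that the union of supports covers fewer than $t + r$ distinct rows.

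The main obstacle is the union-bound bookkeeping: summing these failure probabilities over all $\binom{N(l-r)}{t}$ choices of constraint columns and over all $t \in \{1, \dots, r(n-r)\}$, and checking that the slack $l > 12\log(n/\epsilon) + 12$ defeats the combinatorial prefactor in the worst case. I would handle this via a two-case split on whether $t + r \leq n/2$ or $t + r > n/2$; the assumption $r \leq n/6$ controls the worst-case blow-up of $\binom{n}{t+r}$ relative to the per-column success probability $\binom{n-t-r}{l}/\binom{n}{l}$, and the constants $12$ in the bound emerge from matching the exponential decay rate to the combinatorial prefactor at the transition between the two cases.
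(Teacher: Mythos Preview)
Your identification is exactly right: in the paper this theorem is not proved at all. It is introduced with the sentence ``First, we restate Theorem~$3$ in \cite{charact} which is the basis of the unfolding approach,'' and no proof follows --- the result is simply quoted and then applied in Corollary~\ref{matmethsamnum1}. So there is no ``paper's own proof'' to compare your proposal against.

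Your sketch of the argument from \cite{charact} is a reasonable outline of how that reference proceeds: the deterministic combinatorial condition on the constraint columns (the $d=2$ case of what this paper generalizes in Theorem~\ref{detconfinTT}), followed by a union bound over subsets of columns with a case split on $t$. For the purposes of this paper, however, reproducing that argument is unnecessary --- a one-line citation is all the authors intend, and all that is required.
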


Observe that in the case that $1< r< n-1$, the assumption $ r(n-r) \leq N$ results that $n < N$ which is very important to check when we apply this theorem. We can simply apply Theorem \ref{thmmat} to each unfolding of the sampled tensor, to obtain the following.

\begin{corollary}\label{matmethsamnum1}
Assume that $i \leq \frac{d-1}{2}$ and $1 < r_i \leq \frac{n^i}{6}$. Note that $ n^i \leq r_i n^{d-i}$ and $n^{d-i} > r_i(n^i-r_i)$ hold. Suppose that each {\bf column} of the $i$-th unfolding of the sampled tensor is observed in at least $l$ entries, distributed uniformly at random and independently across entries, where
\begin{eqnarray}\label{genmatrix1}
l > \max\left\{12 \ \log \left( \frac{n^i}{\epsilon} \right) + 12, 2r_i\right\}. 
\end{eqnarray}
Then, since $ n^i \leq r_i n^{d-i}$ and according to Theorem \ref{thmmat}, with probability at least $1 - \epsilon$, the sampled tensor (unfolding matrix) is finitely completable. This results in $n^{d-i} \max\left\{12 \ \log \left( \frac{n^i}{\epsilon} \right) + 12, 2r_i\right\}$ samples in total. 

Now, assume that $i \geq \frac{d+1}{2}$ and $1 < r_i \leq \frac{n^{d-i}}{6}$. Note that $ n^{d-i} \leq r_i n^{i}$ and $n^{i} > r_i(n^{d-i}-r_i)$ hold. Suppose that each {\bf row} of the $i$-th unfolding of the sampled tensor is observed in at least $l$ entries, distributed uniformly at random and independently across entries, where
\begin{eqnarray}\label{genmatrix2}
l > \max\left\{12 \ \log \left( \frac{n^{d-i}}{\epsilon} \right) + 12, 2r_i\right\}. 
\end{eqnarray}
Then, since $ n^{d-i} \leq r_i n^{i}$ and according to Theorem \ref{thmmat}, with probability at least $1 - \epsilon$, the sampled tensor (unfolding matrix) is finitely completable. This results in $n^{i} \max\left\{12 \ \log \left( \frac{n^{d-i}}{\epsilon} \right) + 12, 2r_i\right\}$ samples in total.
\end{corollary}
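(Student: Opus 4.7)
My plan is a direct reduction to Theorem \ref{thmmat} applied to a single well-chosen unfolding of $\mathcal{U}$. The starting observations are that $\widetilde{\mathbf{U}}_{(i)}$ is an $n^i \times n^{d-i}$ matrix of rank exactly $r_i$ by the definition of the TT rank, that the unfolding map is an entry-wise bijection so the induced sampling on the matrix has the same per-entry probability $p$, and that the set of tensor completions with TT rank $(r_1,\ldots,r_{d-1})$ is contained in the set of rank-$r_i$ matrix completions of $\widetilde{\mathbf{U}}_{(i)}$. Hence finite completability of the matrix immediately implies finite completability of the tensor, and the two probabilistic events coincide.

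For $i \leq \frac{d-1}{2}$, I would treat $\widetilde{\mathbf{U}}_{(i)}$ as a matrix with $n \leftarrow n^i$ rows and $N \leftarrow n^{d-i}$ columns and invoke Theorem \ref{thmmat} verbatim. Verifying its hypotheses is routine: the rank hypothesis $r_i \leq n^i/6$ is assumed; the shape hypothesis $r_i(n^i - r_i) \leq n^{d-i}$ is the stated ``Note that'' inequality in the corollary; and the per-column sampling requirement is exactly the lower bound on $l$ in \eqref{genmatrix1}. Theorem \ref{thmmat} then gives finite matrix-completability with probability at least $1-\epsilon$, and multiplying $l$ by the number of columns $n^{d-i}$ yields the total sample count stated in the corollary.

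For $i \geq \frac{d+1}{2}$, the natural unfolding is ``tall'' rather than ``wide,'' so I would instead work with $\widetilde{\mathbf{U}}_{(i)}^{\top}\in\mathbb{R}^{n^{d-i}\times n^i}$, which has the same rank $r_i$ and whose columns are the rows of $\widetilde{\mathbf{U}}_{(i)}$. The very same application of Theorem \ref{thmmat} now goes through with $n\leftarrow n^{d-i}$ and $N\leftarrow n^i$, and the per-column sampling requirement on the transpose translates directly into the per-row requirement on $\widetilde{\mathbf{U}}_{(i)}$ appearing in \eqref{genmatrix2}. Finite completability of the transpose is equivalent to finite completability of $\widetilde{\mathbf{U}}_{(i)}$, and multiplying $l$ by the $n^i$ columns of the transpose gives the stated total.

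The only real obstacle here is bookkeeping: matching the shape hypothesis $r(n-r)\leq N$ of Theorem \ref{thmmat} to the correct orientation of the unfolding. The splitting into the two cases $i \leq \frac{d-1}{2}$ and $i \geq \frac{d+1}{2}$, with the transposition in the second case, is precisely what guarantees that the ``wide'' side of the matrix is at least $r_i(n^i-r_i)$ (respectively $r_i(n^{d-i}-r_i)$), so that the matrix theorem is applicable. Aside from this orientation check and the verification that the per-entry bijection between the tensor and its unfolding preserves the sampling model, the corollary is a one-line consequence of Theorem \ref{thmmat}.
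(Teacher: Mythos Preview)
Your proposal is correct and matches the paper's approach exactly: the paper states the corollary as a direct application of Theorem~\ref{thmmat} to the $i$-th unfolding (or, in the second case, effectively to its transpose), with the proof essentially embedded in the corollary's own ``Note that'' and ``since $\ldots$ according to Theorem~\ref{thmmat}'' clauses. Your additional remarks---the containment of TT-rank completions in rank-$r_i$ matrix completions and the explicit transposition for $i\geq\frac{d+1}{2}$---only make explicit what the paper leaves implicit.
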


Assume that $i = \frac{d}{2}$ and $1 < r_i$. Then, as the $i$-th unfolding of the sampled tensor is an $n^i \times n^i$ matrix, we can simply verify that Theorem \ref{thmmat} is not applicable due the assumption $ r(n-r) \leq N$.

\begin{remark}\label{remsam0}
Consider a tensor $\mathcal{U}$ that satisfies $1 < r_i \leq \frac{n^i}{6}$ for $i \leq \frac{d-1}{2}$ and $1 < r_i \leq \frac{n^{d-i}}{6}$ for $i \geq \frac{d+1}{2}$. According to Corollary \ref{matmethsamnum1}, the sampled tensor $\mathcal{U}$ requires more than 
\begin{eqnarray}\label{rremsam0}
n^{\lceil \frac{d+1}{2} \rceil } \max\left\{12 \ \log \left( \frac{n^{\lfloor \frac{d-1}{2} \rfloor }}{\epsilon} \right) + 12, 2r_{\lfloor \frac{d-1}{2} \rfloor }\right\}
\end{eqnarray}
samples to be finitely completable with probability at least $1-\epsilon$.
\end{remark}

\subsection{TT Approach}\label{substenapash}

In the second approach, instead of using Theorem \ref{thmmat} which is taken from \cite{charact}, we are interested in finding the number of sampled entries which ensures conditions (i) and (ii) in the statement of Theorem \ref{detconfinTT} to hold with high probability. The following lemma is Lemma $5$ in \cite{ashraphijuo2} and will be used later to obtain Lemma \ref{lemman2}.

\begin{lemma}\label{genlem}
Assume that $r^{\prime} \leq \frac{n}{6}$ and also each column of $\mathbf{\Omega}_{(1)}$ (first matricization of $\Omega$) includes at least $l$ nonzero entries, where 
\begin{eqnarray}\label{genminl1}
l > \max\left\{9 \ \log \left( \frac{n}{\epsilon} \right) + 3 \ \log \left( \frac{k}{\epsilon} \right) + 6, 2r^{\prime}\right\}. 
\end{eqnarray}
Let $\mathbf{\Omega}^{\prime}_{(1)}$ be an arbitrary set of $n -r^{\prime}$ columns of $\mathbf{\Omega}_{(1)}$. Then, with probability at least $1-\frac{\epsilon}{k}$, every subset $\mathbf{\Omega}^{\prime \prime}_{(1)}$ of columns of $\mathbf{\Omega}^{\prime}_{(1)}$ satisfies 
\begin{eqnarray}\label{genproper1}
m_{1}({\Omega}^{\prime \prime}) - r^{\prime} \geq t,
\end{eqnarray}
where $t$ is the number of columns of $\mathbf{\Omega}^{\prime \prime}_{(1)}$ and $m_{1}({\Omega}^{\prime \prime})$ is the number of nonzero rows of $\mathbf{\Omega}^{\prime \prime}_{(1)}$.
\end{lemma}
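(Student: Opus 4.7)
The plan is to prove Lemma \ref{genlem} by a classical union bound over ``bad'' column-subsets. The failure event is that there exists a subset $\mathbf{\Omega}''_{(1)}$ of $t$ columns of $\mathbf{\Omega}'_{(1)}$ such that $m_1(\Omega'') - r' < t$, equivalently, all nonzero entries in those $t$ columns lie in at most $s := t + r' - 1$ rows. I would prove that the total probability of this event, taken over all $t \in \{1,\dots,n-r'\}$, is at most $\epsilon/k$ under the stated lower bound on $l$.

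First I would fix $t$, a subset $S$ of $t$ columns of $\mathbf{\Omega}'_{(1)}$, and a subset $R$ of $s$ rows. Since each column of $\mathbf{\Omega}_{(1)}$ contains at least $l$ nonzeros sampled uniformly at random, and since the failure probability is monotone in the number of observed entries, it suffices to treat the case of exactly $l$ entries drawn uniformly without replacement. The probability that all $l$ entries in one column fall inside $R$ is at most $\binom{s}{l}/\binom{n}{l} \leq (s/n)^l$, and across the $t$ columns the entries are independent, so the event is bounded by $(s/n)^{lt}$. Applying a union bound over the $\binom{n-r'}{t}$ choices of $S$ and the $\binom{n}{s}$ choices of $R$ gives
\begin{equation}
\Pr[\text{failure at level } t] \;\leq\; \binom{n-r'}{t}\binom{n}{s}\left(\frac{s}{n}\right)^{lt}.
\end{equation}

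Next I would bound each factor using $\binom{n}{k} \leq (en/k)^k$ and then sum over $t$. The clean way to proceed is to split the sum into two regimes. In the small-$t$ regime (roughly $t \leq r'$), the ratio $s/n \approx r'/n$ is small, and the condition $l \geq 2r'$ lets $(s/n)^{lt}$ swamp the binomial growth. In the large-$t$ regime, $s \leq 2t + r' \lesssim 3t$, and the key estimate is of the form $\binom{n}{s}(s/n)^{lt} \leq (en/s)^{s}(s/n)^{lt}$, which is $\leq (s/n)^{lt - s}(e)^s$; since $lt - s \geq (l - O(1))t$, this decays geometrically in $t$ when $l$ is larger than an absolute constant plus $\log(n)$-type terms. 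The hypothesis $l > 9\log(n/\epsilon) + 3\log(k/\epsilon) + 6$ is precisely what makes the combined bound $\binom{n-r'}{t}\binom{n}{s}(s/n)^{lt}$ a geometric series in $t$ with total mass at most $\epsilon/k$.

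The main obstacle will be bookkeeping the three logarithmic terms in $l$ so that the final union bound absorbs them exactly into the three independent sources of combinatorial blow-up: $\binom{n}{s}$ (accounting for the ``$9\log(n/\epsilon)$'' factor via a rough $s \leq n$), $\binom{n-r'}{t}$ (handled by the ``$3\log$'' factor when combined with the geometric tail), and the $\epsilon/k$ target itself. A secondary subtlety is the need for $r' \leq n/6$: without it one cannot guarantee $s/n$ stays below $1/2$ uniformly, which is used implicitly to bound $(s/n)^{lt}$ against the binomial growth. Once these estimates are put together, summing the per-$t$ bounds closes the argument and yields the claimed $1 - \epsilon/k$ success probability for every subset of columns of $\mathbf{\Omega}'_{(1)}$ simultaneously.
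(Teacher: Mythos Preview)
The paper does not prove this lemma; it merely cites it as Lemma~5 of \cite{ashraphijuo2}. Your proposed argument---a union bound over all $t$, all size-$t$ column subsets, and all size-$(t+r'-1)$ row subsets, combined with the per-column containment probability $\binom{s}{l}/\binom{n}{l}\le (s/n)^l$---is precisely the standard argument used in this line of work (it originates with the matrix case in \cite{charact} and is reused verbatim in \cite{ashraphijuo2}). The split into the regimes $t\le r'$ (where $s/n\lesssim 2r'/n\le 1/3$ and $l\ge 2r'$ kills the binomials) and $t>r'$ (where $s<2t$ and the log terms in $l$ dominate) is exactly how the cited proof is organized, and your identification of the role of $r'\le n/6$ is correct. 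So your plan matches the intended proof; the only work remaining is the routine constant-tracking you already flagged.
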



The following lemma provides a bound on the number of sampled entries at each column of the $j$-th unfolding of the sampled tensor such that the $i$-th matricization of the subtensor corresponding to a columns of the $j$-th unfolding includes more than the RHS of \eqref{genminl1} observed entries of $\Omega$ with different values of the $i$-th coordinate.

\begin{lemma}\label{lemman2p}
Assume that $r^{\prime} \leq \frac{n}{6}$ and also let $j \in \{1,2,\dots,d-1\}$ be a fixed number. Consider an arbitrary set $\widetilde{\mathbf{\Omega}}^{\prime}_{(j)}$ of $n -r^{\prime}$ columns of $\widetilde{\mathbf{\Omega}}_{(j)}$ ($j$-th unfolding of $\Omega$). Assume that $n > \max \{ 200 ,\sum_{k=1}^{d-1} r_{k-1}r_{k}\}$, and also each column of $\widetilde{\mathbf{\Omega}}_{(j)}$ includes at least $l$ nonzero entries, where 
\begin{eqnarray}\label{minlforset2pn}
l > \max\left\{27 \ \log \left( \frac{n}{\epsilon} \right) + 9 \ \log \left( \frac{2r}{\epsilon} \right) + 18, 6r^{\prime}\right\},
\end{eqnarray}
where $r \leq \sum_{k=1}^{d-1} r_{k-1}r_{k}$ (recall that $r_0=r_d=1$). Then, with probability at least $1-\frac{\epsilon}{2r}$, each column of $\widetilde{\mathbf{\Omega}}^{\prime}_{(j)}$ includes more than $l_0 \triangleq \max\left\{9 \ \log \left( \frac{n}{\epsilon} \right) + 3 \ \log \left( \frac{2r}{\epsilon} \right) + 6, 2r^{\prime}\right\}$ observed entries of $\Omega$ with different values of the $i$-th coordinate, i.e., the $i$-th matricization of the tensor $\Omega^{\prime}$ that corresponds to $\widetilde{\mathbf{\Omega}}^{\prime}_{(j)}$ includes more than $l_0$ nonzero rows, $1 \leq i \leq j$.
\end{lemma}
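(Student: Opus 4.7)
The plan is to prove the lemma pointwise---for each pair $(c,i)$ with $c$ a column of $\widetilde{\mathbf{\Omega}}^{\prime}_{(j)}$ and $i \in \{1,\ldots,j\}$---and then apply a union bound over the $(n-r')\,j$ such pairs. The per-pair claim is a lower bound on an occupancy count (the number of distinct $x_i$-values among the observations in one column), so the natural machinery is a Chernoff/McDiarmid-type concentration inequality for sums of negatively associated indicators, applied in a spirit analogous to Lemma~\ref{genlem} but to coordinate projections of the column subtensor rather than to the column itself.

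Fix $i \in \{1,\ldots,j\}$ and a column $c$. Since the entries of $\Omega$ are sampled independently with probability $p$, conditional on $c$ containing some $\ell \geq l$ observations, those observations form a uniform random subset of $[n]^j$ of size $\ell$; by monotonicity of the count of distinct $x_i$-values in $\ell$, I reduce to $\ell = l$, as this only inflates the failure probability. For $v \in \{1,\ldots,n\}$ let $Z_v = \mathbf{1}\{\text{some observation in $c$ has } x_i = v\}$ and let $M_{c,i} = \sum_{v=1}^{n} Z_v$, which is exactly the number of nonzero rows of the $i$-th matricization of the subtensor corresponding to $c$. A direct count gives
\begin{equation*}
P(Z_v = 0) \;=\; \binom{n^j - n^{j-1}}{l}\Big/\binom{n^j}{l} \;\leq\; (1-1/n)^l \;\leq\; e^{-l/n},
\end{equation*}
so $E[M_{c,i}] \geq n(1 - e^{-l/n})$. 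Combined with $l > 3\,l_0$ (a direct comparison of the formulas defining $l$ and $l_0$, branch by branch), the hypothesis $n > 200$, and $r' \leq n/6$ (the latter forcing $l_0 \leq n/3$ when the $2r'$ branch of $l_0$ is active), standard estimates on $1 - e^{-x}$ yield $E[M_{c,i}] \geq 2\,l_0$.

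The family $\{Z_v\}$ consists of occupancy indicators for the disjoint blocks $\{x \in [n]^j : x_i = v\}$ under uniform subset sampling, hence is negatively associated, and $M_{c,i}$ is a Lipschitz-$1$ function of the $l$ sample locations. A McDiarmid-type concentration inequality (in its sampling-without-replacement form) then gives
\begin{equation*}
P(M_{c,i} \leq l_0) \;\leq\; \exp\!\left(-\frac{2\,(E[M_{c,i}] - l_0)^2}{l}\right) \;\leq\; \exp\!\left(-\frac{2\,l_0}{3}\right).
\end{equation*}
Union-bounding over the $(n-r')\,j < n^2$ pairs $(c,i)$---using $j \leq d-1 < n$, which follows from $n > \sum_{k=1}^{d-1} r_{k-1}r_k \geq d-1$---and substituting the hypothesis $l_0 \geq 9\log(n/\epsilon) + 3\log(2r/\epsilon) + 6$ into $(n-r')\,j \cdot e^{-2l_0/3}$ then confirms that the total failure probability is at most $\epsilon/(2r)$. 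The main obstacle I foresee is arithmetic rather than conceptual: the Chernoff rate must be sharp enough (essentially $2/3$, via the without-replacement McDiarmid inequality) for the coefficients $9$ and $3$ in the definition of $l_0$ to simultaneously absorb the $\log n$ contribution from the union bound over columns, the $\log(2r/\epsilon)$ from the target failure probability, and an additional $O(\log j)$ correction from the union over coordinates. The two size hypotheses $n > 200$ and $n > \sum_k r_{k-1}r_k$ are precisely what keep all these logarithmic quantities comparable and the mean bound $E[M_{c,i}] \geq 2\,l_0$ valid in both branches of the maximum defining $l_0$.
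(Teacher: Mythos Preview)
Your approach via occupancy means and McDiarmid concentration is genuinely different from the paper's, which is entirely elementary: the paper observes that if a column carries more than $3l_0$ observations yet at most $l_0$ distinct $x_i$-values, then all those observations must land in some fixed set of $l_0$ out of $n$ slices, whence $P(\zeta_1)\le \binom{n}{l_0}(l_0/n)^{3l_0}\le (e^{1/2}l_0/n)^{2l_0}$; a union bound over the $n-r'$ columns (for a \emph{fixed} $i$, not all $i\le j$) and a short two-case analysis of $l_0$ then finish. No concentration inequality, negative association, or mean computation is used.

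Your route is plausible but has a real gap at the concentration step. After reducing to exactly $\ell=l$ samples, McDiarmid gives $\exp\!\big(-2(E[M_{c,i}]-l_0)^2/l\big)$, and you claim this is at most $\exp(-2l_0/3)$. That would require $(E-l_0)^2\ge l\,l_0/3$; but the hypothesis only says $l>3l_0$, so $l$ can be arbitrarily large (up to $n^j$), while $E\le n$. Hence $(E-l_0)^2/l$ can be made arbitrarily small, and the claimed rate does not follow. The natural fix---reduce by monotonicity not to $\ell=l$ but to $\ell^\star=\lceil 3l_0\rceil$---then forces you to recompute $E$ with $\ell^\star$, and your mean bound $n(1-e^{-3l_0/n})\ge 2l_0$ fails at the admissible boundary $l_0=n/3$ (i.e.\ $r'=n/6$), since $1-e^{-1}\approx 0.632<2/3$. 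So both the concentration rate and the mean estimate need sharpening before the arithmetic closes. The paper's counting argument sidesteps all of this by never separating mean from deviation.
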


\begin{proof}
Each column of $\widetilde{\mathbf{\Omega}}^{\prime}_{(j)}$ includes $n^j$ entries and they can be represented by $(x_1,\dots,x_j)$ for $1 \leq x_k \leq n$ and $1 \leq k \leq j$, where $x_k$ denotes the $k$-th coordinate of the corresponding entry. Let $P(\zeta)$ be the probability that at least one of the columns of $\widetilde{\mathbf{\Omega}}^{\prime}_{(j)}$ includes at most $l_0$ observed entries of $\Omega$ with different values of the $i$-th coordinate. Also, let $P(\zeta_s)$ denote the probability that the $s$-th column of $\widetilde{\mathbf{\Omega}}^{\prime}_{(j)}$ includes at most $l_0$ observed entries of $\Omega$ with different values of the $i$-th coordinate, $1 \leq s \leq n-r^{\prime}$. Then, we have $P(\zeta) \leq (n-r^{\prime}) P(\zeta_1)$.

By assumption, each column of $\widetilde{\mathbf{\Omega}}^{\prime}_{(j)}$ includes more than $3l_0$ observed entries. In the case that the first column of $\widetilde{\mathbf{\Omega}}^{\prime}_{(j)}$ includes at most $l_0$ observed entries of $\Omega$ with different values of the $i$-th coordinate, we conclude the set of $i$-th coordinates of all observed entries of this column (which are more than $3l_0$ entries) belong to a set with at most $l_0$ numbers. As it is assumed to have the uniform random sampling, we have
\begin{eqnarray}\label{loginq}
P(\zeta_1) \leq {n \choose l_0} \left( \frac{l_0}{n} \right)^{3l_0}. 
\end{eqnarray}
Furthermore, we have 
\begin{eqnarray}\label{ineup}
{ n\choose l_0} = \frac{n(n-1)\dots(n-l_0+1)}{l_0!} \leq \frac{n^{l_0}}{l_0!} \leq \left(\frac{ne}{l_0}\right)^{l_0},
\end{eqnarray}
where the last inequality holds since $e^{l_0} = \sum_{k=0}^{\infty} \frac{{l_0}^k}{k!} \geq \frac{{l_0}^{l_0}}{l_0!}$. Having \eqref{loginq} and \eqref{ineup}, we can conclude
\begin{eqnarray}\label{loginq2}
P(\zeta_1) \leq e^{l_0} \left( \frac{l_0}{n} \right)^{2l_0} = \left( \frac{e^{\frac{1}{2}}l_0}{n} \right)^{2l_0},
\end{eqnarray}
and therefore
\begin{eqnarray}\label{loginq3}
\log \left( P(\zeta) \right) \leq \log \left( (n-r^{\prime}) P(\zeta_1) \right) \stackrel{(a)}{<}  2l_0 \left( \frac{1}{2}+ \log (l_0) - \log(n) \right) + \log(n) \nonumber \\
 \stackrel{(b)}{\leq} 2l_0 \left( \frac{1}{2}+ \log (l_0) - \log(n) \right) + \frac{1}{9} l_0 = 2l_0 \left(  \frac{13}{18} + \log (l_0) - \log(n) \right) - \frac{l_0}{3},
\end{eqnarray}
where $(a)$ follows from the fact that $\log (n-r^{\prime}) < \log (n)$ and $(b)$ follows from $l_0 \geq 9 \log (n) - 9 \log (\epsilon) \geq 9 \log (n)$ which is easy to verify having the definition of $l_0$. On the other hand, we have
\begin{eqnarray}\label{loginq4}
- \frac{l_0}{3} \leq -3 \ \log \left( \frac{n}{\epsilon} \right) - \ \log \left( \frac{2r}{\epsilon} \right) -2 = 4 \log (\epsilon) - 3 \log (n) -  \log (2r) -2 \nonumber \\ 
\stackrel{(c)}{<} \log (\epsilon) -  \log (2r) = \log \left(\frac{\epsilon}{2r} \right),
\end{eqnarray}
where $(c)$ follows from $3 \log(\epsilon) - 3 \log (n) - 2 <0$ since $\log (\epsilon) < 0 < \log (n)$. Moreover, for the term $\frac{13}{18} + \log (l_0) - \log(n)$, there are following two possibilities:

(i) {\it $l_0=2r^{\prime}$}: {\it We conclude $\frac{13}{18} + \log (l_0) - \log(n) < \log (2.06) + \log (2r^{\prime}) - \log(n) =  \log \left( \frac{4.12 \ r^{\prime}}{n} \right) < 0$, where the last inequality is a simple result of the assumption $r^{\prime} \leq \frac{n}{6}$.}

(ii) {\it $l_0=9 \ \log \left( \frac{n}{\epsilon} \right) + 3 \ \log \left( \frac{2r}{\epsilon} \right) + 6$}: {\it Recall that $r \leq \sum_{k=1}^{d-1} r_{k-1}r_{k} < n$, and therefore $l_0 \leq 12 \log (n) +  6 + 3 \log(2)$. Then, having the assumption $200 < n$, we simply conclude $\frac{13}{18} + \log (l_0) - \log(n) \leq \frac{13}{18} + \log (12 \log (n) +  6 + 3 \log(2)) - \log(n)   < 0$.}

Therefore, the assumptions $ \max \{200 , \sum_{k=1}^{d-1} r_{k-1}r_{k}\} < n$ and $r^{\prime} \leq \frac{n}{6}$ result in
\begin{eqnarray}\label{loginq5}
\frac{13}{18} + \log (l_0) - \log(n) \leq 0.
\end{eqnarray}
Having \eqref{loginq3}, \eqref{loginq4}, and \eqref{loginq5} result  that $\log \left( P(\zeta) \right) < \log \left(\frac{\epsilon}{2r} \right)$, and the proof is complete.
\end{proof}

The following lemma exploits Lemma \ref{genlem} and Lemma \ref{lemman2p} to provide a bound on the number of sampled entries at each column of the $j$-th unfolding of the sampled tensor such that the $i$-th matricization of the subtensor corresponding to a subset of columns of the $j$-th unfolding satisfies the property in the statement of Lemma \ref{genlem} with high probability.

\begin{lemma}\label{lemman2}
Let $j \in \{1,2,\dots,d-1\}$ be a fixed number. Assume that $r_{i}^{\prime} \leq \frac{n}{6}$, where $i \in \{1,\dots,j\}$. Consider an arbitrary set $\widetilde{\mathbf{\Omega}}^{\prime}_{(j)}$ of $n -r_i^{\prime}$ columns of $\widetilde{\mathbf{\Omega}}_{(j)}$. Assume that $n > \max \{200 , \sum_{k=1}^{d-1} r_{k-1}r_{k}\} $, and also each column of $\widetilde{\mathbf{\Omega}}_{(j)}$ includes at least $l$ nonzero entries, where 
\begin{eqnarray}\label{minlforset2pne}
l > \max\left\{27 \ \log \left( \frac{n}{\epsilon} \right) + 9 \ \log \left( \frac{2r}{\epsilon} \right) + 18, 6r_i^{\prime}\right\},
\end{eqnarray}
where $r \leq \sum_{k=1}^{d-1} r_{k-1}r_{k}$ (recall that $r_0=r_d=1$). Then, with probability  at least $1-\frac{\epsilon}{r}$, every subset $\widetilde{\mathbf{\Omega}}^{\prime \prime}_{(j)}$ of columns of $\widetilde{\mathbf{\Omega}}^{\prime}_{(j)}$ satisfies 
\begin{eqnarray}\label{proper2}
m_{i}({\Omega}^{\prime \prime}) - r_i^{\prime} \geq t,
\end{eqnarray}
where $t$ is the number of columns of $\widetilde{\mathbf{\Omega}}^{\prime \prime}_{(j)}$ and $\Omega^{\prime \prime}$ is the corresponding tensor such that $\widetilde{\mathbf{\Omega}}^{\prime \prime}_{(j)}$ is the $j$-th unfolding of $\Omega^{\prime \prime}$.
\end{lemma}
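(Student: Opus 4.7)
The plan is to bootstrap from the ``distinct coordinate'' statement of Lemma \ref{lemman2p} to the ``every subset'' statement needed here by invoking the matrix-level Lemma \ref{genlem} on an auxiliary binary matrix that records which $i$-th coordinates are occupied in each selected slice. The numerical thresholds in \eqref{minlforset2pne} are engineered so that this reduction goes through: the extra factor of three relative to \eqref{genminl1} is precisely what Lemma \ref{lemman2p} consumes.

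First I would apply Lemma \ref{lemman2p} to $\widetilde{\mathbf{\Omega}}^{\prime}_{(j)}$, whose hypotheses match \eqref{minlforset2pn}. Call the resulting favourable event $E_1$; it has probability at least $1-\epsilon/(2r)$, and on it every column of $\widetilde{\mathbf{\Omega}}^{\prime}_{(j)}$ contains more than $l_0 \triangleq \max\{9\log(n/\epsilon)+3\log(2r/\epsilon)+6,\,2r_i^{\prime}\}$ observed entries with pairwise distinct $i$-th coordinates. Now build the auxiliary matrix $\mathbf{M}\in\{0,1\}^{n\times(n-r_i^{\prime})}$ whose $(x,s)$-entry equals $1$ exactly when the $s$-th column of $\widetilde{\mathbf{\Omega}}^{\prime}_{(j)}$ contains an observed entry with $i$-th coordinate $x$. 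On $E_1$ every column of $\mathbf{M}$ has at least $l_0$ nonzero entries, and $l_0$ meets the threshold \eqref{genminl1} of Lemma \ref{genlem} with $k\leftarrow 2r$ and $r^{\prime}\leftarrow r_i^{\prime}$. Applying Lemma \ref{genlem} to $\mathbf{M}$ (with $\mathbf{M}$ itself playing the role of $\mathbf{\Omega}^{\prime}_{(1)}$) then produces a second event $E_2$, of probability at least $1-\epsilon/(2r)$, on which every submatrix of $\mathbf{M}$ formed by a subset of columns satisfies $m_1(\cdot)-r_i^{\prime}\geq t$, with $t$ the number of selected columns.

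To close the argument I would translate $E_2$ back to the tensor. For any subset $\widetilde{\mathbf{\Omega}}^{\prime\prime}_{(j)}$ of columns of $\widetilde{\mathbf{\Omega}}^{\prime}_{(j)}$, the set of nonzero rows of the corresponding restriction of $\mathbf{M}$ coincides by construction with the set of distinct $x_i$ values occurring among the observed entries of the subtensor $\Omega^{\prime\prime}$, which is exactly the set of nonzero rows of $\mathbf{\widetilde\Omega}^{\prime\prime}_{(i)}$. Hence $m_1$ of the $\mathbf{M}$-submatrix equals $m_i(\Omega^{\prime\prime})$, and the inequality furnished by $E_2$ is precisely \eqref{proper2}. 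A union bound over $E_1$ and $E_2$ then yields the advertised probability $1-\epsilon/r$. The main subtlety, rather than a true obstacle, lies in verifying that Lemma \ref{genlem} may legitimately be applied to $\mathbf{M}$: because the $n$ disjoint blocks of $\Omega$-entries that feed different rows of any given column of $\mathbf{M}$ are exchangeable under uniform i.i.d.\ sampling, the conditional distribution of the nonzero pattern of $\mathbf{M}$ given the per-column counts has exactly the symmetry assumed by Lemma \ref{genlem}, so the invocation is valid.
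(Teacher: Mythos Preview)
Your proposal is correct and follows essentially the same two-step route as the paper: first invoke Lemma~\ref{lemman2p} to guarantee (with probability $\geq 1-\epsilon/(2r)$) that each selected column has more than $l_0$ observed entries with distinct $i$-th coordinates, then feed this into Lemma~\ref{genlem} (with $k\leftarrow 2r$, $r'\leftarrow r_i'$) to obtain \eqref{proper2} for every subset. Your explicit construction of the auxiliary matrix $\mathbf{M}$ and your remark on why Lemma~\ref{genlem} applies to it make precise a reduction the paper leaves implicit; the paper combines the two events via $(1-\epsilon/(2r))^2\geq 1-\epsilon/r$ rather than a union bound, but the outcome is the same.
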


\begin{proof}
Each column of $\widetilde{\mathbf{\Omega}}_{(j)}$ includes $n^j$ entries and they can be represented by $(x_1,\dots,x_j)$ for $1 \leq x_k \leq n$ and $1 \leq k \leq j$, where $x_k$ denotes the $k$-th coordinate of the corresponding entry. According to Lemma \ref{lemman2p}, with probability  at least $1-\frac{\epsilon}{2r}$, each column of $\widetilde{\mathbf{\Omega}}_{(j)}$ includes more than $\max\left\{9 \ \log \left( \frac{n}{\epsilon} \right) + 3 \ \log \left( \frac{2r}{\epsilon} \right) + 6, 2r^{\prime}\right\}$ observed entries with different values of the $i$-th coordinate. Therefore, according to Lemma \ref{genlem}, with probability  at least $(1-\frac{\epsilon}{2r})^2$, every subset $\widetilde{\mathbf{\Omega}}^{\prime \prime}_{(j)}$ of columns of $\widetilde{\mathbf{\Omega}}^{\prime}_{(j)}$ satisfies \eqref{proper2}. The proof is complete as $(1-\frac{\epsilon}{2r})^2 \geq 1-\frac{\epsilon}{r}$.
\end{proof}


The following lemma is taken from \cite[Lemma $8$]{ashraphijuo} which will be used to obtain Lemma \ref{lemman3}. This lemma states that if the property in Lemma \ref{genlem} holds for the sampling pattern $\Omega$, it will be satisfied for $\breve{\Omega}$ as well.

\begin{lemma}\label{Omega}
Let $r^{\prime}$ be a given nonnegative integer and $1 \leq i \leq j \leq d-1$. Assume that there exists an $n^j \times (n -r^{\prime})$ matrix $\widetilde{\mathbf{\Omega}}^{\prime}_{(j)}$ composed  of $n-r^{\prime}$ columns of $\widetilde{\mathbf{\Omega}}_{(j)}$ such that each column of $\widetilde{\mathbf{\Omega}}^{\prime}_{(j)}$ includes at least $r^{\prime}+1$ nonzero entries and satisfies the following property:
\begin{itemize}
\item Denote an $n^j \times t$  matrix (for any $1 \leq t \leq n-r^{\prime}$) composed of any $t$ columns of $\widetilde{\mathbf{\Omega}}^{\prime}_{(j)}$ by $\widetilde{\mathbf{\Omega}}^{\prime \prime}_{(j)}$. Then 
\begin{eqnarray}\label{proper233}
m_{i}({\Omega}^{\prime \prime}) -r^{\prime}  \geq t.
\end{eqnarray}
\end{itemize}
Then, there exists an $n^j \times (n -r^{\prime})$ matrix $\widetilde{\mathbf{\breve{\Omega}}}^{\prime}_{(j)}$ such that: each column has exactly $r^{\prime}+1$ entries equal to one, and if $\widetilde{\mathbf{\breve{\Omega}}}^{\prime}_{(j)}(x,y)=1$ then we have $\widetilde{\mathbf{\Omega}}^{\prime}_{(j)}(x,y)=1$. Moreover, $\widetilde{\mathbf{\breve{\Omega}}}^{\prime}_{(j)}$ satisfies the above-mentioned property.
\end{lemma}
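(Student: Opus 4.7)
The plan is to construct $\widetilde{\mathbf{\breve{\Omega}}}^{\prime}_{(j)}$ from $\widetilde{\mathbf{\Omega}}^{\prime}_{(j)}$ by a greedy trimming procedure: while some column of the current pattern contains more than $r^{\prime}+1$ nonzero entries, I delete one such entry in a way that preserves the property \eqref{proper233}. For any column subset $S$ let $m_i(S)$ denote the number of distinct $i$-th coordinates covered by the nonzero entries of the current pattern across the columns indexed by $S$, matching the usage of $m_i$ in the lemma statement. Specializing \eqref{proper233} to $t=1$ forces $m_i(\{c\})\geq r^{\prime}+1$ for every single column $c$, so no column can ever drop below $r^{\prime}+1$ ones during the reduction. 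The entire proof therefore reduces to a single removability claim: whenever the current pattern satisfies \eqref{proper233} and some column $c$ has strictly more than $r^{\prime}+1$ ones, there exists a nonzero entry in column $c$ whose deletion preserves \eqref{proper233}.

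To prove the claim I split into two cases. In Case~A, column $c$ has two nonzero entries sharing the same $i$-th coordinate; deleting one of them leaves $m_i(S)$ unchanged for every column subset $S$, since $m_i$ depends only on the set of distinct $i$-th coordinates, and \eqref{proper233} is trivially preserved. In Case~B, the $i$-th coordinates of the nonzero entries of column $c$ are pairwise distinct; let $V$ denote this set of values, so $|V|\geq r^{\prime}+2$. Define $f(S):=m_i(S)-|S|$. Since $m_i$ is the cardinality of a union of sets, it is a submodular set function, and so $f$ is submodular; \eqref{proper233} reads $f(S)\geq r^{\prime}$ for $|S|\geq 1$, and the family $\mathcal{T}$ of tight sets, those with $f(S)=r^{\prime}$, is closed under nonempty intersection by the standard two-line submodularity argument. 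Call $v\in V$ \emph{critical} if some $T_v\in\mathcal{T}$ contains $c$ with $v$ covered in $T_v$ only by $c$. If some $v\in V$ fails to be critical, then deleting the entry of column $c$ with $i$-th coordinate $v$ decreases $m_i(S)$ only on those $S\ni c$ in which $v$ is uniquely covered by $c$; for every such $S$ the inequality $f(S)\geq r^{\prime}$ was strict before the deletion, so \eqref{proper233} survives.

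It remains to rule out the scenario in which every $v\in V$ is critical, and this is the crux. Choosing a witness $T_v\in\mathcal{T}$ for each $v\in V$ and setting $T^\ast=\bigcap_{v\in V}T_v$, closure of $\mathcal{T}$ under intersection gives $T^\ast\in\mathcal{T}$ (nonempty since each $T_v$ contains $c$), and the inclusion $T^\ast\subseteq T_v$ propagates ``only $c$ covers $v$'' from $T_v$ down to $T^\ast$ for every $v\in V$. Hence the $i$-th coordinates covered by $T^\ast\setminus\{c\}$ are disjoint from $V$, so
\[
m_i(T^\ast) \;=\; m_i(T^\ast\setminus\{c\})+|V|.
\]
The set $T^\ast\setminus\{c\}$ is nonempty (otherwise $T^\ast=\{c\}$ would be tight, forcing $|V|=m_i(\{c\})=r^{\prime}+1$, contradicting $|V|\geq r^{\prime}+2$), so \eqref{proper233} applied to $T^\ast\setminus\{c\}$ yields $m_i(T^\ast\setminus\{c\})\geq|T^\ast|-1+r^{\prime}$. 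Combined with the tightness identity $m_i(T^\ast)=|T^\ast|+r^{\prime}$, this forces $|T^\ast|+r^{\prime}\geq|T^\ast|+2r^{\prime}+1$, i.e.\ $0\geq r^{\prime}+1$, the required contradiction.

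The hardest step, and the real content of the argument, is the combinatorial packaging that combines the criticality witnesses $T_v$ into a single tight set $T^\ast$ in which every $v\in V$ remains uniquely covered by $c$. Both ingredients are essential: submodularity of $m_i$ is what preserves tightness under intersection of the $T_v$'s, while the monotonicity of ``only $c$ covers $v$'' under taking subsets of $T_v$ containing $c$ is what propagates unique coverage from each $T_v$ to $T^\ast$. Once this step is secured, a two-term double count delivers the contradiction, the greedy trimming terminates after finitely many steps, and the resulting $\widetilde{\mathbf{\breve{\Omega}}}^{\prime}_{(j)}$ has exactly $r^{\prime}+1$ ones per column while still satisfying \eqref{proper233}.
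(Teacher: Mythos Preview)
Your argument is correct. The greedy trimming framework is the natural approach, and the crux---showing that a column with more than $r'+1$ ones always admits a safely deletable entry---is handled cleanly. Case~A is immediate. In Case~B your use of the submodularity of $m_i$ (as a coverage function) to obtain closure of the tight family under intersections containing $c$, followed by the propagation of unique coverage from each witness $T_v$ down to $T^\ast$, is exactly the right mechanism, and the final double count $m_i(T^\ast)=m_i(T^\ast\setminus\{c\})+|V|$ combined with tightness of $T^\ast$ and \eqref{proper233} for $T^\ast\setminus\{c\}$ yields the contradiction as stated. One cosmetic remark: the sentence ``so no column can ever drop below $r'+1$ ones during the reduction'' is superfluous, since you only ever delete from columns with strictly more than $r'+1$ ones; but it does no harm.

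As for comparison: the paper does not actually prove this lemma. It is imported verbatim as Lemma~8 of \cite{ashraphijuo} (the companion Tucker paper), so there is no in-paper argument to set yours against. Your proof is self-contained and supplies what the present paper omits; the submodularity/tight-set lattice viewpoint you use is the standard and arguably cleanest route to this type of ``sparsification while preserving a Hall-type inequality'' statement.
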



\begin{lemma}\label{lemman3}
Assume that $1 \leq i \leq j \leq d-1$ and consider $r^{\prime}$ disjoint sets $\widetilde{\mathbf{\Omega}}^{{\prime}}_{{(j)}_k}$, each with $n -r^{\prime}_i$ columns of $\widetilde{\mathbf{\Omega}}_{(j)}$ for $1 \leq k \leq r^{\prime}$, where $r^{\prime}_i \leq \frac{n}{6}$ and $r^{\prime} \leq r \leq \sum_{k=1}^{d-1} r_{k-1}r_{k}$. Let $\widetilde{\mathbf{\Omega}}^{{\prime}}_{{(j)}}$ denote the union of all $r^{\prime}$ sets of columns $\widetilde{\mathbf{\Omega}}^{{\prime}}_{{(j)}_k}$'s, and therefore it includes $r^{\prime}(n-r^{\prime}_i)$ columns. Assume that $n > \max \{ 200 , \sum_{k=1}^{d-1} r_{k-1}r_{k} \} $, and also each column of $\widetilde{\mathbf{\Omega}}_{(j)}$ includes at least $l$ nonzero entries, where 
\begin{eqnarray}\label{minlforset3}
l > \max\left\{27 \ \log \left( \frac{n}{\epsilon} \right) + 9 \ \log \left( \frac{2r}{\epsilon} \right) + 18, 6r_i^{\prime}\right\}. 
\end{eqnarray}
Then, there exists an $n^j \times r^{\prime}(n -r^{\prime}_i)$ matrix $\widetilde{\breve{\mathbf{\Omega}}}^{\prime}_{(j)}$ such that each column has exactly $r^{\prime}_i+1$ entries equal to one, and if $\widetilde{\breve{\mathbf{\Omega}}}^{\prime}_{(j)}(x,y)=1$ then we have $\widetilde{\mathbf{\Omega}}^{\prime}_{(j)}(x,y)=1$ and also it satisfies the following property: with probability at least $1-\frac{\epsilon r^{\prime}}{r}$, every subset $\widetilde{\breve{\mathbf{\Omega}}}^{\prime \prime}_{(j)}$ of columns of $\widetilde{\breve{\mathbf{\Omega}}}^{\prime}_{(j)}$ satisfies the following inequality
\begin{eqnarray}\label{proper3}
r^{\prime} \left(m_{i}(\breve{{\Omega}}^{\prime \prime}) -r^{\prime}_i \right) \geq t,
\end{eqnarray}
where $t$ is the number of columns of $\widetilde{\breve{\mathbf{\Omega}}}^{\prime \prime}_{(j)}$ and $\breve{\Omega}^{\prime \prime}$ is the corresponding tensor such that $\widetilde{\breve{\mathbf{\Omega}}}^{\prime \prime}_{(j)}$ is the $j$-th unfolding of $\breve{\Omega}^{\prime \prime}$.
\end{lemma}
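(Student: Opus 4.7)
The plan is to treat each of the $r^{\prime}$ column groups $\widetilde{\mathbf{\Omega}}^{\prime}_{(j)_k}$ independently using Lemma \ref{lemman2}, combine them via a union bound, pass to the constraint–style tensor via Lemma \ref{Omega} applied group by group, and finally glue the group–level inequalities through a maximum argument to obtain the joint bound \eqref{proper3}.

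First I would fix $k \in \{1,\dots,r^{\prime}\}$ and apply Lemma \ref{lemman2} to the group $\widetilde{\mathbf{\Omega}}^{\prime}_{(j)_k}$ of $n-r^{\prime}_i$ columns. The hypotheses of that lemma are satisfied because $r^{\prime}_i \leq n/6$, $n > \max\{200,\sum_{k=1}^{d-1} r_{k-1}r_k\}$, the per–column sample lower bound \eqref{minlforset3} is exactly the bound \eqref{minlforset2pne}, and $r^{\prime}\leq r$ so the $2r$ appearing in the logarithm is controlled. Consequently, with probability at least $1-\epsilon/r$, every subset of $t_k$ columns of $\widetilde{\mathbf{\Omega}}^{\prime}_{(j)_k}$ satisfies $m_i(\Omega^{\prime \prime}_k)-r^{\prime}_i\geq t_k$. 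A union bound over the $r^{\prime}$ groups yields this simultaneously for all groups with probability at least $1-\epsilon r^{\prime}/r$.

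Next, on the event that all $r^{\prime}$ groups satisfy the above property, I would invoke Lemma \ref{Omega} separately on each $\widetilde{\mathbf{\Omega}}^{\prime}_{(j)_k}$ (with the same $r^{\prime}_i$) to extract a subpattern $\widetilde{\breve{\mathbf{\Omega}}}^{\prime}_{(j)_k}$ whose columns each have exactly $r^{\prime}_i+1$ ones, whose nonzero positions are inherited from $\widetilde{\mathbf{\Omega}}^{\prime}_{(j)_k}$, and which still obeys $m_i(\breve{\Omega}^{\prime\prime}_k)-r^{\prime}_i\geq t_k$ for every subset of $t_k$ of its columns. I then take $\widetilde{\breve{\mathbf{\Omega}}}^{\prime}_{(j)}$ to be the column–wise union of these $r^{\prime}$ matrices; by construction it has $r^{\prime}(n-r^{\prime}_i)$ columns, each with exactly $r^{\prime}_i+1$ ones, and its nonzero support is contained in that of $\widetilde{\mathbf{\Omega}}^{\prime}_{(j)}$, as required.

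The remaining step, which I expect to be the main (though short) obstacle, is to deduce the collective inequality \eqref{proper3} from the $r^{\prime}$ group–wise inequalities. Given any subset $\widetilde{\breve{\mathbf{\Omega}}}^{\prime\prime}_{(j)}$ of $t$ columns, partition them according to which group they come from, writing $t=t_1+\cdots+t_{r^{\prime}}$, and let $\breve{\Omega}^{\prime\prime}_k$ be the tensor corresponding to the $t_k$ columns in group $k$. The key observation is that the groups differ only in the $d$-th coordinate of the constraint tensor, so the $i$-th matricization row indices of the individual $\breve{\Omega}^{\prime\prime}_k$ all live in the same ambient $n$-element index set; thus $m_i(\breve{\Omega}^{\prime\prime})\geq \max_k m_i(\breve{\Omega}^{\prime\prime}_k)$. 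Picking $k^{\star}$ with $t_{k^{\star}}=\max_k t_k\geq t/r^{\prime}$, the group–wise bound gives $m_i(\breve{\Omega}^{\prime\prime})\geq m_i(\breve{\Omega}^{\prime\prime}_{k^{\star}})\geq r^{\prime}_i+t_{k^{\star}}\geq r^{\prime}_i+t/r^{\prime}$, which rearranges to \eqref{proper3}. This completes the argument; the probability bound $1-\epsilon r^{\prime}/r$ is precisely the one from the union bound above.
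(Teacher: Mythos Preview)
Your argument is correct and follows essentially the same route as the paper: apply Lemma~\ref{lemman2} to each of the $r'$ groups, pass to the thinned pattern via Lemma~\ref{Omega}, take the column-wise union, and obtain \eqref{proper3} by bounding $t\le r'\max_k t_k$ together with $m_i(\breve{\Omega}'')\ge m_i(\breve{\Omega}''_{k^\star})$. The only cosmetic remark is that your justification ``the groups differ only in the $d$-th coordinate of the constraint tensor'' is unnecessary and slightly off---the inequality $m_i(\breve{\Omega}'')\ge m_i(\breve{\Omega}''_k)$ holds simply because $\breve{\Omega}''_k$ is a sub-collection of the columns of $\breve{\Omega}''$.
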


\begin{proof}
Consider any subset $\widetilde{{\mathbf{\Omega}}}^{\prime \prime}_{{(j)}_k}$ of columns of $\widetilde{{\mathbf{\Omega}}}^{ \prime}_{{(j)}_k}$ and consider its corresponding tensor ${\Omega}^{\prime \prime}_k$ such that the $j$-th unfolding of ${\Omega}^{\prime \prime}_j$ is $\widetilde{{\mathbf{\Omega}}}^{\prime \prime}_{{(j)}_k}$. First of all, according to Lemma \ref{lemman2}, ${\Omega}^{\prime \prime}_k$ satisfies the following inequality with probability at least $1-\frac{\epsilon}{r}$
\begin{eqnarray}
m_{i}({\Omega}^{\prime \prime}_k) - r_i^{\prime} \geq t_k, \label{proper5}
\end{eqnarray}
where $t_k$ is the number of columns of $\widetilde{{\mathbf{\Omega}}}^{\prime \prime}_{{(j)}_k}$.

According to Lemma \ref{Omega}, there exists an $n^j \times (n -r_i^{\prime})$ matrix $\widetilde{\breve{\mathbf{\Omega}}}^{\prime}_{{(j)}_k}$ such that each column has exactly $r_i^{\prime}+1$ entries equal to one, and if $\widetilde{\breve{\mathbf{\Omega}}}^{\prime}_{{(j)}_k}(x,y)=1$ then we have $\widetilde{{\mathbf{\Omega}}}^{{\prime}}_{{(j)}_{k}}(x,y)=1$ and also it satisfies the following property: with probability at least $1-\frac{\epsilon}{r}$, every subset $\widetilde{\breve{\mathbf{\Omega}}}^{\prime \prime}_{{(j)}_k}$ of columns of $\widetilde{\breve{\mathbf{\Omega}}}^{\prime}_{{(j)}_k}$ satisfies \eqref{proper5}. Define the union of the columns of $\widetilde{\breve{\mathbf{\Omega}}}^{\prime}_{{(j)}_k}$'s as $\widetilde{\breve{\mathbf{\Omega}}}^{\prime}_{{(j)}} = [\widetilde{\breve{\mathbf{\Omega}}}^{\prime}_{{(j)}_1}|\widetilde{\breve{\mathbf{\Omega}}}^{\prime}_{{(j)}_2}|\dots|\widetilde{\breve{\mathbf{\Omega}}}^{\prime}_{{(j)}_{r^{\prime}}}]$. In order to complete the proof it suffices to show that with probability at least $1-\epsilon$, the tensor $\breve{{\Omega}}^{\prime \prime}$ corresponding to any subset $\widetilde{\breve{\mathbf{\Omega}}}^{\prime \prime}_{{(j)}}$ of columns of $\widetilde{\breve{\mathbf{\Omega}}}^{\prime}_{{(j)}}$ satisfies \eqref{proper3}.

Let $\widetilde{\breve{\mathbf{\Omega}}}^{\prime \prime}_{{(j)}_k}$ denote those columns of $\widetilde{\breve{\mathbf{\Omega}}}^{\prime \prime}_{{(j)}}$ that belong to $\widetilde{\breve{\mathbf{\Omega}}}^{\prime}_{{(j)}_k}$ and define $s_k$ as the number of columns of $\widetilde{\breve{\mathbf{\Omega}}}^{\prime \prime}_{{(j)}_k}$, $1 \leq k \leq r^{\prime}$, and define $s$ as the number of columns of $\widetilde{\breve{\mathbf{\Omega}}}^{\prime \prime}_{{(j)}}$. Without loss of generality, assume that $s_1 \geq s_2 \geq \dots \geq s_{r^{\prime}}$. Also, assume that all ${\Omega}^{\prime \prime}_k$'s satisfy \eqref{proper5}. Hence, we have
\begin{eqnarray}\label{proper7} 
s = \sum_{k=1}^{r^{\prime}} s_k \leq r^{\prime} s_1 \leq r^{\prime} \left( m_{i}({\breve{\Omega}}^{\prime \prime}_{1}) -r_i^{\prime} \right) \leq r^{\prime} \left(m_{i}(\breve{\Omega}^{\prime \prime}) -r_i^{\prime} \right).
\end{eqnarray}
Observe that each ${\Omega}^{\prime \prime}_k$ satisfies \eqref{proper5} with probability at least $1-\frac{\epsilon}{r}$. Therefore, all ${\Omega}^{\prime \prime}_k$'s ($1 \leq k \leq r^{\prime}$) satisfy \eqref{proper5} with probability at least $1-\frac{\epsilon r^{\prime}}{r}$.
\end{proof}

Finally, the following theorem exploits Lemma \ref{lemman3} and Theorem \ref{detconfinTT} to obtain a bound on the number of sampled entries to ensure finite completability of the sampled tensor, with high probability.

\begin{theorem}\label{mainthsam}
Define $m= \sum_{k=1}^{d-2} r_{k-1}r_{k}$, $M = n \sum_{k=1}^{d-2} r_{k-1}r_k -\sum_{k=1}^{d-2} r_k^2$ and $r^{\prime}= \max \left\{  \frac{r_1}{r_0}  , \dots,  \frac{r_{d-2}}{r_{d-3}} \right\}$. Assume that $n > \max\{m,200\} $ and $r^{\prime} \leq \min\{\frac{n}{6},  r_{d-2}\}$ hold. Moreover, assume that each column of $\widetilde{\mathbf{\Omega}}_{(d-2)}$ includes at least $l$ nonzero entries, where 
\begin{eqnarray}\label{minlforset3th}
l > \max\left\{27 \ \log \left( \frac{n}{\epsilon} \right) + 9 \ \log \left( \frac{2M}{\epsilon} \right) + 18, 6r_{d-2}\right\}. 
\end{eqnarray}
Then, with probability at least $1-\epsilon$, for almost every $\mathcal{U} \in \mathbb{R}^{\overbrace {n \times \dots \times n}^{d}}$, there exist only finitely many completions of the sampled tensor $\mathcal{U}$ with separation rank $(r_1,r_2,\dots,r_{d-1})$.
\end{theorem}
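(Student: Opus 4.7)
The plan is to verify both conditions of Theorem \ref{detconfinTT} with probability at least $1-\epsilon$ and then invoke it. As a preliminary observation, the per-column sample count (\ref{minlforset3th}) dominates $r_{d-1}$, so Assumption $1$ holds automatically and the constraint tensor $\breve{\Omega}$ is well defined.

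The heart of the argument is to build a subtensor $\breve{\Omega}'$ of $\breve{\Omega}$ having $M$ columns in the last dimension and satisfying the subset inequality (\ref{ineqp}). I would construct $\breve{\Omega}'$ as a disjoint union of $d-2$ blocks, one per rank component $r_i$, $i=1,\dots,d-2$, with the $i$-th block contributing $n r_{i-1}r_i - r_i^2$ columns. For each $i$, I invoke Lemma \ref{lemman3} with $j=d-2$, $r_i^{\prime}=r_i$, and with the number of disjoint column groups chosen large enough to produce the required column count; the hypothesis (\ref{minlforset3th}) is calibrated precisely so that each such invocation has its preconditions met (taking $r = M$), and its conclusion yields, for every subset of $t_i$ columns of the $i$-th block, the bound $r^{\prime}\bigl(m_i(\breve{\Omega}^{\prime\prime (i)}) - r_i\bigr) \geq t_i$.

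Given an arbitrary subtensor $\breve{\Omega}'' \subseteq \breve{\Omega}'$ with $t$ columns, I would write $t = \sum_{i=1}^{d-2} t_i$ according to the block partition. Since $m_i(\breve{\Omega}'') \geq m_i(\breve{\Omega}^{\prime\prime (i)})$, the block-level bound translates to $(r_{i-1}r_i m_i(\breve{\Omega}'') - r_i^2)^+ \geq t_i$, and summing over $i$ recovers (\ref{ineqp}). Condition (i) of Theorem \ref{detconfinTT} is immediate once $\breve{\Omega}'$ is assembled, and a union bound over the $d-2$ invocations of Lemma \ref{lemman3}, each failing with probability at most $\epsilon r^{\prime}/M$, yields aggregate failure probability at most $\epsilon$. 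The main obstacle is the combinatorial bookkeeping in the block construction: one must simultaneously hit the exact column count $n r_{i-1} r_i - r_i^2$ in each block, keep the blocks' columns of $\widetilde{\mathbf{\Omega}}_{(d-2)}$ disjoint so the union bound is clean, and ensure that the group count $r^{\prime}$ inside Lemma \ref{lemman3} is large enough to absorb the $r_i^2$ truncation in (\ref{ineqp}) for every $i$. The definition $r^{\prime} = \max\{r_1/r_0,\dots,r_{d-2}/r_{d-3}\}$ and the constraint $r^{\prime} \leq \min\{n/6, r_{d-2}\}$ in the hypothesis are motivated precisely by these requirements.
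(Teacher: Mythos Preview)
Your block-by-block strategy is the paper's, but you are missing one essential preliminary move and your Lemma~\ref{lemman3} parameters are misaligned.

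The paper does \emph{not} apply Theorem~\ref{detconfinTT} to the $d$-way tensor $\mathcal{U}$. It first merges the last two dimensions to form a $(d-1)$-way tensor $\mathcal{U}' \in \mathbb{R}^{n\times\cdots\times n\times n^2}$, notes that finite completability of $\mathcal{U}'$ with rank $(r_1,\dots,r_{d-2})$ forces finite completability of $\mathcal{U}$ with rank $(r_1,\dots,r_{d-1})$, and only then invokes Theorem~\ref{detconfinTT} on $\mathcal{U}'$. This is why the sum defining $M$ and the block construction stop at $d-2$: condition~(i) for $\mathcal{U}'$ needs exactly $\sum_{i=1}^{d-2}(r_{i-1}nr_i-r_i^2)$ columns, whereas for $\mathcal{U}$ it would need $\sum_{i=1}^{d-1}$ and drag in $r_{d-1}$. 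The merging is also what makes the hypothesis on $\widetilde{\mathbf{\Omega}}_{(d-2)}$ (rather than $\widetilde{\mathbf{\Omega}}_{(d-1)}$) the relevant one, and what makes the constraint-tensor columns for $\mathcal{U}'$ live in $\mathbb{R}^{n^{d-2}}$ so that Lemma~\ref{lemman3} with $j=d-2$ applies. As you have it, your $d-2$ blocks fall short of condition~(i) of Theorem~\ref{detconfinTT} applied to $\mathcal{U}$ itself.

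Separately, inside Lemma~\ref{lemman3} the paper takes $r_i' = \lceil r_i/r_{i-1}\rceil$ (not $r_i' = r_i$) and the number of groups equal to $r_{i-1}r_i$; then the group count times $(n-r_i')$ gives $r_{i-1}r_i n - r_i^2$ columns and the conclusion reads $r_{i-1}r_i\,m_i - r_i^2 \geq t_i$, which is precisely the $i$-th summand in~(\ref{ineqp}). With your choice $r_i'=r_i$ the output is $r'(m_i - r_i)\geq t_i$, which does not translate to~(\ref{ineqp}), and the hypothesis $r_i'\leq n/6$ is not implied by the theorem's assumptions. The theorem's $r'=\max_i r_i/r_{i-1}$ bounds the $r_i'$ parameters, not the number of groups; the requirement $r'\leq r_{d-2}$ is used so that each column produced (with $r_i'+1\leq r_{d-2}+1$ ones) can be padded up to a genuine column of the constraint tensor of $\mathcal{U}'$, which has exactly $r_{d-2}+1$ ones per column.
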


\begin{proof}
Define the $(d-1)$-way tensor $\mathcal{U}^{\prime} \in \mathbb{R}^{\footnotesize {\overbrace {n \times \dots \times n}^{d-2}} \times {n}^2}$ which is obtained through merging the $(d-1)$-th and $d$-th dimensions of the tensor $\mathcal{U}$. Observe that the finiteness of the number of completions of the tensor $\mathcal{U}^{\prime}$ with rank vector $(r_1,r_2,\dots,r_{d-2})$  ensures the finiteness of the number of completions of the tensor $\mathcal{U}$ with rank vector $(r_1,r_2,\dots,r_{d-1})$. According to Theorem \ref{detconfinTT}, it suffices to show that with probability at least $1-\epsilon$, conditions (i) and (ii) in the statement of Theorem \ref{detconfinTT} hold for the tensor $\mathcal{U}^{\prime}$ with rank vector $(r_1,r_2,\dots,r_{d-2})$. 

Note that the assumption $m<n$ results that $M<n^2$, and therefore $\widetilde{\mathbf{\Omega}}_{(d-2)}$ has least $ M$ columns. Hence, for any $1 \leq i\leq d-2$ we can choose $r_{i-1}r_in-r_i^2$ arbitrary columns of $\widetilde{\mathbf{\Omega}}_{(d-2)}$ and denote it by $\widetilde{\mathbf{\Omega}}^{\prime}_{{(d-2)}_i}$ such that $\widetilde{\mathbf{\Omega}}^{\prime}_{{(d-2)}_i}$'s are disjoint sets of columns. Define $r_i^{\prime} = \lceil \frac{r_i}{r_{i-1}} \rceil$ and note that the assumption $r^{\prime} \leq r_{d-2}$ results that $r_i^{\prime} \leq r_{d-2}$. As a result, the assumption $r^{\prime}_i \leq r_{d-2}$ and \eqref{minlforset3th} results that $l > \max\left\{27 \ \log \left( \frac{n}{\epsilon} \right) + 9 \ \log \left( \frac{2M}{\epsilon} \right) + 18, 6r_{i}^{\prime}\right\}$. Therefore, according to Lemma \ref{lemman3}, there exists a matrix $\widetilde{\breve{\mathbf{\Omega}}}^{\prime}_{{(d-2)}_i}$ with $n^{d-2}$ rows and $  r_{i-1}r_i(n -\frac{r_i}{r_{i-1}}) = r_{i-1}r_in-r_i^2$ columns such that: each column has exactly $r^{\prime}_i+1$ entries equal to one, and if $\widetilde{\breve{\mathbf{\Omega}}}^{\prime}_{{(d-2)}_i}(x,y)=1$ then we have $\widetilde{\mathbf{\Omega}}^{\prime}_{{(d-2)}_i}(x,y)=1$ and also it satisfies the following property: with probability at least $1 - \frac{\epsilon r_{i-1} r_i}{M}$, every subset $\widetilde{\breve{\mathbf{\Omega}}}^{\prime \prime}_{{(d-2)}_i}$ of columns of $\widetilde{\breve{\mathbf{\Omega}}}^{\prime}_{{(d-2)}_i}$ satisfies the following
\begin{eqnarray}\label{proper3la}
r_{i-1}r_i m_{i}(\breve{{\Omega}}^{\prime \prime}_i) -r_i^2 \geq t,
\end{eqnarray}
where $t$ is the number of columns of $\widetilde{\breve{\mathbf{\Omega}}}^{\prime \prime}_{{(d-2)}_i}$ and $\breve{\Omega}^{\prime \prime}_i$ is the corresponding tensor such that $\widetilde{\breve{\mathbf{\Omega}}}^{\prime \prime}_{{(d-2)}_i}$ is the $(d-2)$-th unfolding of $\breve{\Omega}^{\prime \prime}_i$. Moreover, as we have $r_i^{\prime} \leq r_{d-2}$, by changing $r_{d-2} - r_i^{\prime}$ entries from zero to one at each column, we can assume that $\widetilde{\breve{\mathbf{\Omega}}}^{\prime}_{{(d-2)}_i}$ has exactly $r_{i-1}r_in-r_i^2$ columns of the $(d-2)$-th unfolding of the constraint tensor $\mathbf{\breve{\Omega}}$ and satisfies the above properties. Let ${\breve{\Omega}}^{\prime}_i$ denote the subtensor of the constraint tensor corresponding to $\widetilde{\breve{\mathbf{\Omega}}}^{\prime}_{{(d-2)}_i}$.

Let $\widetilde{\breve{\mathbf{\Omega}}}^{\prime}_{{(d-2)}}=[\widetilde{\breve{\mathbf{\Omega}}}^{\prime}_{{(d-2)}_1}| \dots | \widetilde{\breve{\mathbf{\Omega}}}^{\prime}_{{(d-2)}_{d-2}}]$ denote the union of $\widetilde{\breve{\mathbf{\Omega}}}^{\prime}_{{(d-2)}_i}$'s and ${\breve{\Omega}}^{\prime}$ denote its corresponding subtensor of the constraint tensor. Hence, ${\breve{\Omega}}^{\prime}$ satisfies condition (i) in the statement of Theorem \ref{detconfinTT} for tensor $\mathcal{U}^{\prime}$ with rank vector $(r_1,r_2,\dots,r_{d-2})$ since $\widetilde{\breve{\mathbf{\Omega}}}^{\prime}_{{(d-2)}}$ has $\sum_{k=1}^{d-2} r_{k-1}r_kn -\sum_{k=1}^{d-2} r_k^2$ columns. Furthermore, with probability at least $1- \frac{\epsilon}{M} \sum_{k=1}^{d-2} r_{k-1}r_k = 1 - \epsilon$, any subtensor ${\breve{\Omega}}^{\prime \prime} \in \mathbb{R}^{\footnotesize {\overbrace {n \times \dots \times n}^{d-2}} \times t}$ of tensor ${\breve{\Omega}}^{\prime}$ satisfies 
\begin{eqnarray}\label{proper4la}
\sum_{i=1}^{d-2} \left( r_{i-1}r_i m_{i}(\breve{{\Omega}}^{\prime \prime}) -r_i^2 \right)^{+} \geq \sum_{i=1}^{d-2} \left( r_{i-1}r_i m_{i}(\breve{{\Omega}}^{\prime \prime}_i) -r_i^2 \right)^{+} \geq \sum_{i=1}^{d-2} t_i = t,
\end{eqnarray}
where  ${\breve{\Omega}}^{\prime \prime}_i \in \mathbb{R}^{\footnotesize {\overbrace {n \times \dots \times n}^{d-2}} \times t_i}$ are such that ${\breve{\Omega}}^{\prime \prime}=[{\breve{\Omega}}^{\prime \prime}_1| \dots | {\breve{\Omega}}^{\prime \prime}_{d-2}]$ and ${\breve{\Omega}}^{\prime \prime}_i$ is a subtensor of ${\breve{\Omega}}^{\prime}_i$, $1 \leq i \leq d-2$. The proof is complete as condition (ii) in the statement of Theorem \ref{detconfinTT} holds.
\end{proof}

\begin{remark}\label{remsam}
A tensor $\mathcal{U}$ that satisfies the properties in the statement of Theorem \ref{mainthsam} requires 
\begin{eqnarray}\label{rremsam}
n^2 \max\left\{27 \ \log \left( \frac{n}{\epsilon} \right) + 9 \ \log \left( \frac{2M}{\epsilon} \right) + 18, 6r_{d-2}\right\}
\end{eqnarray}
samples to be finitely completable with probability at least $1-\epsilon$ since $\widetilde{\mathbf{\Omega}}_{(d-2)}$ has $n^2$ columns, with $M = n \sum_{k=1}^{d-2} r_{k-1}r_k -\sum_{k=1}^{d-2} r_k^2$, in contrast to the number of samples required by the unfolding approach given in Remark \ref{remsam0}.
\end{remark}

The following lemma is taken from \cite{ashraphijuo} and is used in Lemma \ref{probsamTTfin} to derive a lower bound on the sampling probability that results \eqref{minlforset3th} with high probability.

\begin{lemma}\label{azumares}
Consider a vector with $n$ entries where each entry is observed with  probability  $p$  independently from the other entries. If $p > p^{\prime} = \frac{k}{n} + \frac{1}{\sqrt[4]{n}}$, then with probability  at least $\left(1-\exp(-\frac{\sqrt{n}}{2})\right)$, more than $k$ entries are observed.
\end{lemma}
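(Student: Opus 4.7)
The plan is to view this as a standard concentration bound for the sum of independent Bernoulli random variables, which is exactly the Azuma--Hoeffding setting alluded to by the lemma's label. Let $X_{1},\dots,X_{n}$ be the indicator variables of the individual observation events, i.i.d.\ Bernoulli$(p)$, and set $S=\sum_{i=1}^{n}X_{i}$, so that $\mathbb{E}[S]=np$ and the event ``more than $k$ entries are observed'' is just $\{S>k\}$. It therefore suffices to upper bound $\mathbb{P}(S\le k)$ by $\exp(-\sqrt{n}/2)$.

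First I would translate the hypothesis on $p$ into a gap between the mean and the threshold $k$. Multiplying $p>k/n+n^{-1/4}$ by $n$ gives $np-k>n^{3/4}$, which is the deviation distance to be fed into the tail bound. Since $S$ is a sum of $n$ independent $[0,1]$-valued random variables, Azuma's inequality applied to the natural Doob martingale $M_{i}=\mathbb{E}[S\mid X_{1},\dots,X_{i}]$ (which has bounded differences $|M_{i}-M_{i-1}|\le 1$) yields, equivalently to Hoeffding's inequality,
\[
\mathbb{P}\bigl(S\le np-t\bigr)\le \exp\!\left(-\frac{2t^{2}}{n}\right)\quad\text{for every } t>0.
\]
Setting $t=n^{3/4}$, which is admissible because the hypothesis gives $k<np-n^{3/4}$, produces
\[
\mathbb{P}(S\le k)\le \mathbb{P}\bigl(S\le np-n^{3/4}\bigr)\le \exp\!\left(-\frac{2(n^{3/4})^{2}}{n}\right)=\exp(-2\sqrt{n}).
\]

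Since $\exp(-2\sqrt{n})\le \exp(-\sqrt{n}/2)$ for all $n\ge 1$, this already implies $\mathbb{P}(S>k)\ge 1-\exp(-\sqrt{n}/2)$, completing the argument. There is no substantial obstacle: the proof is a packaging result whose only real content is choosing the deviation parameter $t=n^{3/4}$ so that the $n^{-1/4}$ slack in the hypothesis on $p$ is exactly what is needed to open up a $\sqrt{n}$-sized exponent in the Azuma/Hoeffding tail bound. One could equivalently invoke the multiplicative Chernoff bound for the Binomial$(n,p)$ distribution with essentially the same choice of gap and obtain the same conclusion.
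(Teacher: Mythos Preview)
Your argument is correct: the hypothesis $p>k/n+n^{-1/4}$ gives a deviation $np-k>n^{3/4}$, and plugging $t=n^{3/4}$ into the one-sided Hoeffding/Azuma bound $\mathbb{P}(S\le np-t)\le\exp(-2t^{2}/n)$ yields $\exp(-2\sqrt{n})\le\exp(-\sqrt{n}/2)$, which is exactly the claim. The paper itself does not prove this lemma here---it simply imports it from \cite{ashraphijuo}---so there is no in-paper proof to compare against; your Azuma/Hoeffding derivation is the standard one and is precisely what the lemma's label suggests.
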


\begin{lemma}\label{probsamTTfin}
Define $m= \sum_{k=1}^{d-2} r_{k-1}r_{k}$, $M = n \sum_{k=1}^{d-2} r_{k-1}r_k -\sum_{k=1}^{d-2} r_k^2$ and $r^{\prime}= \max \left\{  \frac{r_1}{r_0}  , \dots,  \frac{r_{d-2}}{r_{d-3}} \right\}$. Assume that $n > \max\{m,200\} $ and $r^{\prime} \leq \min\{\frac{n}{6},  r_{d-2}\}$ hold. Moreover, assume that the sampling probability satisfies
\begin{eqnarray}\label{probsampboundfin}
p > \frac{1}{n^{d-2}} \max\left\{27 \ \log \left( \frac{n}{\epsilon} \right) + 9 \ \log \left( \frac{2M}{\epsilon} \right) + 18, 6r_{d-2}\right\} + \frac{1}{\sqrt[4]{n^{d-2}}}
\end{eqnarray}
Then, with probability at least $ (1- \epsilon) \left( 1-\exp(-\frac{\sqrt{n^{d-2}}}{2}) \right)^{n^2}$, $\mathcal{U}$ is finitely completable.
\end{lemma}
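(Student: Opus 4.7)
The plan is to combine Theorem \ref{mainthsam} (a deterministic per-column sample count suffices for finite completability with probability $1-\epsilon$) with Lemma \ref{azumares} (which converts an independent Bernoulli sampling model into a per-column lower bound on the number of observations with high probability).

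First I would observe that each column of $\widetilde{\mathbf{\Omega}}_{(d-2)}$ corresponds to a vector of length $n^{d-2}$ whose entries are independently sampled with probability $p$, and there are exactly $n^2$ such columns. Set
\[
l = \max\left\{27 \log \left( \frac{n}{\epsilon} \right) + 9 \log \left( \frac{2M}{\epsilon} \right) + 18, \ 6r_{d-2}\right\},
\]
which is precisely the quantity appearing in \eqref{minlforset3th}. The hypothesis \eqref{probsampboundfin} has been engineered so that $p > \frac{l}{n^{d-2}} + \frac{1}{\sqrt[4]{n^{d-2}}}$. Therefore Lemma \ref{azumares}, applied to a single column of $\widetilde{\mathbf{\Omega}}_{(d-2)}$ with ambient dimension $n^{d-2}$ and threshold $l$, yields that this particular column contains more than $l$ observed entries with probability at least $1 - \exp(-\tfrac{\sqrt{n^{d-2}}}{2})$.

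Next I would invoke independence across columns: the $n^2$ columns of $\widetilde{\mathbf{\Omega}}_{(d-2)}$ correspond to disjoint subtensors of $\mathcal{U}$ (indexed by distinct pairs of $(d-1)$-th and $d$-th coordinates), and the entries are sampled independently. Hence the events that each individual column contains more than $l$ nonzero entries are mutually independent, and the probability that every column of $\widetilde{\mathbf{\Omega}}_{(d-2)}$ satisfies the per-column bound is at least $\bigl(1 - \exp(-\tfrac{\sqrt{n^{d-2}}}{2})\bigr)^{n^2}$. Conditioned on this event, the deterministic hypothesis \eqref{minlforset3th} of Theorem \ref{mainthsam} is met, and the remaining assumptions ($n > \max\{m,200\}$ and $r^{\prime} \leq \min\{\tfrac{n}{6}, r_{d-2}\}$) are carried over directly from the statement.

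I would then apply Theorem \ref{mainthsam} to conclude that, on the intersection of the two events, the sampled tensor $\mathcal{U}$ is finitely completable for almost every $\mathcal{U}$ with probability at least $1 - \epsilon$ (over the randomness used inside Theorem \ref{mainthsam}, which is independent from the per-column occupancy events). Combining the two probability bounds multiplicatively gives the claimed lower bound $(1-\epsilon)\bigl(1 - \exp(-\tfrac{\sqrt{n^{d-2}}}{2})\bigr)^{n^2}$. The main subtlety to be careful about is simply matching the constants in \eqref{probsampboundfin} to the threshold $l$ in \eqref{minlforset3th} via Lemma \ref{azumares}; there is no new combinatorial difficulty here, as all the structural work was carried out in Theorem \ref{mainthsam}.
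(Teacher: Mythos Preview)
Your proposal is correct and follows essentially the same route as the paper: apply Lemma \ref{azumares} columnwise to convert the Bernoulli sampling bound \eqref{probsampboundfin} into the per-column occupancy condition \eqref{minlforset3th}, take the product over the $n^2$ columns, and then invoke Theorem \ref{mainthsam}. One small remark: your justification for multiplying the two probability bounds as ``independent randomness'' is not quite the right framing, since the $1-\epsilon$ in Theorem \ref{mainthsam} comes from the \emph{same} random sampling; the multiplication is instead justified by conditioning (Theorem \ref{mainthsam} gives a bound conditional on the per-column event), which yields the same product $(1-\epsilon)\bigl(1-\exp(-\tfrac{\sqrt{n^{d-2}}}{2})\bigr)^{n^2}$.
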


\begin{proof}
According to Lemma \ref{azumares}, assumption \eqref{probsampboundfin} results that each column of $\widetilde{\mathbf{\Omega}}_{(d-2)}$ includes at least $l$ nonzero entries, where $l$ satisfies \eqref{minlforset3th} with probability at least $\left( 1-\exp(-\frac{\sqrt{n^{d-2}}}{2}) \right)$. Therefore, with probability at least $\left( 1-\exp(-\frac{\sqrt{n^{d-2}}}{2}) \right)^{n^2}$, all $n^2$ columns of $\widetilde{\mathbf{\Omega}}_{(d-2)}$ satisfy \eqref{minlforset3th}. Hence, according to Theorem \ref{mainthsam}, with probability at least $ (1- \epsilon) \left( 1-\exp(-\frac{\sqrt{n^{d-2}}}{2}) \right)^{n^2}$, $\mathcal{U}$ is finitely completable.
\end{proof}


\section{Deterministic and Probabilistic Conditions for Unique Completability}\label{secsuni}

As we showed in \cite{ashraphijuo}, for matrix and tensor completion problems, finite completability does not necessarily imply unique completability. Theorem \ref{detconfinTT} and Theorem \ref{mainthsam} characterize the deterministic and probabilistic conditions on the sampling pattern $\Omega$ for finite completability, respectively. In this section, we add some additional mild restrictions on $\Omega$ and the number of samples to ensure unique completability. To this end, we obtain multiple sets of minimally algebraically dependent polynomials and show that the variables involved in these polynomials can be determined uniquely, and therefore entries of $\mathcal{U}$ can be determined uniquely. The following lemma is a re-statement of Lemma $9$ in \cite{ashraphijuo}.

\begin{lemma}\label{lemuniminindpol}
Suppose that Assumption $1$ holds. Let ${\breve{\Omega}}^{\prime} \in \mathbb{R}^{n_1 \times n_2 \times \dots \times n_{d-1} \times t}$ be an arbitrary subtensor of the constraint tensor ${\breve{\Omega}}$. Assume that polynomials in $\mathcal{P}({\breve{\Omega}}^{\prime})$ are minimally algebraically dependent. Then, all variables (unknown entries) of $\mathcal{U}^{(1)}$, $\mathcal{U}^{(2)}, \dots,$ and $\mathcal{U}^{(d-1)}$ that are involved in $\mathcal{P}({\breve{\Omega}}^{\prime})$ can be determined uniquely.
\end{lemma}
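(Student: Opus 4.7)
The plan is to combine Lemma \ref{mindepnumvar} with Fact $3$ (Bernstein's theorem) so that minimality of the dependence forces a unique solution in the $t-1$ variables. First, by Lemma \ref{mindepnumvar}, for almost every $\mathcal{U}$ the set $\mathcal{P}(\breve{\Omega}^{\prime}) = \{p_1,\dots,p_t\}$ involves exactly $t-1$ unknown entries of $\mathcal{U}^{(1)},\dots,\mathcal{U}^{(d-1)}$; denote these variables by $\mathbf{y}=(y_1,\dots,y_{t-1})$, and let $u_j$ denote the observed value forcing $p_j(\mathbf{y})=u_j$. By minimality of the dependence, removing any single polynomial leaves $t-1$ algebraically independent polynomials in the $t-1$ unknowns. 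Fixing any index $i$, Fact $3$ then guarantees that the subsystem $\{p_j(\mathbf{y})=u_j : j\neq i\}$ has only finitely many common roots $\mathbf{y}^{(1)},\dots,\mathbf{y}^{(k)}$ for generic coefficients, and the true completion is one of them.

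It therefore suffices to show that, after also imposing the $i$-th equation $p_i(\mathbf{y})=u_i$, exactly one $\mathbf{y}^{(s)}$ survives. To this end I would use that the $t$ polynomials are algebraically dependent, so there is an irreducible polynomial $Q$ with $Q(p_1,\dots,p_t)\equiv 0$ on the TT parameter space $\mathbb{U}$. Evaluating at each $\mathbf{y}^{(s)}$ gives $Q(u_1,\dots,u_{i-1},p_i(\mathbf{y}^{(s)}),u_{i+1},\dots,u_t)=0$, so every $p_i(\mathbf{y}^{(s)})$ is a root of a univariate polynomial whose coefficients are determined by the other $u_j$'s. Hence there are only finitely many candidate values of $u_i$, and the genuine observed value picks out a single $\mathbf{y}^{(s)}$ provided the values $p_i(\mathbf{y}^{(s)})$ are pairwise distinct as $s$ varies. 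Once this distinctness is established, all $t-1$ variables in $\mathbf{y}$ are determined uniquely, which is exactly the conclusion of the lemma.

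The main obstacle is this final distinctness step: two candidates $\mathbf{y}^{(s)}\neq \mathbf{y}^{(s')}$ could in principle be glued together by $p_i$, and this must be ruled out generically in $\mathcal{U}$. The argument is precisely the one carried out for the Tucker model in Lemma $9$ of \cite{ashraphijuo}: coincidence $p_i(\mathbf{y}^{(s)})=p_i(\mathbf{y}^{(s')})$ cuts out a proper subvariety of the parameter space $\mathbb{U}$, and hence occurs only on a measure-zero set of sampled tensors. What needs to be checked for the TT case is that the specific monomial-product form of $p_j$ in \eqref{TTeq2}, together with the canonical structure of Definition \ref{classp1} that makes the parametrization by $\mathbb{U}$ generically one-to-one (Lemma \ref{pattern}), still yields this proper-subvariety conclusion. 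Since \eqref{TTeq2} is multilinear in the entries of each TT core and the constraint-tensor construction of Section \ref{consttens} preserves the same combinatorial bookkeeping of involved variables as in the Tucker case, the skeleton of the proof in \cite{ashraphijuo} transfers verbatim.
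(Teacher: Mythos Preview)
Your proposal is correct and is essentially the same approach as the paper's: the paper does not give an independent proof of this lemma but simply notes that it is a re-statement of Lemma~9 in \cite{ashraphijuo}, and your sketch recovers precisely that argument (Lemma~\ref{mindepnumvar} gives $t-1$ variables, minimality plus Fact~3 gives finitely many solutions of any $t-1$ equations, and the remaining equation generically isolates the true root). Your explicit acknowledgement that the distinctness step is the only nontrivial point, and that it carries over verbatim from the Tucker analysis in \cite{ashraphijuo} because the TT polynomials in~\eqref{TTeq2} are multilinear and the canonical structure of Lemma~\ref{pattern} makes the parametrization generically injective, matches exactly how the paper treats this lemma.
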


We explain the key point behind the proof of the following theorem. Condition (i) results in $  \sum_{i=1}^{d-1} r_{i-1}n_ir_i -\sum_{i=1}^{d-1} r_i^2 $ algebraically independent polynomials in terms of the entries of $\mathcal{U}^{(1)}$, $\mathcal{U}^{(2)}, \dots,$ and $\mathcal{U}^{(d-1)}$, i.e., results in finite completability. As a result, adding any single polynomial to these $  \sum_{i=1}^{d-1} r_{i-1}n_ir_i -\sum_{i=1}^{d-1} r_i^2 $ algebraically independent polynomials results in a set of algebraically dependent polynomials and according to Lemma \ref{lemuniminindpol} some of the entries of $\mathcal{U}^{(1)}$, $\mathcal{U}^{(2)}, \dots,$ and $\mathcal{U}^{(d-1)}$ can be determined uniquely. Then, condition (ii) results in more polynomials such that all entries of $\mathcal{U}^{(1)}$, $\mathcal{U}^{(2)}, \dots,$ and $\mathcal{U}^{(d-1)}$ can be determined uniquely.

\begin{theorem}\label{detunimain}
Suppose that Assumption $1$ holds. Also, assume that there exist disjoint subtensors ${\breve{\Omega}}^{\prime} \in \mathbb{R}^{n_1 \times n_2 \times \cdots \times n_{d-1} \times M}$ and ${\breve{\Omega}}^{{\prime}^i} \in \mathbb{R}^{n_1 \times n_2 \times \cdots \times n_{d-1} \times M_i}$ (for $1 \leq i \leq d-1$) of the constraint tensor such that the following conditions hold: 

(i)  \ $ M = \sum_{k=1}^{d-1} r_{k-1}n_kr_k -\sum_{k=1}^{d-1} r_k^2 $, and for any $t \in \{1,\dots,M\}$ and any subtensor ${\breve{\Omega}}^{\prime \prime} \in \mathbb{R}^{n_1 \times n_2 \times \cdots \times n_{d-1} \times t}$ of the tensor ${\breve{\Omega}}^{\prime}$, the following inequality holds
\begin{eqnarray}\label{ineqpunicon1}
\sum_{k=1}^{d-1} \left(r_{k-1}r_km_k({\breve{\Omega}}^{\prime \prime}) - r_k^2\right)^{+} \geq t.
\end{eqnarray}

(ii) for each $i \in \{1,\dots,d-1\}$ we have $M_i = n_i - \lfloor \frac{r_i}{r_{i-1}} \rfloor$, and for any $t_i \in \{1,\dots,M_i\}$ and any subtensor ${\breve{\Omega}}^{{\prime \prime}^i} \in \mathbb{R}^{n_1 \times n_2 \times \cdots \times n_{d-1} \times t_i}$ of the tensor ${\breve{\Omega}}^{{\prime}^i}$, the following inequality holds
\begin{eqnarray}\label{ineqpunicon2}
m_i({\breve{\Omega}}^{\prime \prime}) - \frac{r_i}{r_{i-1}} \geq t_i - \frac{r_i}{r_{i-1}}(t_i-M_i+1)^+.
\end{eqnarray}

Then, for almost every $\mathcal{U}$, there exists only a unique tensor that fits in the sampled tensor $\mathcal{U}$, and has TT rank $(r_1,r_2,\dots,r_{d-1})$.
\end{theorem}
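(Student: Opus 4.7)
The plan is to bootstrap from the finite completability granted by condition (i) to full uniqueness, by using the extra polynomials guaranteed by condition (ii) together with Lemma \ref{lemuniminindpol}. Once every unknown entry of $(\mathcal{U}^{(1)}, \ldots, \mathcal{U}^{(d-1)})$ is forced to a unique value, Remark \ref{remass} recovers $\mathcal{U}^{(d)}$ via its degree-one equations, giving uniqueness of $\mathcal{U}$ itself.

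First I would apply Theorem \ref{detconfinTT} directly: condition (i) implies that $\mathcal{P}({\breve{\Omega}}^{\prime})$ contains exactly $M$ algebraically independent polynomials, where $M$ equals the total number of unknown entries in $(\mathcal{U}^{(1)}, \ldots, \mathcal{U}^{(d-1)})$ after removing the pattern entries $(\mathbf{P}_1, \ldots, \mathbf{P}_{d-1})$ from Definition \ref{defstrucpropTT}. Hence $\mathcal{P}({\breve{\Omega}}^{\prime})$ is a maximal algebraically independent family, and appending any single polynomial $q \in \mathcal{P}({\breve{\Omega}}^{{\prime}^i})$ makes the collection algebraically dependent; therefore $q$ lies in some minimally algebraically dependent subset $\mathcal{S}_q \subseteq \mathcal{P}({\breve{\Omega}}^{\prime}) \cup \{q\}$. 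By Lemma \ref{lemuniminindpol}, every variable appearing in $\mathcal{S}_q$ is uniquely determined.

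The core combinatorial step is to show that, as $q$ ranges over $\mathcal{P}({\breve{\Omega}}^{{\prime}^i})$ and $i$ ranges over $\{1, \ldots, d-1\}$, the union of variables appearing in all the sets $\mathcal{S}_q$ exhausts the unknowns of $(\mathcal{U}^{(1)}, \ldots, \mathcal{U}^{(d-1)})$. By Fact 2, each polynomial in $\mathcal{P}({\breve{\Omega}}^{{\prime}^i})$ involves the $r_{i-1} r_i$ entries of exactly one row of $\mathbf{U}^{(i)}_{(2)}$ (with analogous statements for the boundary cases $i=1$ and $i=d$). The pattern $\mathbf{P}_i$, which uses $r_i$ columns of $\mathbf{U}^{(i)}_{(2)}$ chosen so that each of the $r_i$ row-indices of $\mathbf{U}^{(i)}_{(3)}$ is hit exactly once, occupies $\lfloor r_i / r_{i-1} \rfloor$ entire rows of $\mathbf{U}^{(i)}_{(2)}$, so the unknown entries of $\mathcal{U}^{(i)}$ lie in the remaining $M_i = n_i - \lfloor r_i / r_{i-1} \rfloor$ rows. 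The inequality \eqref{ineqpunicon2} is calibrated so that any subfamily of $t_i$ polynomials in $\mathcal{P}({\breve{\Omega}}^{{\prime}^i})$ hits at least $t_i + r_i/r_{i-1}$ distinct rows when $t_i \leq M_i - 1$ and at least $M_i$ rows when $t_i = M_i$. A Hall-type marriage argument then yields a bijection between the $M_i$ polynomials and the $M_i$ unknown rows, so every unknown row of $\mathbf{U}^{(i)}_{(2)}$ appears in at least one $\mathcal{S}_q$ and hence is uniquely determined.

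The main obstacle is the combinatorial coverage in the previous paragraph, specifically verifying that the saturation term $(t_i - M_i + 1)^+$ in \eqref{ineqpunicon2} correctly interacts with the Hall-type matching argument; once $t_i$ exceeds $M_i - 1$, the inequality relaxes by exactly $r_i / r_{i-1}$ per extra polynomial, reflecting the fact that once $M_i - 1$ rows are covered, only one additional row-hit is needed to complete the coverage. A secondary subtlety is ensuring that each $\mathcal{S}_q$ not only touches the new row but in fact contains every one of its unknown entries among its variables, which follows because the polynomial $q$ itself already engages every unknown entry in that row. Iterating the argument over $i = 1, \ldots, d-1$, all entries of $(\mathcal{U}^{(1)}, \ldots, \mathcal{U}^{(d-1)})$ are uniquely determined, and Remark \ref{remass} then uniquely determines $\mathcal{U}^{(d)}$, completing the proof.
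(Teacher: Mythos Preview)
Your high-level strategy matches the paper's exactly: use condition (i) together with Theorem \ref{detconfinTT} to produce $M$ algebraically independent polynomials, adjoin each $q\in\mathcal{P}(\breve{\Omega}^{\prime^i})$ one at a time, extract a minimally algebraically dependent subset, and invoke Lemma \ref{lemuniminindpol} to nail down the variables appearing in $q$. The paper does precisely this and then argues, as you do, that condition (ii) forces the resulting collection of uniquely determined variables to exhaust the unknown entries of each $\mathcal{U}^{(i)}$, after which Remark \ref{remass} handles $\mathcal{U}^{(d)}$.

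Where your argument diverges from the paper is in the combinatorics of the coverage step, and here you have a concrete error. You assert that the pattern $\mathbf{P}_i$ ``occupies $\lfloor r_i/r_{i-1}\rfloor$ entire rows of $\mathbf{U}^{(i)}_{(2)}$,'' so that the unknowns lie in only $M_i=n_i-\lfloor r_i/r_{i-1}\rfloor$ rows. This is not what Definition \ref{defstrucpropTT} says: $\mathbf{P}_i$ is an $r_i\times r_i$ submatrix of $\mathbf{U}^{(i)}_{(2)}\in\mathbb{R}^{n_i\times r_{i-1}r_i}$, so it occupies $r_i$ distinct rows and, in each of those rows, fixes only $r_i$ of the $r_{i-1}r_i$ entries. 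Unless $r_{i-1}=1$, every row of $\mathbf{U}^{(i)}_{(2)}$ still contains unknowns, and your Hall matching between $M_i$ polynomials and $M_i$ ``unknown rows'' no longer accounts for all of them. The paper sidesteps this by reading condition (ii) differently: for $t_i<M_i$ it multiplies \eqref{ineqpunicon2} by $r_{i-1}r_i$ to get $r_{i-1}r_i\,m_i(\breve{\Omega}^{\prime\prime})-r_i^2\ge r_{i-1}r_it_i$, which it interprets as an algebraic-independence certificate (in the spirit of Lemma \ref{charnumindppoly}, restricted to the $i$-th factor) for the $r_{i-1}r_i$ determined-entry relations produced per polynomial; at $t_i=M_i$ this count reaches the full $r_{i-1}r_in_i-r_i^2$ unknowns of $\mathcal{U}^{(i)}$. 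So the right fix is not a row-level Hall argument but the paper's entry-count reinterpretation of \eqref{ineqpunicon2}.
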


\begin{proof}
According to the proof of Theorem \ref{detconfinTT}, $\mathcal{P}(\breve{\Omega}^{\prime})$ includes $M= \sum_{i=1}^{d-1} r_{i-1}n_ir_i -\sum_{i=1}^{d-1} r_i^2$ algebraically independent polynomials which results the finite completability of the sampled tensor $\mathcal{U}$ and let $\{ p_1, \dots , p_M \}$ denote these $M$ algebraically independent polynomials. Also, $M$ is the number of total variables among the polynomials, and therefore adding any polynomial $p_0$ to $\{ p_1, \dots , p_M \}$ results in a set of algebraically dependent polynomials. As a result, there exists a set of polynomials $\mathcal{P}(\breve{\Omega}^{\prime \prime})$ such that $\mathcal{P}(\breve{\Omega}^{\prime \prime}) \subset \{ p_1, \dots , p_M \}$ and also polynomials in $\mathcal{P}(\breve{\Omega}^{\prime \prime}) \cup p_0$ are minimally algebraically dependent polynomials. Hence, according to Lemma \ref{lemuniminindpol}, all the variables involved in the polynomials $\mathcal{P}(\breve{\Omega}^{\prime \prime}) \cup p_0$ can be determined uniquely. As a result, all variables involved in $p_0$ can be determined uniquely.

We can repeat the above procedure for any polynomial $p_0 \in \mathcal{P} ({\breve{\Omega}}^{{\prime}^i})$ to determine the involved variables uniquely with the help of $\{ p_1, \dots , p_M \}$, $i = 1,\dots,d-1$. Hence, we obtain $\sum_{k=1}^{d-1}r_{k-1}r_k $ polynomials but some of the entries of TT decomposition are elements of the $\mathbf{Q}_i$  matrices (in the statement of Lemma \ref{pattern}). In order to complete the proof, we need to show that condition (ii) with the above procedure using $\{ p_1, \dots , p_M \}$ results in obtaining all variables uniquely. In particular, we show that polynomials in $\mathcal{P} ({\breve{\Omega}}^{{\prime}^i})$ result in obtaining all variables of the $i$-th element of TT decomposition uniquely.

Note that since $t_i \leq M_i$, we have $(t_i-M_i+1)^+ = 1$ if $t_i-M_i=0$ and $(t_i-M_i+1)^+ = 0$ otherwise. Hence, if $t_i < M_i$ condition (ii) can be written as 
\begin{eqnarray}\label{ineqpunicon2i}
r_{i-1}r_im_i({\breve{\Omega}}^{\prime \prime}) - {r_i}^2 \geq r_{i-1}r_it_i,
\end{eqnarray}
which certifies the algebraically independence of the corresponding polynomials obtained by the mentioned procedure. Observe that we need $r_{i-1}r_in_i-r_i^2$ algebraically independent polynomials and in the case that $t_i=M_i$, condition (ii) results in $r_{i-1}r_in_i-r_i^2$ algebraically independent polynomials.
\end{proof}

Theorem \ref{detunimain} provides the deterministic condition on the sampling pattern $\Omega$ for unique completability. Using Theorem \ref{detunimain} we provide a bound on the number of samples to ensure unique completability with high probability. We first need to extended some of the lemmas in Section \ref{secprob} to obtain a condition on the number of samples to ensure condition (ii) in the statement of Theorem \ref{detunimain} holds with high probability. Note that Condition (i) is the same condition for finite completability and we already have the corresponding bound. 

In the rest of this section, for the sake of simplicity, as in Section \ref{secprob} we consider the sampled tensor $\mathcal{U} \in \mathbb{R}^{\overbrace {n \times \dots \times n}^{d}}$.

\begin{lemma}\label{extceilfloor}
Assume that $r^{\prime} \leq \frac{n}{6}$ and also each column of $\mathbf{\Omega}_{(1)}$ (first matricization of $\Omega$) includes at least $l$ nonzero entries, where 
\begin{eqnarray}\label{genminl1ceilfloor}
l > \max\left\{21 \ \log \left( \frac{n}{\epsilon} \right) + 3 \ \log \left( \frac{k}{\epsilon} \right) + 6, 2r^{\prime}\right\}. 
\end{eqnarray}
Let $\mathbf{\Omega}^{\prime}_{(1)}$ be an arbitrary set of $n -r^{\prime}+1$ columns of $\mathbf{\Omega}_{(1)}$. Then, with probability at least $1-\frac{\epsilon}{k}$, every {\bf proper} subset $\mathbf{\Omega}^{\prime \prime}_{(1)}$ of columns of $\mathbf{\Omega}^{\prime}_{(1)}$ satisfies 
\begin{eqnarray}\label{genproper1ceilfloor}
m_{1}({\Omega}^{\prime \prime}) - r^{\prime} \geq t,
\end{eqnarray}
where $t$ is the number of columns of $\mathbf{\Omega}^{\prime \prime}_{(1)}$ and $m_{1}({\Omega}^{\prime \prime})$ is the number of nonzero rows of $\mathbf{\Omega}^{\prime \prime}_{(1)}$.
\end{lemma}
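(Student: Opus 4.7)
The plan is to adapt the union-bound strategy that establishes Lemma \ref{genlem}, with modifications reflecting (a) the enlarged column count $n-r^{\prime}+1$ in place of $n-r^{\prime}$, and (b) the restriction to \emph{proper} subsets. The crucial structural observation is that the full collection of $n-r^{\prime}+1$ columns could never satisfy \eqref{genproper1ceilfloor} at $t=n-r^{\prime}+1$, since $m_1(\Omega^{\prime\prime})\leq n < r^{\prime}+(n-r^{\prime}+1)$; hence restricting to proper subsets caps $t$ at $n-r^{\prime}$, and within this range the property is achievable.

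First, I would fix any $t\in\{1,\dots,n-r^{\prime}\}$ and any specific choice of $t$ columns from $\mathbf{\Omega}^{\prime}_{(1)}$. The failure event $m_1(\Omega^{\prime\prime})<r^{\prime}+t$ means that the observations across these columns are concentrated in some set of $r^{\prime}+t-1$ rows. Under uniform independent sampling, the probability that all at-least-$l$ observations of one column fall inside a specified set of $r^{\prime}+t-1$ rows is at most $\left(\frac{r^{\prime}+t-1}{n}\right)^{l}$, and independence across columns bounds the joint event by $\left(\frac{r^{\prime}+t-1}{n}\right)^{lt}$.

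Next, I would union-bound over the at most $\binom{n-r^{\prime}+1}{t}$ subsets of $t$ columns and the at most $\binom{n}{r^{\prime}+t-1}$ row sets, yielding
\begin{eqnarray}
P(\text{failure at scale }t)\leq\binom{n-r^{\prime}+1}{t}\binom{n}{r^{\prime}+t-1}\left(\frac{r^{\prime}+t-1}{n}\right)^{lt}. \nonumber
\end{eqnarray}
Summing over $t$ and applying $\binom{n}{k}\leq(en/k)^k$ reduces the total to a geometric-type sum in $t$ with base $\left(\frac{r^{\prime}+t-1}{n}\right)$ raised to a power proportional to $l$. A case-split analogous to the one in the proof of Lemma \ref{lemman2p} (between the branch $l_{0}=2r^{\prime}$ and the branch $l_{0}=21\log(n/\epsilon)+3\log(k/\epsilon)+6$) together with $r^{\prime}\leq n/6$ would make the dominant exponent $\log(r^{\prime}+t-1)-\log n$ strictly negative with enough margin, driving each term below the tolerance and forcing the aggregate to fall under $\epsilon/k$.

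The main obstacle will be the combinatorial bookkeeping behind the coefficient jump from $9$ in Lemma \ref{genlem} to $21$ here: the extra slack is needed to absorb simultaneously the additional column in $\binom{n-r^{\prime}+1}{t}$ and the widened range of $t$ swept by the proper-subset quantifier. I expect the cleanest path to mirror Lemma \ref{lemman2p}'s derivation — bounding $\log P(\text{failure})$ by a negative multiple of $l_{0}$ minus $\log(k/\epsilon)$, then verifying that the two branches of the maximum in \eqref{genminl1ceilfloor} each certify $\log\left(\frac{e^{1/2}(r^{\prime}+t-1)}{n}\right)<0$ by a definite margin under the hypotheses $n>\max\{200,\ldots\}$ and $r^{\prime}\leq n/6$.
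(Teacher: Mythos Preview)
Your direct union-bound approach is in principle a sound way to establish the lemma, but it is far more laborious than the paper's argument, and your diagnosis of where the constant $21$ comes from is off. The paper does \emph{not} reopen the combinatorics of Lemma~\ref{genlem}; instead it reduces to that lemma in two lines. The key observation is the algebraic identity
\[
21\,\log\!\left(\tfrac{n}{\epsilon}\right)+3\,\log\!\left(\tfrac{k}{\epsilon}\right)+6
\;\geq\;
9\,\log\!\left(\tfrac{n}{\epsilon/n}\right)+3\,\log\!\left(\tfrac{k}{\epsilon/n}\right)+6,
\]
so hypothesis \eqref{genminl1ceilfloor} is precisely the hypothesis of Lemma~\ref{genlem} with $\epsilon$ replaced by $\epsilon/n$. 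One then applies Lemma~\ref{genlem} to each of the $n-r^{\prime}+1\leq n$ subcollections obtained by deleting a single column from $\mathbf{\Omega}^{\prime}_{(1)}$; each satisfies the conclusion with probability at least $1-\epsilon/(nk)$, and a union bound over these $\leq n$ subcollections gives $1-\epsilon/k$. Since every proper subset of the $n-r^{\prime}+1$ columns lies inside one of these $(n-r^{\prime})$-column subcollections, the lemma follows.

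Two remarks on your write-up. First, the range of $t$ is \emph{not} widened: proper subsets of $n-r^{\prime}+1$ columns have $t\in\{1,\dots,n-r^{\prime}\}$, exactly the same range as all subsets of $n-r^{\prime}$ columns in Lemma~\ref{genlem}. Second, the jump $9\to 21$ has nothing to do with the extra column in $\binom{n-r^{\prime}+1}{t}$ (which contributes only a lower-order factor in a direct argument); it is entirely an artifact of the $\epsilon\to\epsilon/n$ substitution needed for the paper's outer union bound over $\leq n$ drop-one-column subcollections. Your from-scratch computation would actually go through with a constant close to $9$, not $21$, so the slack you are planning to spend on ``absorbing the additional column'' is not where the difficulty lies. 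Finally, note that Lemma~\ref{extceilfloor} carries no hypothesis of the form $n>200$; that assumption belongs to Lemma~\ref{lemman2p}, which addresses a different event.
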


\begin{proof}
Note that \eqref{genminl1ceilfloor} results the following
\begin{eqnarray}\label{genminl1ceilfloorreform}
l > \max\left\{9 \ \log \left( \frac{n}{\frac{\epsilon}{n}} \right) + 3 \ \log \left( \frac{k}{\frac{\epsilon}{n}} \right) + 6, 2r^{\prime}\right\}. 
\end{eqnarray}
Consider $n -r^{\prime}$ columns of $\mathbf{\Omega}^{\prime}_{(1)}$. According to Lemma \ref{genlem}, with probability at least $1-\frac{\epsilon}{nk}$, any subset of columns $\mathbf{\Omega}^{\prime \prime}_{(1)}$ of these $n -r^{\prime}$ particular columns of $\mathbf{\Omega}^{\prime}_{(1)}$ satisfies \eqref{genproper1ceilfloor}. Since there are $n$ possible subsets of columns of $\mathbf{\Omega}^{\prime}_{(1)}$ with $n -r^{\prime}$ columns, with probability at least $1-\frac{\epsilon}{k}$, every {\bf proper} subset $\mathbf{\Omega}^{\prime \prime}_{(1)}$ of columns of $\mathbf{\Omega}^{\prime}_{(1)}$ satisfies \eqref{genproper1ceilfloor}.
\end{proof}

\begin{lemma}\label{lemman2pceil}
Assume that $r^{\prime} \leq \frac{n}{6}$ and also let $j \in \{1,2,\dots,d-1\}$ be a fixed number. Consider an arbitrary set $\widetilde{\mathbf{\Omega}}^{\prime}_{(j)}$ of $n -r^{\prime}$ columns of $\widetilde{\mathbf{\Omega}}_{(j)}$ ($j$-th unfolding of $\Omega$). Assume that $n > \max \{400  , \sum_{k=1}^{d-1} r_{k-1}r_{k} \}$, and also each column of $\widetilde{\mathbf{\Omega}}_{(j)}$ includes at least $l$ nonzero entries, where 
\begin{eqnarray}\label{minlforset2pnceil}
l > \max\left\{63 \ \log \left( \frac{n}{\epsilon} \right) + 9 \ \log \left( \frac{2r}{\epsilon} \right) + 18, 6r^{\prime}\right\},
\end{eqnarray}
where $r \leq \sum_{k=1}^{d-1} r_{k-1}r_{k}$ (recall that $r_0=r_d=1$). Then, with probability at least $1-\frac{\epsilon}{2r}$, each column of $\widetilde{\mathbf{\Omega}}^{\prime}_{(j)}$ includes more than $\max\left\{21 \ \log \left( \frac{n}{\epsilon} \right) + 3 \ \log \left( \frac{2r}{\epsilon} \right) + 6, 2r^{\prime}\right\}$ observed entries of $\Omega$ with different values of the $i$-th coordinate, $1 \leq i \leq j$.
\end{lemma}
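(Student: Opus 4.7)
The plan is to follow the proof of Lemma \ref{lemman2p} line-for-line, tracking how the stronger conclusion (more distinct $i$-th coordinates) propagates into the constants. Let $l_0 \triangleq \max\{21\log(n/\epsilon)+3\log(2r/\epsilon)+6,\,2r'\}$; the hypothesis on $l$ is exactly $l > 3l_0$, so each column of $\widetilde{\mathbf{\Omega}}_{(j)}$ contains strictly more than $3l_0$ observed entries. For a fixed column $s$ of $\widetilde{\mathbf{\Omega}}^{\prime}_{(j)}$, let $\zeta_s$ be the event that its observed entries take at most $l_0$ distinct values in the $i$-th coordinate; the event $\zeta$ that some column of $\widetilde{\mathbf{\Omega}}^{\prime}_{(j)}$ fails the desired conclusion satisfies $P(\zeta)\le (n-r')P(\zeta_1)$ by a union bound.

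If $\zeta_s$ occurs, then all of the more than $3l_0$ observed entries of the column project into some set $S$ of $l_0$ values in the $i$-th coordinate, so a union bound over the choice of $S$ and the uniformity of sampling gives
\begin{equation*}
P(\zeta_s)\le \binom{n}{l_0}\bigl(l_0/n\bigr)^{3l_0}\le \bigl(ne/l_0\bigr)^{l_0}\bigl(l_0/n\bigr)^{3l_0}=\bigl(e^{1/2}l_0/n\bigr)^{2l_0}.
\end{equation*}
This is exactly the bound appearing in the proof of Lemma \ref{lemman2p}; only the size of $l_0$ has changed. Taking logarithms and combining with the union bound over the $n-r'$ columns,
\begin{equation*}
\log P(\zeta)\le \log n + 2l_0\bigl(1/2+\log(l_0)-\log(n)\bigr).
\end{equation*}

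From the definition of $l_0$ we have $l_0\ge 21\log(n/\epsilon)\ge 21\log n$, so $\log n \le l_0/21$, and the bound can be rewritten as
\begin{equation*}
\log P(\zeta)\le 2l_0\Bigl(\tfrac{29}{42}+\log(l_0)-\log(n)\Bigr)-\tfrac{l_0}{3},
\end{equation*}
in perfect analogy with the $13/18$ computation in Lemma \ref{lemman2p}. The residual term $-l_0/3$ satisfies $-l_0/3\le -\log(2r/\epsilon)-2<\log(\epsilon/(2r))$ directly from $l_0\ge 3\log(2r/\epsilon)+6$ together with $\log(n/\epsilon)>0$. It remains to show that the bracketed coefficient $29/42+\log(l_0)-\log n$ is non-positive, by the same case split used in Lemma \ref{lemman2p}: in the case $l_0=2r'$ this follows from $r'\le n/6$, and in the case $l_0=21\log(n/\epsilon)+3\log(2r/\epsilon)+6$ it follows from $r<n$ and the strengthened assumption $n>\max\{400,\sum_k r_{k-1}r_k\}$.

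The main (and only) place care is needed is the second case of this verification: the coefficient of $\log(n/\epsilon)$ inside $l_0$ has jumped from $9$ to $21$, so the bound $l_0\le n e^{-29/42}\approx 0.504\,n$ requires a slightly better relationship between $n$ and $\log n$ than in Lemma \ref{lemman2p}. This is precisely why the hypothesis $n>200$ of Lemma \ref{lemman2p} is replaced by $n>400$ here; the rest of the argument is purely mechanical constant-tracking and presents no additional obstacle.
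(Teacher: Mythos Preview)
Your proposal is correct and follows essentially the same approach as the paper, which simply states that the proof is identical to that of Lemma~\ref{lemman2p} except that the calculation of $P(\zeta)$ in case~(ii) now requires $n>400$ rather than $n>200$. You have carried out precisely this constant-tracking in detail, arriving at the coefficient $29/42$ in place of $13/18$ and correctly identifying the second case of the bracket verification as the step that forces the strengthened hypothesis on $n$.
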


\begin{proof}
The proof is similar to the proof of Lemma \ref{lemman2p} and the only difference is in the calculations of $P(\zeta)$, where for this lemma $n > 400 $ is needed instead of $ n > 200$.
\end{proof}

\begin{lemma}\label{lemman2ceil}
Let $j \in \{1,2,\dots,d-1\}$ be a fixed number. Assume that $r_{i}^{\prime} \leq \frac{n}{6}$, where $r_i^{\prime}$ is rational and non-integer and also $i \in \{1,\dots,j\}$. Consider an arbitrary set $\widetilde{\mathbf{\Omega}}^{\prime}_{(j)}$ of $n - \lfloor r_i^{\prime} \rfloor$ columns of $\widetilde{\mathbf{\Omega}}_{(j)}$. Assume that $n > \max \{ 400 , \sum_{k=1}^{d-1} r_{k-1}r_{k} \}$, and also each column of $\widetilde{\mathbf{\Omega}}_{(j)}$ includes at least $l$ nonzero entries, where 
\begin{eqnarray}\label{minlforset2pneceil}
l > \max\left\{63 \ \log \left( \frac{n}{\epsilon} \right) + 9 \ \log \left( \frac{2r}{\epsilon} \right) + 18, 6 \lceil r_i^{\prime} \rceil \right\},
\end{eqnarray}
where $r \leq \sum_{k=1}^{d-1} r_{k-1}r_{k}$ (recall that $r_0=r_d=1$). Then, with probability  at least $1-\frac{\epsilon}{r}$, every {\bf proper} subset $\widetilde{\mathbf{\Omega}}^{\prime \prime}_{(j)}$ of columns of $\widetilde{\mathbf{\Omega}}^{\prime}_{(j)}$ satisfies 
\begin{eqnarray}\label{proper2ceil}
m_{i}({\Omega}^{\prime \prime}) - \lceil r_i^{\prime} \rceil \geq t,
\end{eqnarray}
where $t$ is the number of columns of $\widetilde{\mathbf{\Omega}}^{\prime \prime}_{(j)}$ and $\Omega^{\prime \prime}$ is the corresponding tensor such that $\widetilde{\mathbf{\Omega}}^{\prime \prime}_{(j)}$ is the $j$-th unfolding of $\Omega^{\prime \prime}$.
\end{lemma}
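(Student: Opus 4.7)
The plan is to prove Lemma~\ref{lemman2ceil} in exact parallel to the proof of Lemma~\ref{lemman2}, substituting Lemma~\ref{lemman2pceil} for Lemma~\ref{lemman2p} and Lemma~\ref{extceilfloor} for Lemma~\ref{genlem}. The new hypothesis that $r_i^{\prime}$ is rational and non-integer is the only qualitatively new ingredient, and it is used to make the column counts in the two matrix/tensor lemmas line up.

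First I would note that, because $r_i^{\prime}$ is non-integer, $\lceil r_i^{\prime} \rceil = \lfloor r_i^{\prime} \rfloor + 1$, and hence the set $\widetilde{\mathbf{\Omega}}^{\prime}_{(j)}$ of $n - \lfloor r_i^{\prime} \rfloor$ columns is exactly a set of $n - \lceil r_i^{\prime} \rceil + 1$ columns. This matches the column-count hypothesis of Lemma~\ref{extceilfloor} after setting its parameter $r^{\prime}$ to $\lceil r_i^{\prime} \rceil$. Next, I would apply Lemma~\ref{lemman2pceil} with $r^{\prime} = \lceil r_i^{\prime} \rceil$; the assumption \eqref{minlforset2pneceil} becomes exactly \eqref{minlforset2pnceil} under this substitution, since the clause $6 r^{\prime}$ there becomes $6 \lceil r_i^{\prime} \rceil$. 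This yields, with probability at least $1 - \epsilon/(2r)$, that every column of $\widetilde{\mathbf{\Omega}}^{\prime}_{(j)}$ contains more than $\max\{21 \log(n/\epsilon) + 3 \log(2r/\epsilon) + 6,\ 2 \lceil r_i^{\prime} \rceil\}$ observed entries whose $i$-th coordinates are pairwise distinct; equivalently, the $i$-th matricization of the tensor $\Omega^{\prime}$ associated with $\widetilde{\mathbf{\Omega}}^{\prime}_{(j)}$ has, per column of the $j$-th unfolding, at least that many nonzero rows.

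Conditioning on this event, I would then invoke Lemma~\ref{extceilfloor} with $r^{\prime} = \lceil r_i^{\prime} \rceil$ and $k = 2r$; the per-column nonzero lower bound just obtained is exactly the hypothesis \eqref{genminl1ceilfloor} under this substitution. Its conclusion is that, with conditional probability at least $1 - \epsilon/(2r)$, every proper subset $\widetilde{\mathbf{\Omega}}^{\prime \prime}_{(j)}$ of columns of $\widetilde{\mathbf{\Omega}}^{\prime}_{(j)}$ satisfies $m_i(\Omega^{\prime \prime}) - \lceil r_i^{\prime} \rceil \geq t$, which is exactly \eqref{proper2ceil}. A product-bound over the two events yields total success probability at least $(1 - \epsilon/(2r))^2 \geq 1 - \epsilon/r$, completing the proof.

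The only delicate point will be the bookkeeping that translates the tensor-level statement produced by Lemma~\ref{lemman2pceil} (existence of many observed entries with pairwise distinct $i$-th coordinate per column of the $j$-th unfolding) into the matrix-level input required by Lemma~\ref{extceilfloor} (each column of a matrix having enough nonzero entries, so that distinct $i$-th coordinates correspond to distinct nonzero rows of the $i$-th matricization). This is the same bookkeeping carried out inside the proof of Lemma~\ref{lemman2}, so no new obstacle is anticipated; the non-integer hypothesis on $r_i^{\prime}$ is precisely what makes the passage from $n - \lfloor r_i^{\prime} \rfloor$ columns to the $n - \lceil r_i^{\prime} \rceil + 1$ columns form required by Lemma~\ref{extceilfloor} hold with no loss.
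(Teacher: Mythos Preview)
Your proposal is correct and follows essentially the same approach as the paper: apply Lemma~\ref{lemman2pceil} with $r'=\lceil r_i'\rceil$ to obtain the per-column lower bound on entries with distinct $i$-th coordinates, then feed this into Lemma~\ref{extceilfloor} (using $\lceil r_i'\rceil=\lfloor r_i'\rfloor+1$ to match the column count $n-\lfloor r_i'\rfloor=n-\lceil r_i'\rceil+1$), and combine the two success probabilities via $(1-\epsilon/(2r))^2\geq 1-\epsilon/r$. Your write-up is, if anything, more explicit than the paper's about the bookkeeping that links the tensor-level and matrix-level statements.
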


\begin{proof}
Each column of $\widetilde{\mathbf{\Omega}}_{(j)}$ includes $n^j$ entries and they can be represented by $(x_1,\dots,x_j)$ for $1 \leq x_k \leq n$ and $1 \leq k \leq j$, where $x_k$ denotes the $k$-th coordinate of the corresponding entry. According to Lemma \ref{lemman2pceil}, with probability  at least $1-\frac{\epsilon}{2r}$, each column of $\widetilde{\mathbf{\Omega}}_{(j)}$ includes more than $\max\left\{21 \ \log \left( \frac{n}{\epsilon} \right) + 3 \ \log \left( \frac{2r}{\epsilon} \right) + 6, 2 \lceil r_i^{\prime} \rceil \right\}$ observed entries with different values of the $i$-th coordinate. Therefore, as $\lceil r_i^{\prime} \rceil = \lfloor r_i^{\prime} \rfloor +1$ and according to Lemma \ref{extceilfloor}, with probability  at least $(1-\frac{\epsilon}{2r})^2$ which is more than $1-\frac{\epsilon}{r}$, every {\bf proper} subset $\widetilde{\mathbf{\Omega}}^{\prime \prime}_{(j)}$ of columns of $\widetilde{\mathbf{\Omega}}^{\prime}_{(j)}$ satisfies \eqref{proper2ceil}.
\end{proof}

\begin{theorem}\label{mainthsamuni}
Define $m= \sum_{k=1}^{d-2} r_{k-1}r_{k}$, $M = n \sum_{k=1}^{d-2} r_{k-1}r_k -\sum_{k=1}^{d-2} r_k^2$ and $r^{\prime}= \max \left\{  \frac{r_1}{r_0}  , \dots,  \frac{r_{d-2}}{r_{d-3}} \right\}$. Assume that $n > \max\{m+ d,400\} $ and $r^{\prime} \leq \min\{\frac{n}{6},  r_{d-2}\}$ hold. Moreover, assume that each column of $\widetilde{\mathbf{\Omega}}_{(d-2)}$ includes at least $l$ nonzero entries, where 
\begin{eqnarray}\label{minlforset3thuni}
l > \max\left\{63 \ \log \left( \frac{4n}{\epsilon} \right) + 9 \ \log \left( \frac{8M}{\epsilon} \right) + 18, 6r_{d-2}\right\}. 
\end{eqnarray}
Then, with probability at least $1-\epsilon$, for almost every $\mathcal{U} \in \mathbb{R}^{\overbrace {n \times \dots \times n}^{d}}$, there exists only one completion of the sampled tensor $\mathcal{U}$ with rank vector $(r_1,r_2,\dots,r_{d-1})$.
\end{theorem}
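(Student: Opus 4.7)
The plan is to mimic the reduction used in Theorem \ref{mainthsam}: merge the last two dimensions to form a $(d-1)$-way tensor $\mathcal{U}^{\prime} \in \mathbb{R}^{\overbrace{n \times \dots \times n}^{d-2} \times n^2}$ of TT rank $(r_1,\dots,r_{d-2})$, and verify both deterministic conditions (i) and (ii) of Theorem \ref{detunimain} for $\mathcal{U}^{\prime}$ with high probability. Unique completability of $\mathcal{U}^{\prime}$ with this smaller rank vector implies unique completability of $\mathcal{U}$ with the full rank vector, because any completion of $\mathcal{U}$ with rank $(r_1,\dots,r_{d-1})$ gives a completion of $\mathcal{U}^{\prime}$ with rank $(r_1,\dots,r_{d-2})$.

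For condition (i), I would simply invoke Theorem \ref{mainthsam}: since the threshold in \eqref{minlforset3thuni} is strictly larger than the one in \eqref{minlforset3th}, there exists, with probability at least $1-\epsilon/2$ say, a subtensor $\breve{\Omega}^{\prime}$ of the constraint tensor with $M = n\sum_{k=1}^{d-2} r_{k-1}r_k - \sum_{k=1}^{d-2} r_k^2$ columns of $\widetilde{\mathbf{\breve{\Omega}}}_{(d-2)}$ satisfying \eqref{ineqpunicon1}. The assumption $n > m + d$ (rather than just $n > m$) leaves enough remaining columns of $\widetilde{\mathbf{\Omega}}_{(d-2)}$ to carve out the disjoint subtensors needed for part (ii).

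For condition (ii), for each $i \in \{1,\dots,d-2\}$ I would select $M_i = n - \lfloor r_i/r_{i-1} \rfloor$ fresh columns of $\widetilde{\mathbf{\Omega}}_{(d-2)}$, disjoint from those used for $\breve{\Omega}^{\prime}$ and from each other, and call the resulting subtensor $\Omega^{\prime^i}$. If $r_i/r_{i-1}$ is a non-integer, apply Lemma \ref{lemman2ceil} with $r_i^{\prime} = r_i/r_{i-1}$ and $j = d-2$: with probability at least $1 - \epsilon/(2r)$ per index $i$, every proper subset of these columns satisfies $m_i(\Omega^{\prime\prime^i}) - \lceil r_i/r_{i-1} \rceil \ge t_i$, which after multiplying by $r_{i-1}$ and noting $r_{i-1}\lceil r_i/r_{i-1}\rceil \ge r_i$ yields \eqref{ineqpunicon2} for $t_i<M_i$; if $r_i/r_{i-1}$ is an integer, the analogous bound from Lemma \ref{lemman2} suffices. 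The $t_i = M_i$ case, where $(t_i - M_i + 1)^+ = 1$, requires $m_i(\Omega^{\prime\prime^i}) \ge n - \lfloor r_i/r_{i-1} \rfloor$; this follows because using the full set of $M_i$ columns forces almost every row-index along coordinate $i$ to appear, and I would handle it as a boundary case by absorbing it into the ``proper subset'' formulation of Lemma \ref{extceilfloor}/\ref{lemman2ceil} (which is exactly why the proper-subset variant was proved). Then lift each $\Omega^{\prime^i}$ to a corresponding subtensor $\breve{\Omega}^{\prime^i}$ of the constraint tensor via Lemma \ref{Omega} exactly as in the proof of Lemma \ref{lemman3}.

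The main obstacle I anticipate is the $t_i = M_i$ boundary case, where the deterministic inequality \eqref{ineqpunicon2} degenerates and the usual ``algebraic independence'' counting argument breaks down; this is precisely why Lemma \ref{lemman2ceil} was stated for proper subsets of $n - \lfloor r_i^{\prime} \rfloor$ columns rather than for $n - \lceil r_i^{\prime} \rceil$ columns as in the finite-completability version. A secondary nuisance is bookkeeping the union bound: combining the failure events for Theorem \ref{mainthsam} and for the $d-2$ instances of Lemma \ref{lemman2ceil} gives a total failure probability bounded by $\epsilon/2 + \sum_{i=1}^{d-2} \epsilon r_{i-1}r_i/(2M) \le \epsilon$, using that the constants in \eqref{minlforset3thuni} were inflated (e.g., $\epsilon \to \epsilon/4$ and $2M \to 8M$) precisely to absorb these multiple applications. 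Putting the two parts together and invoking Theorem \ref{detunimain} then completes the argument.
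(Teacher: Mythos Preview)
Your proposal is correct and follows essentially the same route as the paper: invoke Theorem \ref{mainthsam} (with the inflated constants in \eqref{minlforset3thuni}) to secure condition (i) of Theorem \ref{detunimain}, then apply Lemma \ref{lemman2ceil} once for each $i\in\{1,\dots,d-2\}$ to secure condition (ii), and finish with a union bound. The paper's own proof is terser---it simply states that condition (i) holds with probability at least $1-\epsilon/4$ and, using $M>d$, that each instance of condition (ii) holds with probability at least $1-\epsilon/(2d)$---but your more explicit treatment of the $t_i=M_i$ boundary case, the integer vs.\ non-integer $r_i/r_{i-1}$ distinction, and the lifting via Lemma \ref{Omega} is exactly the content that the paper leaves implicit.
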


\begin{proof}
According to Theorem \ref{mainthsam}, with probability at least $1 - \frac{\epsilon}{4}$, condition (i) in the statement of Theorem \ref{detunimain} holds. Moreover, as $M > d$ and according to Lemma \ref{lemman2ceil}, with probability at least $1- \frac{\epsilon}{2d}$, condition (ii) holds for each $i$. Therefore, with probability at least $1-\epsilon$, conditions (i) and (ii) in the statement of Theorem \ref{detunimain} hold.
\end{proof}

\begin{remark}\label{remsamu}
A tensor $\mathcal{U}$ that satisfies the properties in the statement of Theorem \ref{mainthsamuni} requires 
\begin{eqnarray}\label{rremsamu}
n^2 \max\left\{63 \ \log \left( \frac{4n}{\epsilon} \right) + 9 \ \log \left( \frac{8M}{\epsilon} \right) + 18, 6r_{d-2}\right\}
\end{eqnarray}
samples to be uniquely completable with probability at least $1-\epsilon$ since $\widetilde{\mathbf{\Omega}}_{(d-2)}$ has $n^2$ columns, where $M = n \sum_{k=1}^{d-2} r_{k-1}r_k -\sum_{k=1}^{d-2} r_k^2$. Note that the number of samples given in Theorem $3$ of \cite{charact} results in both finite and unique completability, and therefore the number of samples required by the unfolding approach given  in Remark \ref{remsam0} is for both finite and unique completability.
\end{remark}

\begin{lemma}\label{probsamTTuniq}
Define $m= \sum_{k=1}^{d-2} r_{k-1}r_{k}$, $M = n \sum_{k=1}^{d-2} r_{k-1}r_k -\sum_{k=1}^{d-2} r_k^2$ and $r^{\prime}= \max \left\{  \frac{r_1}{r_0}  , \dots,  \frac{r_{d-2}}{r_{d-3}} \right\}$. Assume that $n > \max\{m+d,400\} $ and $r^{\prime} \leq \min\{\frac{n}{6},  r_{d-2}\}$ hold. Moreover, assume that the sampling probability satisfies
\begin{eqnarray}\label{probsampbounduniq}
p > \frac{1}{n^{d-2}} \max\left\{63 \ \log \left( \frac{4n}{\epsilon} \right) + 9 \ \log \left( \frac{8M}{\epsilon} \right) + 18, 6r_{d-2}\right\} + \frac{1}{\sqrt[4]{n^{d-2}}}
\end{eqnarray}
Then, with probability at least $ (1- \epsilon) \left( 1-\exp(-\frac{\sqrt{n^{d-2}}}{2}) \right)^{n^2}$, $\mathcal{U}$ is uniquely completable.
\end{lemma}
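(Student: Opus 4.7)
The plan is to mirror the proof of Lemma \ref{probsamTTfin} almost verbatim, with the only substantive change being that the deterministic completability statement invoked at the end is Theorem \ref{mainthsamuni} (for uniqueness) instead of Theorem \ref{mainthsam} (for finiteness). The hypotheses of Lemma \ref{probsamTTuniq} have been strengthened accordingly: the assumption $n > \max\{m+d,400\}$ matches the hypothesis of Theorem \ref{mainthsamuni}, and the sampling bound \eqref{probsampbounduniq} contains exactly the threshold $\max\left\{63 \ \log \left( \frac{4n}{\epsilon} \right) + 9 \ \log \left( \frac{8M}{\epsilon} \right) + 18, 6r_{d-2}\right\}$ appearing in \eqref{minlforset3thuni}.

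First I would reduce the probabilistic sampling model to a per-column occupancy statement. Each column of $\widetilde{\mathbf{\Omega}}_{(d-2)}$ has $n^{d-2}$ entries, each observed independently with probability $p$. Set
\begin{equation*}
l^\ast \;\triangleq\; \max\left\{63 \ \log \left( \tfrac{4n}{\epsilon} \right) + 9 \ \log \left( \tfrac{8M}{\epsilon} \right) + 18,\; 6r_{d-2}\right\}.
\end{equation*}
The hypothesis \eqref{probsampbounduniq} is exactly $p > l^\ast / n^{d-2} + 1/\sqrt[4]{n^{d-2}}$, which is the form required by Lemma \ref{azumares} applied to a length-$n^{d-2}$ vector with $k=l^\ast$. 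Therefore Lemma \ref{azumares} yields that a fixed column contains more than $l^\ast$ observed entries with probability at least $1-\exp(-\sqrt{n^{d-2}}/2)$.

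Second I would take the intersection of these events over all $n^2$ columns of $\widetilde{\mathbf{\Omega}}_{(d-2)}$. By independence of the sampling across entries (and hence across disjoint columns), every column satisfies the per-column lower bound \eqref{minlforset3thuni} simultaneously with probability at least $\left(1-\exp(-\sqrt{n^{d-2}}/2)\right)^{n^2}$. On this event the deterministic hypothesis of Theorem \ref{mainthsamuni} is met, so that theorem gives unique completability of $\mathcal{U}$ with conditional probability at least $1-\epsilon$. Multiplying the two probabilities (formally: combining the unconditional probability of the per-column event with the conditional probability from Theorem \ref{mainthsamuni}) produces the stated bound $(1-\epsilon)\left(1-\exp(-\sqrt{n^{d-2}}/2)\right)^{n^2}$.

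No step is genuinely difficult here; the lemma is a packaging result that translates the deterministic per-column sample count in Theorem \ref{mainthsamuni} into a probabilistic sampling-rate condition via Lemma \ref{azumares}. The only subtlety worth being careful about is the independence bookkeeping when taking the product over $n^2$ columns and when combining with the event of Theorem \ref{mainthsamuni}: the per-column counting events depend only on the sampling pattern $\Omega$, while the conclusion of Theorem \ref{mainthsamuni} is a statement holding for almost every $\mathcal{U}$ on the event that a suitable sampling pattern is realized, so the two bounds compose multiplicatively exactly as in the proof of Lemma \ref{probsamTTfin}.
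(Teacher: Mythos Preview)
Your proposal is correct and follows precisely the approach the paper takes: the paper's own proof simply says ``Using Theorem \ref{mainthsamuni}, the proof is similar to the proof of Lemma \ref{probsamTTfin},'' and you have spelled out exactly that argument. There is nothing to add.
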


\begin{proof}
Using Theorem \ref{mainthsamuni}, the proof is similar to the proof of Lemma \ref{probsamTTfin}.
\end{proof}

\section{Numerical Comparisons}\label{secsimu}

In this section, we compute the total number of samples that is required for finiteness/uniqueness using an example to compare the unfolding approach and the TT approach. In this numerical example, we consider a $7$-way tensor $\mathcal{U} $ ($d=7$) such that each dimension size is $n=10^3$. We also consider the TT rank $(r_1,r_2, \dots ,r_6)=(r,2r,3r,3r,2r,r)$ and $(r_1,r_2, \dots ,r_6)=(r,r^2,r^3,r^3,r^2,r)$ in Figures \ref{fig1} and \ref{fig2}, respectively. Figures \ref{fig1} and \ref{fig2} plot the bounds given in Remark \ref{remsam0} (unfolding approach for either finite or unique completability), Remark \ref{remsam} (TT approach for finite completability), and Remark \ref{remsamu} (TT approach for unique completability) for the corresponding rank vector, where  $\epsilon = 0.001$. We change the value of $r$ from $1$ to $80$ which is denoted by ``rank" in Figure \ref{fig1} and from $1$ to $20$ in Figure \ref{fig2}. It is seen that the number of samples required by the proposed TT approach is substantially lower than that is required by the unfolding approach.  

\begin{figure}
	\centering
		{\includegraphics[width=11cm]{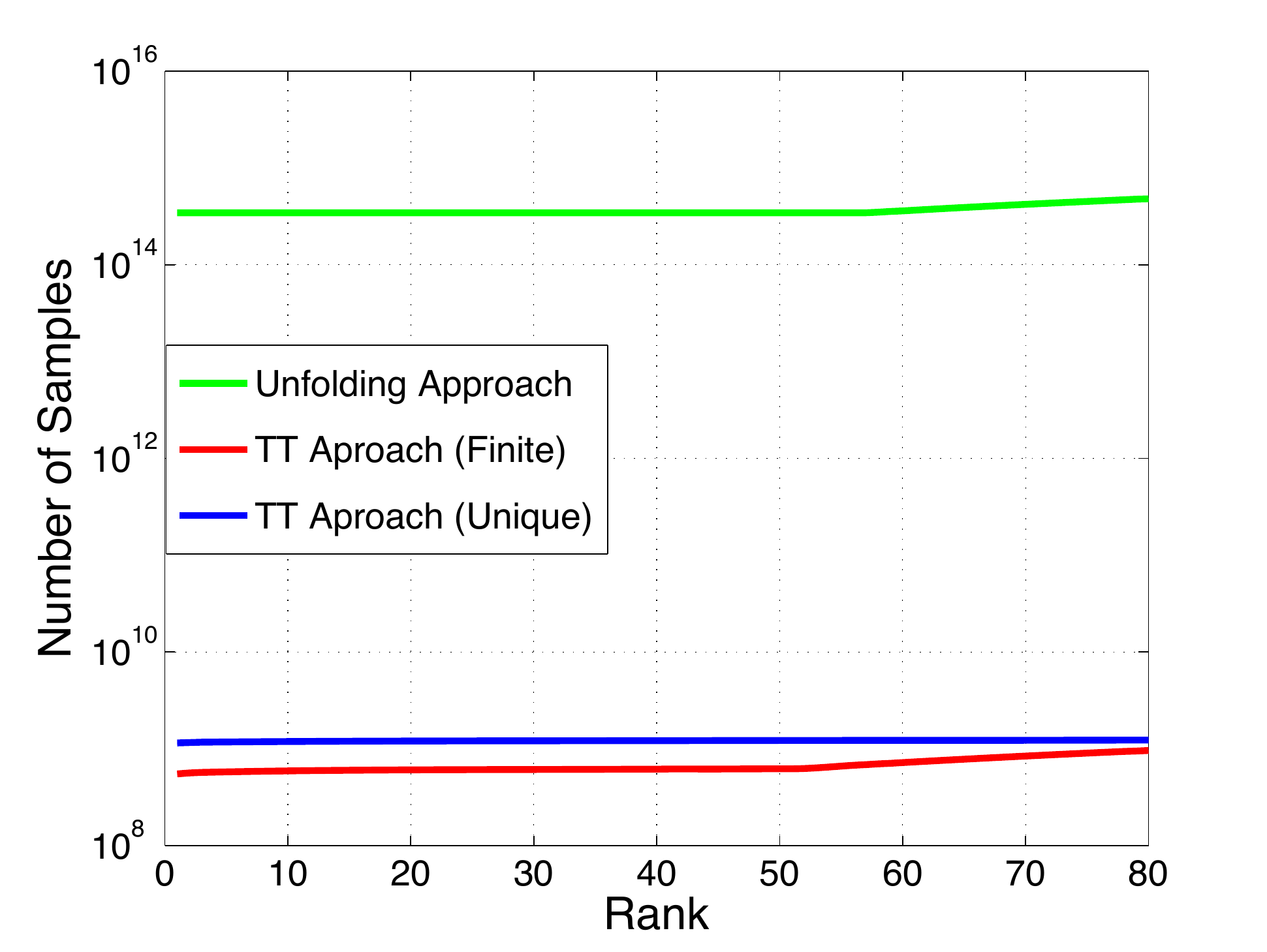}}
	\caption{ Lower bounds on the number of samples for a $7$-way tensor with rank vector $(r,2r,3r,3r,2r,r)$.}
	\label{fig1}\vspace{-4mm}
\end{figure}

\begin{figure}
	\centering
		{\includegraphics[width=11cm]{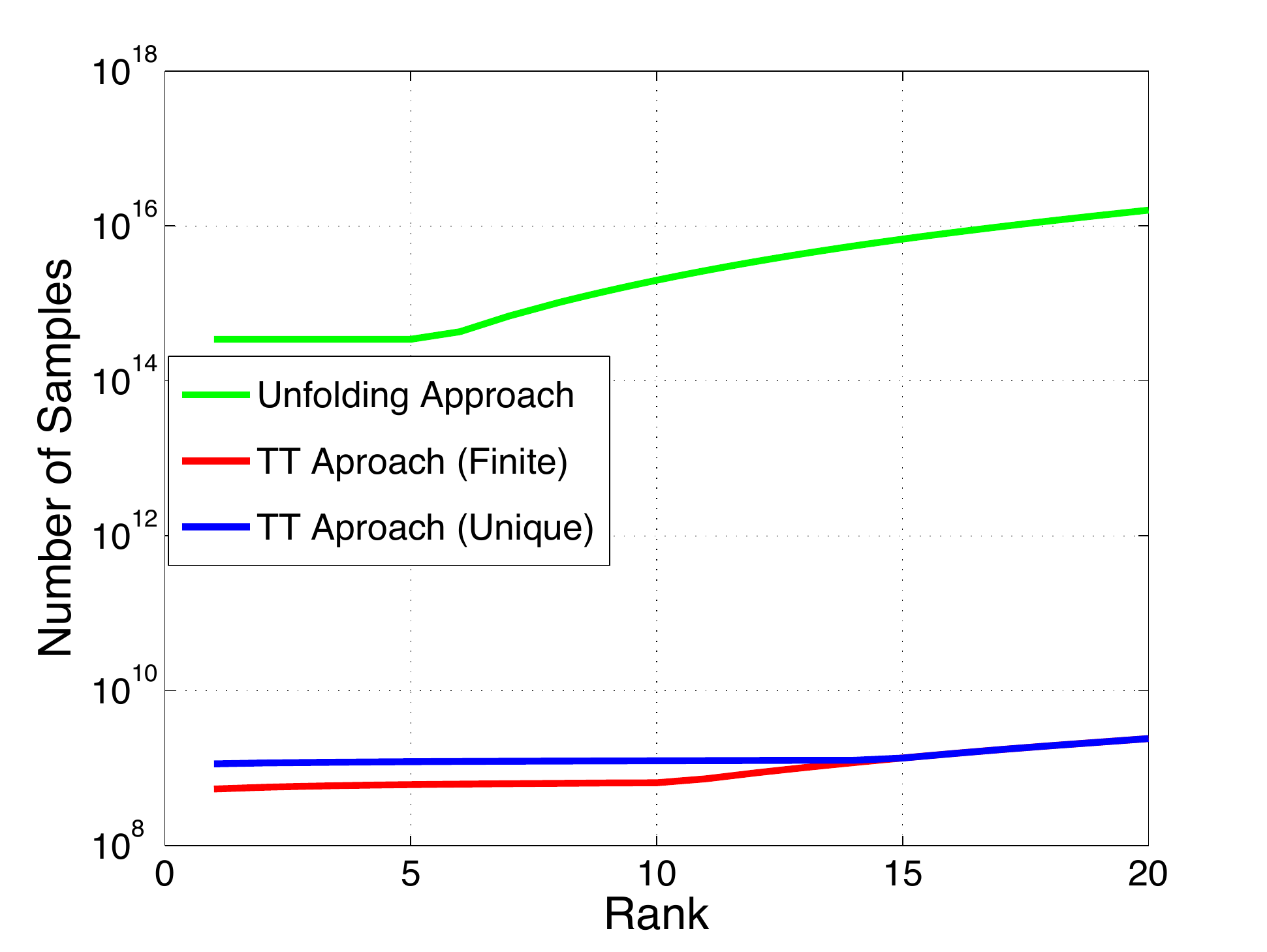}}
	\caption{ Lower bounds on the number of samples for a $7$-way tensor with rank vector $(r,r^2,r^3,r^3,r^2,r)$.}
	\label{fig2}\vspace{-4mm}
\end{figure}

\section{Conclusions}\label{seccon}

This paper characterizes fundamental conditions on the sampling pattern for finite completability of a low TT rank and partially sampled tensor through a new algebraic geometry analysis on the TT manifold. We defined a polynomial based on each sampled entry and exploited the structure of the TT decomposition to study the algebraic independence of these polynomials  based on the locations of the  samples. We also developed a geometric pattern on the TT decomposition, which can be treated as an equivalence class that partitions all TT decompositions of one particular tensor to different classes. This equivalence class is helpful to study the algebraic independence of the defined polynomials. Using the developed tools on the TT manifold, we characterized the maximum number of algebraically independent polynomials among all the defined polynomials in terms of a simple geometric structure of the sampling pattern. Our analysis results in the following fundamental conditions for low-TT-rank tensor completion: (i) The necessary and sufficient deterministic  conditions on the sampling pattern, under which there are only finite completions given the TT rank, (ii) Deterministic  sufficient conditions on the sampling pattern, under which there exists exactly one completion given the TT rank, (iii) Lower bounds on the number of samples that leads to finite/unique completability with high probability.

\bibliographystyle{IEEETran}
\bibliography{bib}

\end{document}